\def \S {\mathbf{S}}
\def \A {\mathcal{A}}
\def \O {\mathcal{O}}
\def \R {\mathbb{R}}
\def \w {\mathbf{w}}
\def \v {\mathbf{v}}
\def \t {\mathbf{t}}
\def \x {\mathbf{x}}
\def \E {\mathrm{E}}
\def \x {\mathbf{x}}
\def \p {\mathbf{p}}
\def \a {\mathbf{a}}
\def \b {\mathbf{b}}
\def \e {\mathbf{e}}
\def \d {\mathbf{d}}
\def \1 {\mathbf{1}}
\def \z {\mathbf{z}}
\def \s {\mathbf{s}}
\def \u {\mathbf{u}}
\def \B {\mathcalB}
\def \E {\mathrm{E}}
\def \x {\mathbf{x}}
\def \D {\mathcal{D}}
\def \z {\mathbf{z}}
\def \u {\mathbf{u}}
\def \w {\mathbf{w}}
\def \s {\mathbf{s}}
\def \R {\mathbb{R}}
\def \S {\mathcal{S}}
\def \W {\mathcal{W}}
\def \A {\mathcal{A}}
\def \q {\mathbf{q}}
\def \v {\mathbf{v}}
\def \d {\mathbf{d}}
\def \p {\mathbf{p}}
\def \q {\mathbf{q}}
\def \a {\mathbf{a}}
\def \b {\mathbf{b}}
\def \B {\mathcal{B}}
\def \s {\mathbf{s}}
\def \t {\mathbf{t}}
\def \T {\mathcal{T}}
\def \E {\mathbb{E}}
\def \bftau {\boldsymbol{\tau}}
\newtheorem{thm}{Theorem}
\newtheorem{ass}{Assumption}
\newtheorem{lemma}{Lemma}
\newtheorem{definition}{Definition}
\DeclareMathOperator*{\argmin}{arg\,min}
\setlist[itemize]{noitemsep}
\setlist[itemize]{nolistsep}
\icmltitlerunning{}
\begin{document}

\twocolumn[
\icmltitle{Not All Semantics are Created Equal: Contrastive Self-supervised Learning with Automatic Temperature Individualization}



\begin{icmlauthorlist}
\icmlauthor{Zi-Hao Qiu}{nju}
\icmlauthor{Quanqi Hu}{tamu}
\icmlauthor{Zhuoning Yuan}{ui}
\icmlauthor{Denny Zhou}{google}
\icmlauthor{Lijun Zhang}{nju}
\icmlauthor{Tianbao Yang}{tamu}
\end{icmlauthorlist}

\icmlaffiliation{nju}{National Key Laboratory for Novel Software Technology, Nanjing University, Nanjing, China}
\icmlaffiliation{tamu}{Computer Science and Engineering, Texas A\&M University, College Station, USA}
\icmlaffiliation{ui}{Department of Computer Science, the University of Iowa, Iowa City, USA}
\icmlaffiliation{google}{Google Research, USA}

\icmlcorrespondingauthor{Tianbao Yang}{tianbao-yang@tamu.edu}

\icmlkeywords{Machine Learning, ICML}

\vskip 0.3in
]



\printAffiliationsAndNotice{Most work of Z.H. Qiu was done when visiting the OptMAI lab at TAMU. }  

\begin{abstract}
In this paper, we aim to optimize a contrastive loss with \emph{individualized} temperatures in a principled and systematic
manner for self-supervised learning. The common practice of using a \emph{global} temperature parameter $\tau$ ignores the fact that ``not all semantics are created equal", meaning that different anchor data may have different numbers of samples with similar semantics, especially when data exhibits long-tails. First, we propose a new {\it robust contrastive loss} inspired by distributionally robust optimization (DRO), providing us an intuition about the effect of $\tau$ and a mechanism for automatic temperature individualization. Then, we propose an efficient stochastic algorithm for optimizing the robust contrastive loss with a provable convergence guarantee without using large mini-batch sizes. Theoretical and experimental results show that our algorithm automatically learns a suitable $\tau$ for each sample. Specifically, samples with frequent semantics use large temperatures to keep local semantic structures, while samples with rare semantics use small temperatures to induce more separable features. Our method not only outperforms prior strong baselines (e.g., SimCLR, CLIP) on unimodal and bimodal datasets with larger improvements on imbalanced data but also is less sensitive to hyper-parameters. To our best knowledge, this is the first methodical approach to optimizing a contrastive loss with individualized temperatures. 
\end{abstract}

\section{Introduction}
\label{sec:introduction}
Self-supervised learning (SSL) is a promising way to learn data representations that generalize across downstream tasks. Specifically, contrastive learning (CL) has laid the foundation for state-of-the-art SSL models due to its effectiveness~\citep{chen2020simple,he2020momentum,ReLiCv2,huang2022contrastive}. CL aims to push the similarity scores between ``positive" pairs (e.g., augmented views of the same image) to be higher than that between ``negative" pairs (e.g., augmented views from different images), which has great promises in leveraging large amount of unlabelled data~\citep{goyal2021self,radford2021learning}. Moreover, CL has been extended to a broader scope, e.g., bimodal image-text SSL~\citep{zhang2020contrastive,radford2021learning}, where images and language text descriptions can be regarded as multi-modal views of the same underlying concept. The well-known CLIP~\citep{radford2021learning} method shows that models learned from millions of image-text pairs can attain impressive recognition performance for a wide range of visual understanding tasks.

\begin{figure*}[t]
\begin{center}
\begin{minipage}[c]{0.48\textwidth}
\centering\includegraphics[width=1\textwidth]{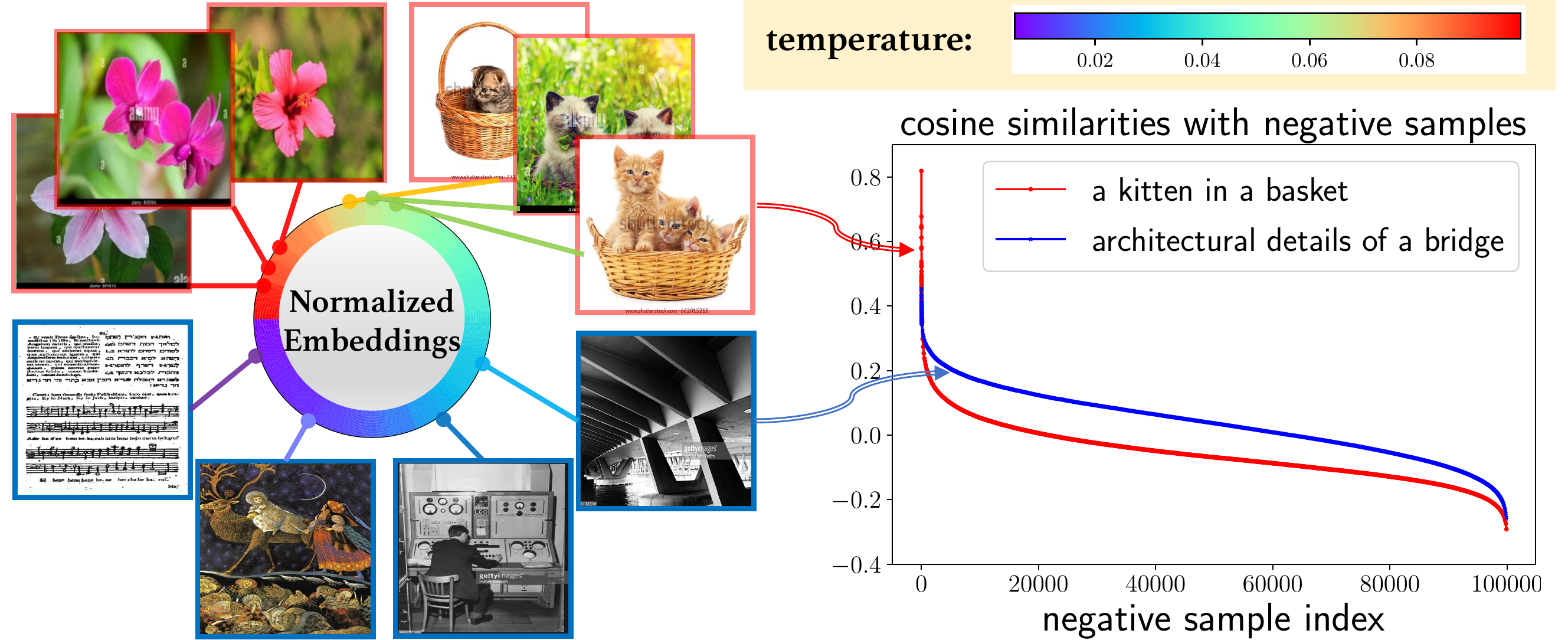}
\end{minipage}
\begin{minipage}[c]{0.24\textwidth}
\centering\includegraphics[width=1\textwidth]{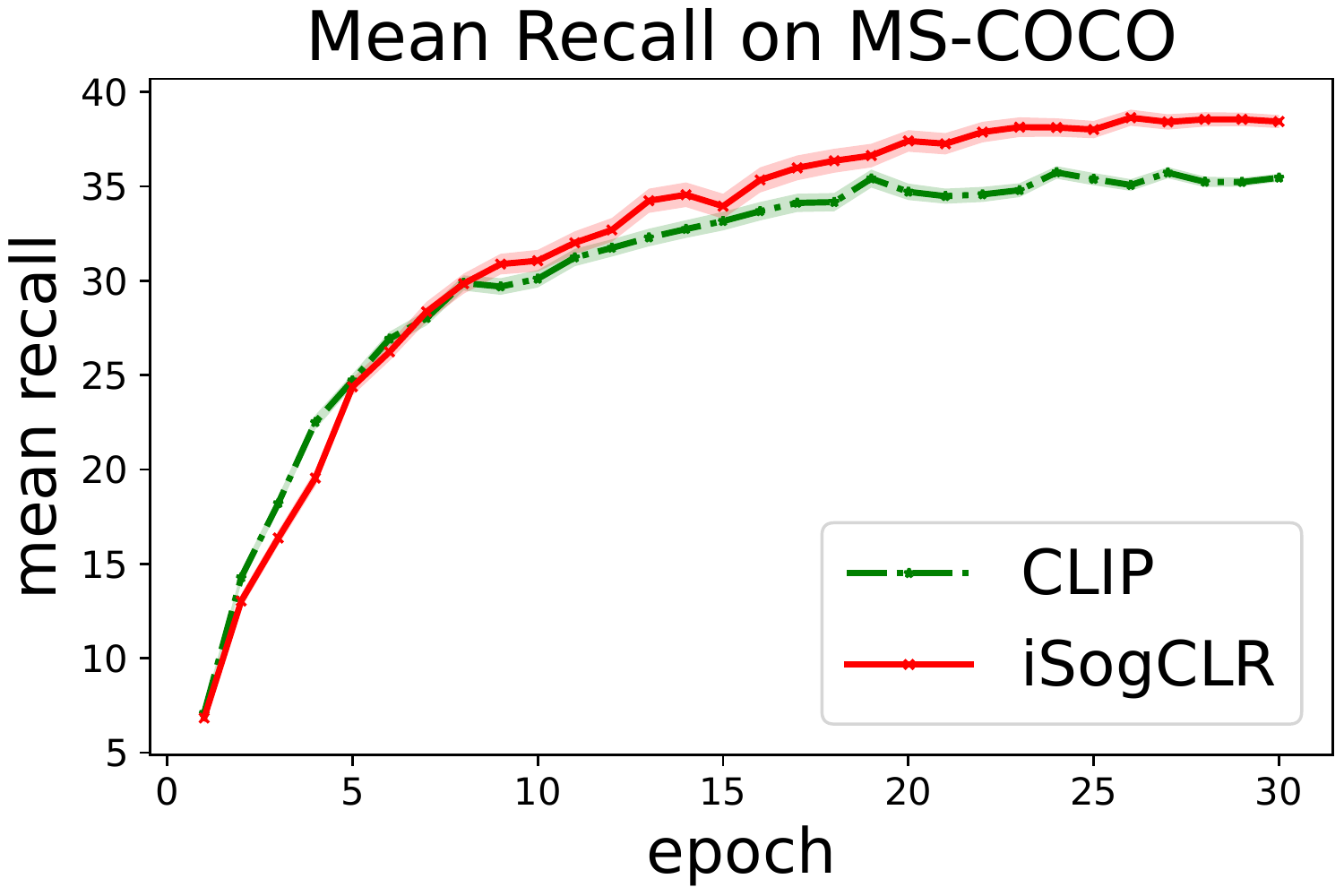}
\end{minipage}
\begin{minipage}[c]{0.24\textwidth}
\centering\includegraphics[width=1\textwidth]{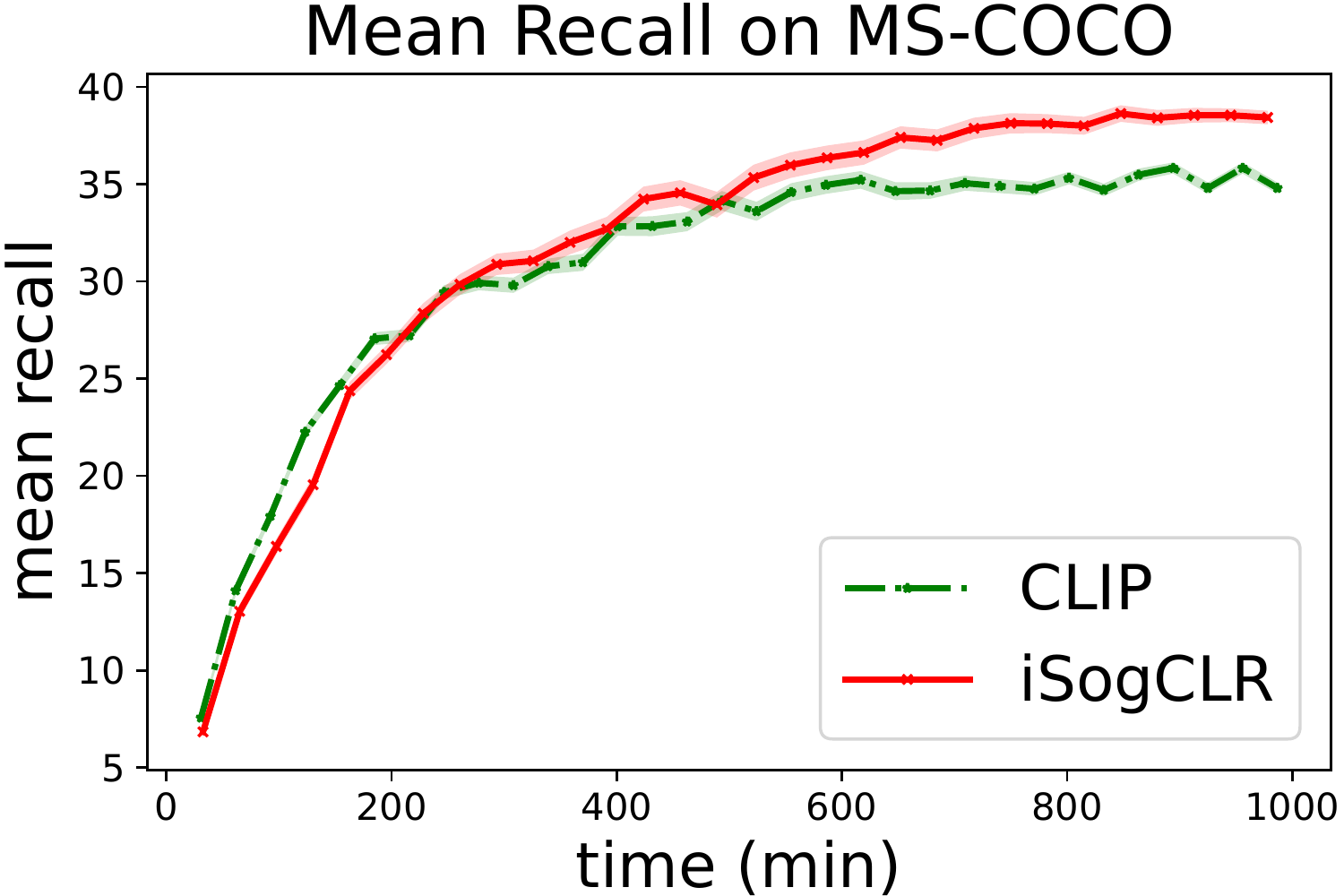}
\end{minipage}
\vspace*{-0.1in}
\caption{Left: Samples with \emph{frequent} semantics (e.g., a kitten in a basket) have many more similar samples than that with \emph{rare} semantics (e.g., architectural details of a bridge). Middle: An illustration of temperature individualization by our algorithm named iSogCLR, making ``hot" images with frequent semantics use a higher temperature to keep semantic structures, and making  ``cold" images  with rare semantics use a lower temperature for inducing more separable features. The circular heatmap is plotted using learned temperatures of 100 random images from the CC3M dataset. Right: Convergence curves of CLIP and iSogCLR, where we use the CLIP implementation and training settings from open-clip~\protect\footnotemark. We train the models on CC3M and evaluate image retrieval performance on MS-COCO.}
\label{fig:motivating_example}
\end{center}
\vspace*{-0.28in}
\end{figure*}
\footnotetext{\url{https://github.com/mlfoundations/open_clip}}

In general, contrastive methods share a common design of the softmax-based loss function,
For a given anchor data $i$, a contrastive loss can be written as:
{\setlength\abovedisplayskip{0pt}
\setlength\belowdisplayskip{0pt}
\begin{equation}\label{eqn:bc}
    \mathcal{L}_{\text{con}}^{i} = - \log\frac{\exp{(\text{sim}(z_i,z_i^+)}/\tau)}{\sum_{k\neq i}\exp{(\text{sim}(z_i,z_k)}/\tau)},
\end{equation}}where $z_i$ is the feature of the anchor data, $z_i^+$ is the feature of a different `view' of data $i$ and called a \emph{positive} sample, $z_k (k\neq i)$ are the features of other samples and called \emph{negative} samples, $\tau$ is the temperature parameter, and $\text{sim}(\cdot,\cdot)$ measures the similarity between two input vectors. The positive pair can be also added to the denominator, which does not affect our discussion here. A significant property of the loss is the \textbf{hardness-aware} property~\citep{wang2021understanding,zhang2022dual,xia2022progcl}. Consider the gradient of $\tau\mathcal{L}_{\text{con}}^{i}$ w.r.t. model parameters $\w$, i.e., $\tau\nabla_{\w}\mathcal{L}_{\text{con}}^{i}$:
{\setlength\abovedisplayskip{0pt}
\setlength\belowdisplayskip{0pt}
\begin{align*}
\small
    -\nabla_{\w}\text{sim}(z_i,z_i^+)\!+\!\sum_{k\neq i}\! \frac{\exp{(\text{sim}(z_i,z_k)}/\tau)}{\underset{k\neq i}{\sum}\exp{(\text{sim}(z_i,z_k)}/\tau)}\!\nabla_{\w}\text{sim}(z_i,z_k).
\end{align*}}Note that the weight for the gradient of a negative pair $(z_i,z_k)$ is proportional to $\exp{(\text{sim}(z_i,z_k)}/\tau)$. Thus, the contrastive loss automatically penalizes negative pairs according to their hardness (hard means $\text{sim}(z_i,z_k)$ is large). The temperature $\tau$ plays a critical role in controlling the penalty strength on negative samples~\citep{wang2021understanding,zhang2022dual}. Specifically, a small $\tau$ penalizes much more on hard negative samples (i.e., the degree of hardness-awareness is high), causing separable embedding space. However, the excessive pursuit to the separability may break the underlying \emph{semantic structures} because some negative samples with high similarity scores to the anchor data might indeed contain similar semantics, to which we refer as pseudo negatives. In contrast, a large $\tau$ tends to treat all negative pairs equally (i.e., the degree of hardness-awareness is low) and is more tolerant to pseudo negative samples, which is beneficial for keeping local semantic structures.

Real-world data distributions always exhibit long tails~\citep{zhu2014capturing,feldman2020does} and the frequency of samples with different semantics can be extremely diverse. In Figure~\ref{fig:motivating_example}, we show some images with frequent or rare semantics from the CC3M dataset~\cite{sharma2018conceptual}. We further select two representative images, namely  ``a kitten in a basket" that contains frequent semantics and ``architectural details of a bridge" that contains rare semantics, and present the cosine similarities between these two images and other 100,000 random texts from the same dataset. Note that images with frequent semantics have much more similar samples. To improve feature qualities, samples with frequent semantics should be assigned with a \emph{large} $\tau$ to better capture the local semantic structure, while using a small $\tau$ will push semantically consistent samples away.  On the other hand, samples with rare semantics should have a \emph{small} $\tau$ to make their features more discriminative and separable. We refer to these effects as {\it semantics harmonizing}. Unfortunately, most existing CL methods treat the temperature parameter as a \emph{global} parameter, which does not accommodate different semantics and restricts their performance in real-world applications. 


In this paper, we propose a provable stochastic algorithm for optimizing a contrastive loss with individualized temperatures. First, inspired by \emph{distributionally robust optimization (DRO)}~\citep{namkoong2017variance,duchi2021statistics}, we design a novel \emph{robust global contrastive loss (RGCL)} for each anchor data. RGCL introduces a distributional variable for \emph{all negative samples} of each anchor data (this explains ``global'' in RGCL), and a KL divergence constraint between the distributional variable and the uniform distribution. We show that RGCL is hardness-aware by optimizing the distributional variable, and the KL constraint affects the degree of hardness-awareness. We further demonstrate that the dual formulation of RGCL induces a loss function that can be solved efficiently and contains an individualized learnable temperature parameter. 
In a spirit of stochastic optimization of a global contrastive loss (SogCLR)~\cite{yuan2022provable}, we propose an efficient optimization algorithm named \textbf{iSogCLR} for solving the dual formulation of RGCL by synthesizing advanced techniques of compositional optimization~\cite{wang2022finite} and of solving KL constrained DRO~\cite{qi2022stochastic}. 
We establish a convergence guarantee of our algorithm without large mini-batch sizes, which is similar to that of SogCLR for optimizing a global contrastive loss with a fixed $\tau$.  
Experiments on unimodal and bimodal datasets indicate that iSogCLR achieves superior performance, especially on imbalanced data. More in-depth analyses demonstrate the relationship between the semantics of data and their learned temperatures. An illustration of some images and their learned temperatures by our method is plotted in Figure~\ref{fig:motivating_example}. Besides, ablation studies show that iSogCLR is much less sensitive to its hyper-parameters. We summarize our contributions below:
\vspace{-2.5mm}
\begin{itemize}[leftmargin=*]
    \item We propose a new {\it robust contrastive loss} inspired by DRO, and study its properties and connections with existing softmax-based contrastive losses. 
    \vspace*{0.05in}
    \item We propose a novel and provable stochastic algorithm called iSogCLR for optimizing the robust contrastive losses with automatic temperature individualization.
    \vspace*{0.05in}\item We conduct comprehensive experiments on  unimodal and bimodal CL to demonstrate the superior performance of our method, and the relationship between the semantics of data and their learned temperatures.
\end{itemize}

\section{Related Work}
\vspace*{-0.02in}{\noindent\bf Self-supervised Learning.} SSL methods can be divided into two main categories: CL methods and non-CL methods. Although non-CL methods do not rely on negative samples and achieve comparable performance to CL methods, they often require additional projector~\citep{grill2020bootstrap}, stop gradient~\citep{chen2021exploring}, or momentum encoder~\cite{richemond2020byol}. Several non-CL methods~\cite{ermolov2021whitening,zbontar2021barlow,bardes2021vicreg} use information maximization techniques, which decorrelate the variables of each embedding to avoid an informational collapse. 
Nevertheless, CL methods remain a mainstream framework for SSL and are extended to many other fields~\citep{khaertdinov2021contrastive,aberdam2021sequence,eun2020learning}, and are shown to be better than non-CL methods in the human learning setting~\cite{zhuang2022how}.

{\noindent\bf CL Methods.} Pioneering methods~\citep{chen2020simple,he2020momentum} construct pairs and optimize InfoNCE loss~\citep{oord2018representation} in a direct way. Later, numerous works improve the performance by penalizing hard negatives~\citep{wu2020conditional,kalantidis2020hard,chen2021simpler,robinson2021contrastive,xie2022delving,zhang2022dual}, generating better views~\citep{tamkin2020viewmaker,tian2020makes,ge2021robust,wang2021contrastive}, using prototypes (cluster centers) for contrasting~\citep{caron2020unsupervised,li2020prototypical}, and handling pseudo negative samples~\citep{chuang2020debiased,dwibedi2021little}. Recently,~\citet{haochen2021provable} propose a novel loss based on spectral decomposition on the population graph with accuracy guarantees. To address the issue that most CL methods rely on large batch sizes,~\citet{yuan2022provable} propose a provable algorithm named SogCLR for optimizing a global contrastive loss, and achieve promising results without large mini-batch sizes. Although great progress has been made, most methods ignore the imbalance of semantics in real-world data, and lack the ability to adapt to different types of semantics automatically. 

{\noindent\bf Non-CL Methods.} Non-CL methods employ the augmented-view-based paradigm as contrastive methods. But they only consider positive image view pairs, and use different methodologies to avoid all outputs of the network collapse to a constant. A representative class of methods aims to avoid collapse by using tricks inspired knowledge distillation~\citep{hinton2015distilling}. Specifically, a student network is trained to predict the outputs of a teacher network, and the weights for the teacher network are updated by different strategies~\citep{grill2020bootstrap,chen2021exploring}. An alternative class of methods~\citep{ermolov2021whitening,zbontar2021barlow,bardes2021vicreg} relies on maximizing the information content of embeddings. They prevent informational collapse by decorrelating every pair of variables of the embedding vectors. Although these methods achieving promising results on popular tasks, theoretical studies for these methods are still lacking. Besides, it is difficulty to directly extend these methods in bimodal self-supervised learning tasks where the model architectures and data are completely different on different modalities.

{\noindent\bf Bimodal Contrastive Learning.} Vision-and-language pretraining (VLP) is a rapidly growing field. Due to its effectiveness, CL has been extended to representative works such as CLIP~\citep{radford2021learning} and ALIGN~\citep{jia2021scaling}, which are pretrained on millions of web-crawled image-text pairs and achieve astounding results. Later, DeCLIP~\citep{li2021supervision}, FILIP~\citep{yao2021filip}, SLIP~\citep{mu2022slip} and CyCLIP~\citep{goel2022cyclip} improve CLIP by introducing more supervisions or bringing in fine-grained cross-modal interactions. 
Our algorithm tackles a fundamental problem on optimizing individualized temperatures for a contrastive loss. Thus it can be applied in bimodal setting seamlessly. Besides, because the web-crawled bimodal data often exhibits long tail distributions~\citep{wang2022vlmixer}, we observe that our algorithm is more suitable in such scenario and achieves great improvements compared with baselines.

{\noindent\bf Optimizing $\tau$ in CL.} The impact of $\tau$ on the success of CL is remarkable and noticed in prior works.~\citet{wang2021understanding} show that temperature controls the strength of penalties on hard negative samples and describe a uniformity-tolerance dilemma when choosing temperature parameter.~\citet{zhang2022dual,khaertdinov2021contrastive} propose to improve negative mining in CL by using different temperatures for positive and negative samples, where temperatures can be fixed values or input-dependent functions.~
In bimodal CL, CLIP~\citep{radford2021learning} proposes to treat $\tau$ as a learnable variable, which is adopted by later works~\cite{goel2022cyclip,li2021align}. However, this approach was never rigorously justified.~\citet{zhang2021temperature} show that input-dependent learnable $\tau$ is effective to estimate the uncertainty in out-of-distribution detection, but with the cost of sacrificing the performance on downstream tasks. Different from previous methods that set or learn temperatures heuristically, we present a new view of the contrastive loss based on DRO, which explains the role of $\tau$ mathematically, and enables automatic optimization of individualized temperatures.

{\noindent\bf Distributionally Robust Optimization.} DRO has been extensively studied in machine learning and statistics~\citep{bertsimas2018data,staib2019distributionally,duchi2021statistics}. Mathematically, DRO seeks a model that performs well regardless of perturbing the sample distribution within an uncertainty set, which is specified by a divergence measure between the perturbed distribution and the observed empirical distribution~\citep{ben2013robust,blanchet2019robust,duchi2021statistics}. 
Recent works~\cite{qi2020attentional,qi2021stochastic,qi2022stochastic,levy2020large,jin2021non,gurbuzbalaban2022stochastic,zhu2023distributionally} have proposed efficient stochastic algorithms for solving different DRO formulations. Our algorithm is inspired by that of~\citet{qi2022stochastic}, which considers a similar DRO problem with the uncertainty set specified by a KL constraint, and proposes efficient dual-free algorithms with convergence guarantees.  However, different from their work that considers an ordinary compositional objective with only \emph{one} KL constraint,  we deal with a more complex \emph{coupled} compositional function~\citep{wang2022finite} and \emph{many} KL constraints for all anchor data, which complicate the convergence analysis.

\section{Preliminaries}

Let $\D=\{\x_1,\ldots,\x_n\}$ denote a set of training images with size $n$. $\mathcal{P}$ denotes a set of data augmentation operators. Denoted by $\S^-_i=\{\A(\x):\forall\A\in\mathcal{P},\forall\x\in\D\setminus\x_i\}$ the set of negative data for the anchor image $\x_i$. Let $E(\cdot)$ denote the image encoder. For bimodal tasks, let $\D^{\prime}=\{(\x_1,\t_1),\ldots,(\x_n,\t_n)\}$ denote $n$ image-text pairs. Let $\T^-_i=\{\t_j\in\D', j\neq i\}$ be the set of negative texts for the anchor image $\x_i$, and $\mathcal I^-_i=\{\x_j\in\D', j\neq i\}$ be the set of negative images for the anchor text $\t_i$. Let $E_I(\cdot)$ and $E_T(\cdot)$ denote the encoder for images and texts in bimodal CL, respectively. Let $\w$ denote the model parameters. 
Denote by $\Delta_n$ a simplex of dimension $n$ and by $\delta_{\Omega}(\cdot)$ a dirac function that returns zero if input belongs to the set $\Omega$ or infinity otherwise.  Let $\text{KL}(\cdot, \cdot)$ denote the KL divergence. 

For unimodal CL, a \emph{global contrastive loss (GCL)}~\citep{yuan2022provable} for the $i$-th image $\x_i$ can be defined as:
{\setlength\abovedisplayskip{1pt}
\setlength\belowdisplayskip{1pt}
\begin{equation}
\hspace*{-0.1in}    \ell_{\text{GCL}}(\x_i)=-\tau\log\frac{\exp(E(\A(\x_i))^{\top} E(\A^{\prime}(\x_i))/\tau)}{\sum_{\z\in\S^-_i}\exp\left(E(\A(\x_i))^{\top} E(\z)/\tau\right)},
\label{eq:uni_cl_loss}
\end{equation}}where $\A, \A'\in\mathcal{P}$. Compared with a contrastive loss defined over mini-batch samples~\cite{chen2020simple,he2020momentum}, GCL multiplies $\tau$ on the right side to ensure the gradient is not illy scaled. Besides, GCL considers all negative samples $\S^-_i$ for $\x_i$ in the denominator, enabling us to analyze the optimization error and design algorithms to control the error.

To simplify~(\ref{eq:uni_cl_loss}) and facilitate our statements,  we define the following auxiliary function:
{\setlength\abovedisplayskip{1pt}
\setlength\belowdisplayskip{1pt}
\begin{equation}
h_i(\z)\!:=\!E(\A(\x_i))^{\top}\!E(\z)\!-\!E(\A(\x_i))^{\top}\!E(\A^{\prime}(\x_i)).
\label{eq:aux_func_uni_cl_loss}
\end{equation}}In fact, 
$h_i(\z)$ measures the \emph{hardness score} of $\z$ with respect to $\x_i$. Then~(\ref{eq:uni_cl_loss}) can be rewritten as:
{\setlength\abovedisplayskip{1pt}
\setlength\belowdisplayskip{1pt}
\begin{align}
\ell_{\text{GCL}}(\x_i)
&=\tau\log\sum\nolimits_{\z\in\S^-_i}\exp(h_i(\z)/\tau).
\label{eq:uni_cl_loss_simp}
\end{align}}For bimodal tasks, we consider the two-way GCL, which for the $i$-th image-text pair is defined as
{\setlength\abovedisplayskip{1pt}
\setlength\belowdisplayskip{1pt}
\begin{equation*}
\begin{aligned}
    \ell(\x_i,\t_i) =  & -\tau\log\frac{\exp(E_I(\x_i)^{\top}E_T(\t_i)/\tau)}{\sum_{\t\in\T^-_i} \exp(E_I(\x_i)^{\top}E_T(\t)/\tau)}  \\
    &   -\tau\log\frac{\exp(E_I(\x_i)^{\top}E_T(\t_i)/\tau)}{\sum_{\x\in\mathcal I^-_i} \exp(E_I(\x)^{\top}E_T(\t_i)/\tau)},
\end{aligned}
\end{equation*}}where the first term is an image-to-text contrastive loss, i.e., try to predict $\t_i$ from $\t\in\D'$ based on $\x_i$, and the second term is a symmetrical text-to-image contrastive loss.

Similar to~(\ref{eq:uni_cl_loss_simp}), we can simplify $\ell(\x_i,\t_i)$ as:
{\setlength\abovedisplayskip{3pt}
\setlength\belowdisplayskip{3pt}
\begin{equation*}
\small
\ell(\x_i,\t_i)\!=\!\tau\log\!\underset{\t\in\T^-_i}{\sum}\!\exp\!\left[\!\frac{h_{\x_i}(\t)}{\tau}\!\right]\!+\tau\!\log\!\underset{\x\in\mathcal I_i^-}{\sum}\!\exp\!\left[\!\frac{h_{\t_i}(\x)}{\tau}\!\right],
\end{equation*}}where $h_{\x_i}(\t)=E_I(\x_i)^{\top}E_T(\t)-E_I(\x_i)^{\top}E_T(\t_i)$ and $h_{\t_i}(\x)=E_I(\x)^{\top}E_T(\t_i)-E_I(\x_i)^{\top}E_T(\t_i)$. 
Our algorithm 
is applicable to both unimodal and bimodal CL. 

A general DRO formulation is given by~\citep{levy2020large}:  
{\setlength\abovedisplayskip{2pt}
\setlength\belowdisplayskip{2pt}
\begin{equation}
    \min_{\w}\max_{\p\in\mathcal U} \sum\nolimits_{i=1}^n \p_i l_i(\w) - \lambda D(\p,\boldsymbol{1}/n),
\label{eq:literature_dro}
\end{equation}}where $\p$ is a distributional variable, $l_i(\w)$ is the loss on sample $i$, $\mathcal U\subset\Delta_n$ is an uncertainty set of the distributional variable specified by some divergence constraint $D(\p,\boldsymbol{1}/n)$. Maximizing the objective over $\p$ leads to larger weights on samples with larger losses, which actually finds \emph{the worst case} loss. DRO then minimizes the worst-case loss to make models achieve the robustness against potential distribution shifts.

\section{Robust Global Contrastive Objectives}
We first introduce a novel \emph{robust global contrastive loss (RGCL)}, including its formulation and properties. 
Then we convert RGCL into a simpler equivalent minimization form with individualized temperatures by Lagrangian duality theory. We further give a theoretical explanation of our objective for optimizing temperatures. Due to the limited space, we describe our method in unimodal setting. For our method in bimodal setting, we present its final minimization form and defer detailed derivation to Appendix~\ref{app:bimodal_alg}.

\subsection{Formulations}
Motivated by DRO, we define the following loss for an anchor data $\x_i$ with a set of $m$ negative samples $\S_i^-$:
{\setlength\abovedisplayskip{1pt}
\setlength\belowdisplayskip{1pt}
\begin{align}
\hspace*{-0.1in}\ell_{\text{RGCL}}(\x_i):=
& \max_{\mathbf{p} \in \Delta_m} \sum\nolimits_{\z_j \in \mathcal{S}^-_i}\p_j h_i(\z_j)-\tau_0 \text{KL}(\mathbf{p},\boldsymbol{1}/m) \notag\\
&\text {s.t.} \quad \text{KL}(\mathbf{p},\boldsymbol{1}/m) \leq \rho,\label{eq:RCL}
\end{align}}where $m=|\S_i^-|$, $\rho>0$ is a hyperparameter, and $\tau_0$ is a small positive value by default. 
There are several features of~(\ref{eq:RCL}). (i) Similar to the GCL (\ref{eq:uni_cl_loss}), we use \emph{all} negative samples to define the loss. Hence our loss is referred to robust \emph{global} contrastive loss. (ii) We mainly use the KL constraint $\text{KL}(\mathbf{p},\boldsymbol{1}/n)\!\leq\!\rho$ to define an uncertainty set of $\p$. The small KL regularization term $\tau_0 \text{KL}(\mathbf{p},\boldsymbol{1}/n)$ is added to make the loss function  smooth hence facilitate the optimization~\cite{qi2022stochastic}.  
(iii) Different from~(\ref{eq:literature_dro}) that considers a distribution $\p$ over all samples, RGCL considers a distribution $\p$ over all negative samples for \emph{each anchor data}. 

To better illustrate the key properties of RGCL, we plot the contours of $\ell_{\text{RGCL}}(\x_i)$ with two negative data in Figure~\ref{fig:dro_contour}. For $[h_i(\z_1), h_i(\z_2)]$ values in~(\ref{eq:RCL}), we consider two settings, i.e., $[0.0,-1.0]$ and $[-0.5,-0.5]$, which represent $\z_1$ is very similar to the anchor data, and two equally dissimilar negative samples, respectively. The dashed lines in Figure~\ref{fig:dro_contour} are the boundaries of the KL constraints with different $\rho$ values, whose intersections with the red simplex lines define the feasible regions. From these figures, we observe that (i) $\ell_{\text{RGCL}}(\x_i)$ is also \emph{hardness-aware}. By maximizing over $\p$, harder negative samples will have larger weights (e.g., $\p^*=(0.8, 0.2)$ for the first setting when $\rho=0.2$). (ii) If the hardness of two negative samples are similar, then their weights tend to be similar too (for the second setting). (iii) The constraint $\text{KL}(\mathbf{p},\!\boldsymbol{1}/m)\!\leq\!\rho$ actually affects the \emph{degree of hardness-awareness}. In the left of Figure~\ref{fig:dro_contour}, note that if $\rho$ gets larger, the optimal $\p$ will be more non-uniform, i.e., the degree of hardness-awareness will increase. 

\begin{figure}
\begin{minipage}[c]{0.235\textwidth}
\centering\includegraphics[width=1\textwidth]{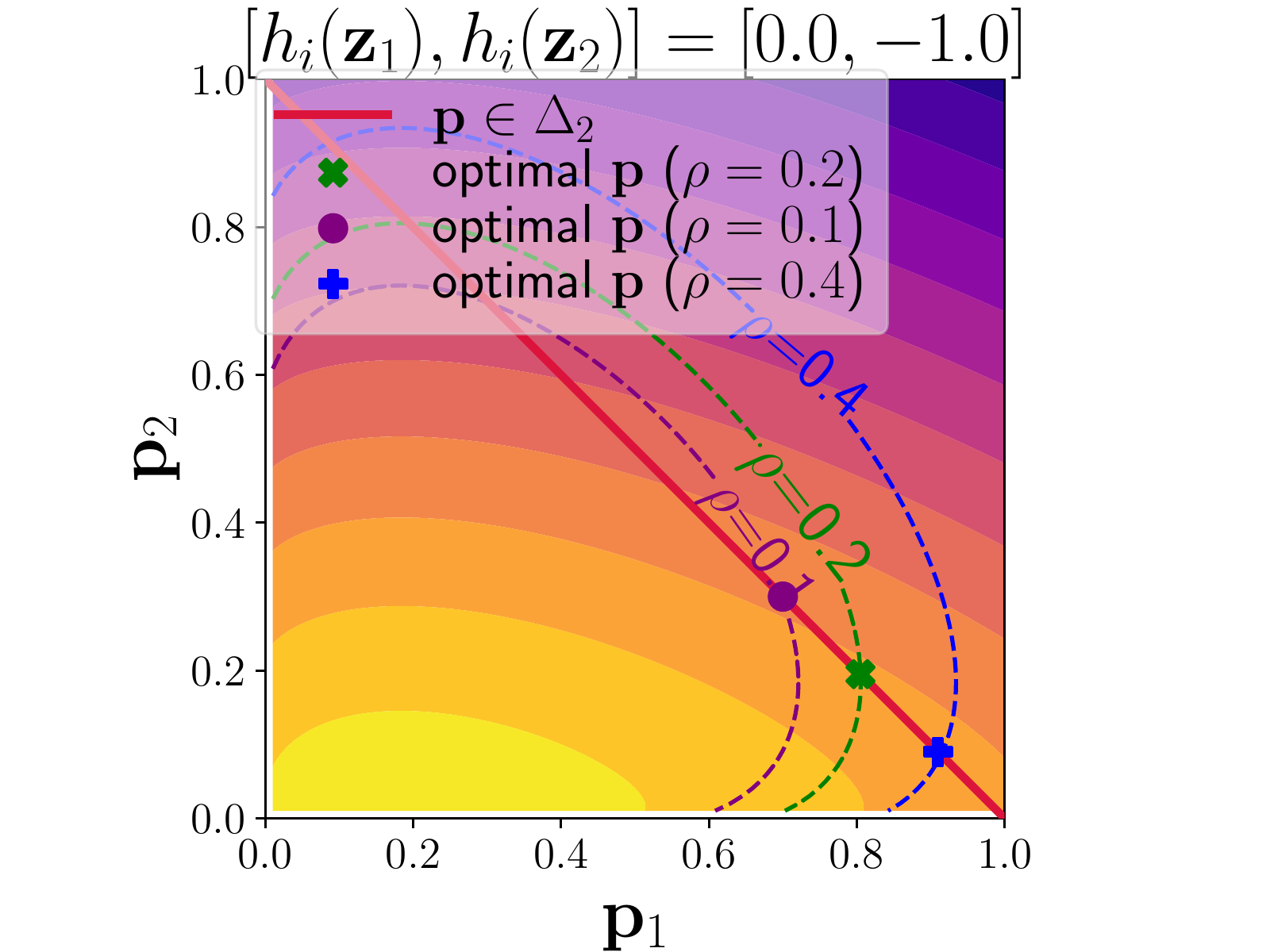}
\end{minipage}
\begin{minipage}[c]{0.235\textwidth}
\centering\includegraphics[width=1\textwidth]{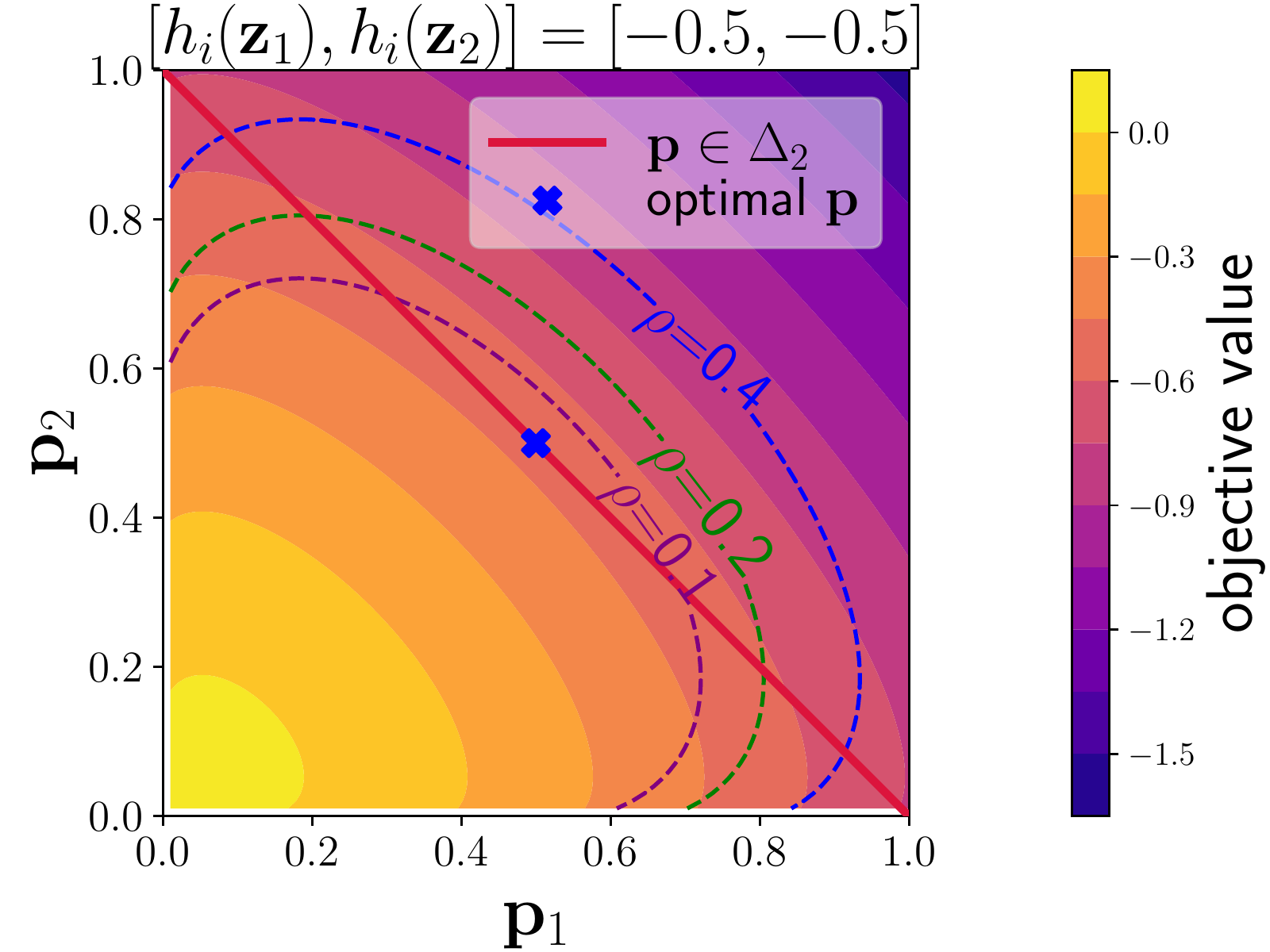}
\end{minipage}
\caption{Contours of $\ell_{\text{RGCL}}(\x_i)\ (|\S_i^-|=2)$ in~(\ref{eq:RCL}) for two $h_i=[h_i(\z_1), h_i(\z_2)]$ vectors: $[0.0,-1.0]$ and $[-0.5,-0.5]$. One can observe that $\ell_{\text{RGCL}}(\x_i)$ is hardness-aware, harder sample ($h_i(\z_1)$ on the left) has larger weight ($\p_1=0.8$). Moreover, $\rho$ affects the degree of hardness-awareness. Larger $\rho$ means higher degree of hardness-awareness.
}
\label{fig:dro_contour}
\end{figure}

Next, we induce an \emph{equivalent} loss with an individualized learnable temperature parameter from~(\ref{eq:RCL}) and show the intuition about the effect of $\tau$ from the DRO view. Since directly optimizing~(\ref{eq:RCL}) is challenging due to maintaining the high-dimensional distributional variable $\p$, we follow~\citet{qi2022stochastic} and adopt the Lagrangian duality theory to convert $\ell_{\text{RGCL}}(\x_i)$ into its dual form (cf. Appendix~\ref{app:minmax_derivation}):

{\setlength\abovedisplayskip{0pt}
\setlength\belowdisplayskip{0pt}
\small
\begin{align}
& \max_{\mathbf{p} \in \Delta_m} \min_{\lambda \geq 0}\!\sum_{\mathbf{z}_j \in \mathcal{S}^-_i}\!p_j h_i(\z_j)\!-\!\tau_0\text{KL}(\mathbf{p},\!\boldsymbol{1}/m)\!-\!\lambda(\text{KL}(\mathbf{p},\!\boldsymbol{1}/m)\!-\!\rho)\notag \\
& {\Leftrightarrow}\!\min_{\lambda \geq 0}\left(\lambda\!+\!\tau_0\right)\!\log \sum_{\mathbf{z}\in\S^-_i} \exp\!\left(h_i(\z) / \lambda\right)\!-\!\left(\lambda+\tau_0\right) \log (m)\!+\!\lambda \rho\notag \\
& \Leftrightarrow\!\min\nolimits_{\tau \geq \tau_0}\tau\log \E_{\mathbf{z}\in\S^-_i} \exp\!\left(h_i(\z) / \tau\right)\!+\!(\tau-\tau_0)\rho, 
\label{eq:dual_form_RGCL}
\end{align}}where we first introduce a Lagrangian multiplier $\lambda$ for the KL constraint, 
and the last equality is due to a variable change $\tau=\lambda+\tau_0$. Notice that the Lagrangian multiplier $\lambda$ for the KL constraint becomes a learnable parameter $\tau$ for $\x_i$. Interestingly, if we fix $\tau$ (i.e., using a fixed KL regularization instead of the KL constraint in~(\ref{eq:RCL})), the above loss will reduce to $\ell_{\text{GCL}}\left(\mathbf{x}_i\right)$ in~(\ref{eq:uni_cl_loss}) up to a constant difference. Hence, RGCL introduces the flexibility to optimize individualized temperatures compared with the GCL. 

Based on the dual form of RGCL in~(\ref{eq:dual_form_RGCL}), we define a robust global contrastive objective (RGCO) for unimodal SSL : 
{\setlength\abovedisplayskip{1pt}
\setlength\belowdisplayskip{1pt}
\begin{align*}
\small
\min_{\w,\bftau\geq\tau_0}\!F(\w,\bftau)\!:=\!\frac{1}{n}\!\sum_{\x_i\in\D}\!\left\{\!\bftau_i\log\!\mathop{\E}_{\z\in\S^-_i}\!\exp\!\left(\frac{h_i(\z)}{\bftau_i}\right)\!+\!\bftau_i\rho\!\right\},
\label{eq:novel_cl_loss}
\end{align*}}where 
$\bftau_i$ is the individualized temperature for $\x_i$. The RGCO for bimodal SSL is defined similarly:
{\setlength\abovedisplayskip{0pt}
\setlength\belowdisplayskip{0pt}
\begin{equation*}
\begin{aligned}
\small
&\min_{\w,\bftau,\bftau'\geq\tau_0}\!F_{\text{B}}(\w,\bftau,\bftau')\!:=\!\frac{1}{n}\sum\nolimits_{(\x_i,\t_i)\in\D'}\Bigg[(\bftau_{i}+\bftau'_{i})\rho  +\\
&\bftau_{i}\log{\E}_{\t\in\T_i^-}\exp\!\left(\frac{h_{\x_i}(\t)}{\bftau_{i}}\right)\!+\!\bftau'_{i}\log{\E}_{\x\in\mathcal I^-_i}\!\exp\!\left(\frac{h_{\t_i}(\x)}{\bftau'_{i}}\!\right)\!\Bigg],
\label{eq:novel_cl_loss_bimodal}
\end{aligned}
\end{equation*}}with individualized temperatures $\bftau_{i}$ and $\bftau_{i}'$ for images and texts, respectively. A small constant can be added inside the log to ensure its smoothness and Lipschitz continuity as in~\cite{yuan2022provable}, which is assumed for analysis.

\subsection{An Intuitive Theoretical Explanation}

To answer why our RGCL can learn suitable temperatures for samples with different semantics intuitively, we compare our iSogCLR (the algorithm for optimizing RCGL) with CLIP, whose learned global $\tau$ is 0.01 on CC3M. Considering the representative cat and bridge images, we extract the features of them and 1000 random samples as their negatives for each method. Then we substitute these features into~(\ref{eq:RCL}) and solve the optimal $\p^*$ (i.e.,~$\p^*_j=\frac{\exp(h_i(\z_j)/\tau)}{\sum_{\z\in\S_i^-}\exp(h_i(\z_j)/\tau)}$, cf.~Appendix~\ref{app:minmax_derivation}) of each image for both methods. We plot the results in Figure~\ref{fig:dro_math_exp}. Combining these results with the formulation~(\ref{eq:dual_form_RGCL}), we have the following explanation.

\begin{figure}
\begin{minipage}[c]{0.235\textwidth}
\centering\includegraphics[width=1.1\textwidth]{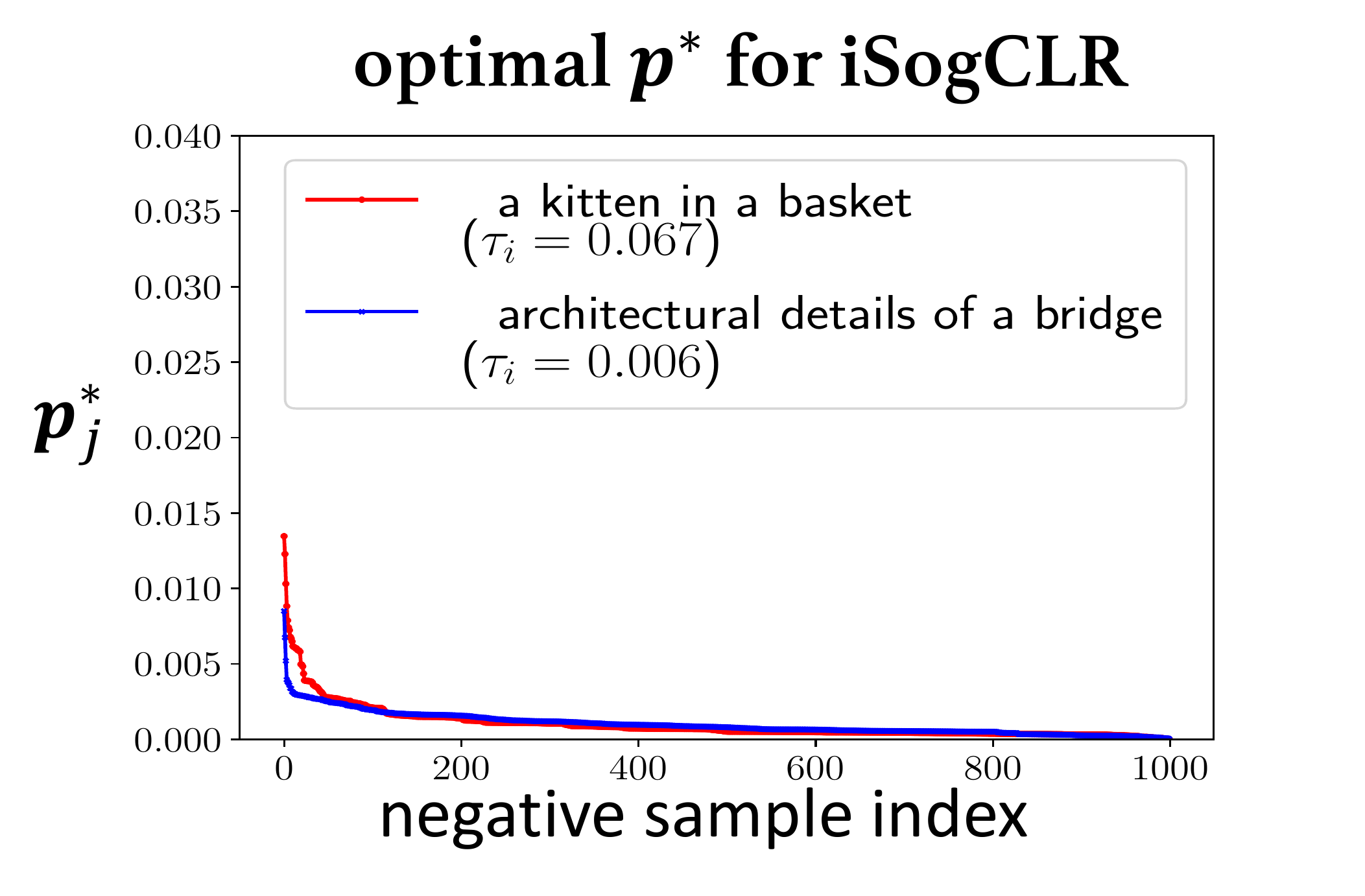}
\end{minipage}
\begin{minipage}[c]{0.235\textwidth}
\centering\includegraphics[width=1.1\textwidth]{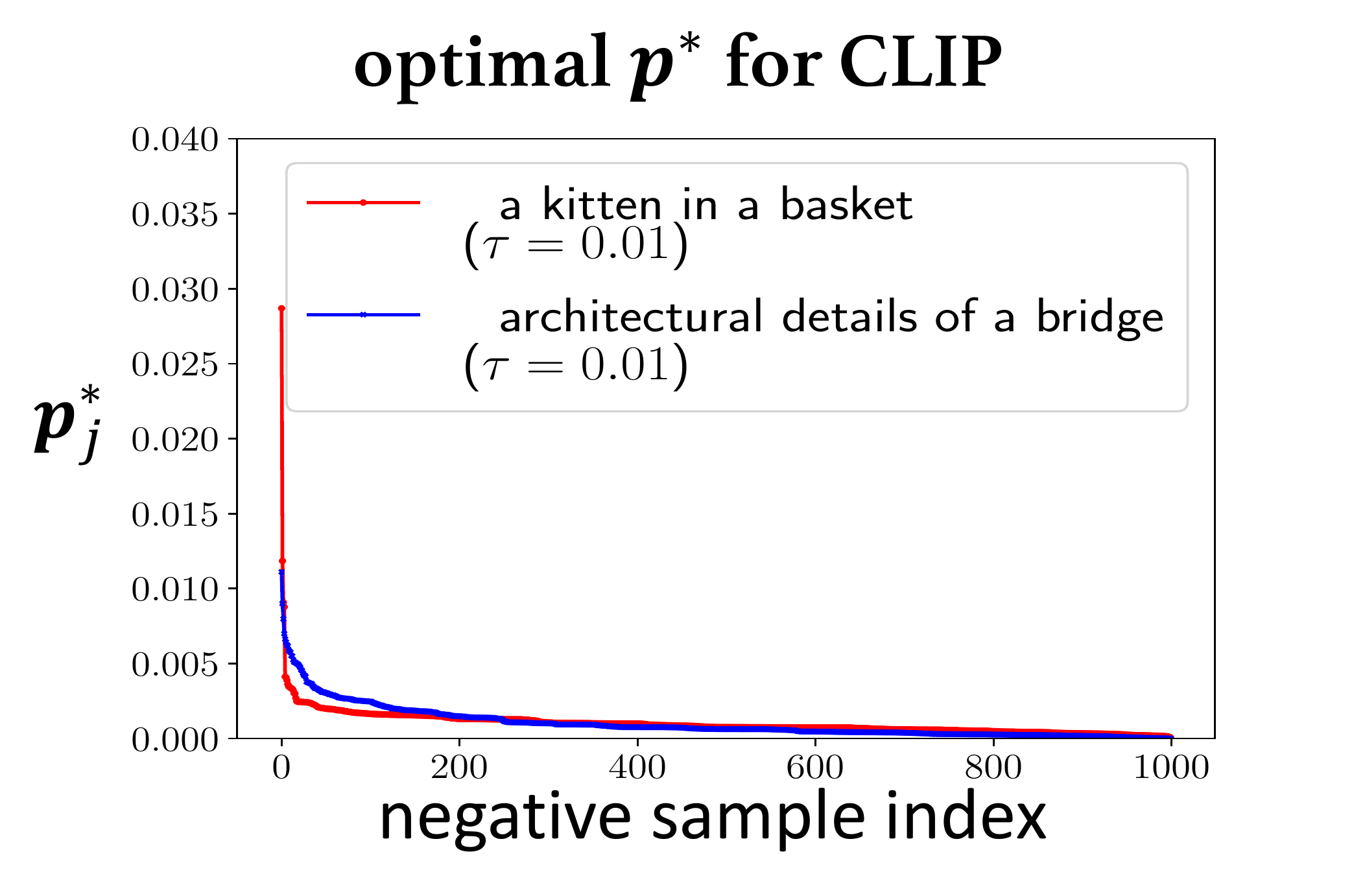}
\end{minipage}
\vspace{-0.2cm}
\caption{For the anchor images of cat and bridge, we select 1000 negative samples and solve~(\ref{eq:RCL}) for the optimal $\p^*$ by using $h_i$ values of iSogCLR  with learned $\bftau_i$ and CLIP with learned $\tau$. 
}
\label{fig:dro_math_exp}
\vspace{-0.0cm}
\end{figure}
{\bf For samples with frequent semantics}, due to the hardness-aware property of RGCL, their optimal $\p^*$ in~(\ref{eq:RCL}) tend to be more non-uniform, and $\text{KL}(\p,\boldsymbol{1}/n)\leq\rho$ is more likely to be violated. Therefore, their Lagrangian multipliers $\lambda$ in~(\ref{eq:dual_form_RGCL}) will be large to ``push'' $\p$ back to be closer to uniform.  Due to $\tau=\lambda+\tau_0$, their temperatures will be large. From Figure~\ref{fig:dro_math_exp}, it is notable that compared with the optimal $\p^*$ of the cat image from CLIP (the red line in the right), that from our RGCL (the red line in the left) is more \emph{uniform}.

{\bf For samples with rare semantics}, their optimal distributional variables $\p$ in~(\ref{eq:RCL}) tend to be more uniform, which makes $\text{KL}(\p,\boldsymbol{1}/n)\leq\rho$ being more likely to be satisfied. At this time, their Lagrangian multipliers $\lambda$ are probably small, and thus their temperatures are small. For example, the learned $\tau$ of the bridge image by iSogCLR is 0.006, which is smaller than the final learned $\tau\!=\!0.01$ in CLIP.


\section{iSogCLR for Stochastic Optimization}
In this section, we design a provable algorithm for optimizing $F(\w,\bftau)$. The algorithm for $F_{\text{B}}(\w,\bftau,\bftau')$ is similar and deferred to Appendix~\ref{app:bimodal_alg}. The new objective functions  $F(\w,\bftau)$ and $F_{\text{B}}(\w,\bftau,\bftau')$ are special cases of X-risks~\cite{DBLP:journals/corr/abs-2206-00439}, making the optimization of them much more challenging than traditional empirical risk minimization. Nonetheless, existing algorithms for deep X-risk optimization are not directly applicable due to the optimization over many temperature variables.  

Inspired by~\citet{yuan2022provable} for solving GCL, 
we cast $F(\w,\bftau)$ as a finite-sum coupled compositional optimization problem~\citep{wang2022finite}:
{\setlength\abovedisplayskip{0pt}
\setlength\belowdisplayskip{0pt}
\begin{equation}
\begin{aligned}
    \hspace*{-0.1in}& \min_{\w,\bftau\in\Omega} F(\w,\bftau):= \frac{1}{n}\sum_{\x_i\sim\D}\underbrace{f_i\left(\bftau_i, g_i(\w,\bftau_i;\S^-_i)\right)}_{F_i(\w,\bftau_i)},
\end{aligned}
\label{eq:objective}
\end{equation}}where $\bftau\in\Omega$ is to accommodate the constraint on $\bftau$ and
{\setlength\abovedisplayskip{2pt}
\setlength\belowdisplayskip{2pt}
\begin{equation*}
\begin{aligned}
& f_{i}(\bftau_i,\cdot)=\bftau_i\log(\cdot)+\bftau_i\rho, \\
&g_i(\w,\bftau_i;\S_i^-)=\E_{\z\in\S^-_i}\exp\left(h_i(\z)/{\bftau_i}\right).
\end{aligned}
\end{equation*}}
The gradients of $F$ w.r.t. $\w$ and $\bftau_i$ can be computed by:
{\setlength\abovedisplayskip{1pt}
\setlength\belowdisplayskip{1pt}
\small
\begin{align}
    \hspace*{-0.15in}&\nabla_{\w} F(\w,\bftau)=\frac{1}{n}\sum_{\x_i\in\D}\nabla_{\w}F_i(\w,\bftau_i)\label{eq:nabla_w_F} \\
    &=\frac{1}{n}\sum_{\x_i\in\D} \nabla_{g_i}f_i(\bftau_i,  g_i(\w,\bftau_i;\S_i^-))\nabla_{\w}g_i(\w,\bftau_i;\S_i^-), \notag\\
    & \nabla_{\bftau_i}\!F(\w,\bftau)\!=\!\frac{1}{n}\!\sum_{\x_j\in\D}\! \nabla_{\bftau_i}\!F_j(\w,\bftau_j)\!\stackrel{(a)}{=}\!\frac{1}{n}\!\nabla_{\bftau_i}\!F_i(\w,\bftau_i)\label{eq:nabla_tau_F}\\
    \hspace*{-0.2in}&=\frac{1}{n} \bigg(\frac{\bftau_i \nabla_{\bftau_i}g_i(\w,\bftau_i;\S_i^-)}{g_i(\w,\bftau_i;\S_i^-)} + \log(g_i(\w,\bftau_i;\S_i^-)) + \rho\bigg)\notag,
\end{align}}where $(a)$ holds because for $j\neq i$, $F_j(\w,\bftau_j)$ does not involve $\bftau_i$. Note that the major cost for computing~(\ref{eq:nabla_w_F}) and~(\ref{eq:nabla_tau_F}) lies at computing $g_i(\w,\bftau_i;\S_i^-)$ and its gradients w.r.t.~$\w$ and $\bftau_i$, involving all samples in $\S_i^-$. At each iteration, we only sample a random mini-batch of $B$ samples $\B=\{\x_1,\x_2,\ldots,\x_B\}$, and compute an unbiased estimator of $g_i(\w,\bftau_i;\S_i^{-})$ for each $\x_i\in\B$ by:
{\setlength\abovedisplayskip{2pt}
\setlength\belowdisplayskip{2pt}
\begin{equation}
g_i(\w,\bftau_i;\B_i)= \frac{1}{|\B_i|}\!\sum\nolimits_{\z\in\B_i}\exp(h_i(\z)/\bftau_i),
\label{eq:g_stochastic_estimator}
\end{equation}}where $\B_i\!=\!\left\{\A(\x),\A'(\x)\!:\!\A,\A^{\prime}\!\in\!\mathcal{P}, \x\!\in\!\B\!\setminus\!\x_i\right\}$ contains the negative samples of $\x_i$ in $\B$. 
However, directly substituting $g_i(\w,\bftau_i;\B_i)$ as the estimator of $g_i(\w,\bftau_i;\S_i^-)$ into~(\ref{eq:nabla_w_F}) and~(\ref{eq:nabla_tau_F}) will yield biased estimators because~(\ref{eq:nabla_w_F}) and~(\ref{eq:nabla_tau_F}) are \emph{non-linear} w.r.t. $g_i(\w,\bftau_i;\S_i^-)$. The optimization error will be large when the batch size is small~\citep{yuan2022provable}.


To control the approximation error and provide a convergence guarantee, we borrow a technique from~\citet{yuan2022provable} by using a \emph{moving average estimator} to keep track of $g_i(\w,\bftau_i;\S_i^-)$ for each $\x_i\in\D$. To this end, we maintain a scalar $\s_i$ for each $\x_i$ and update it at the $t$-th iteration by:
{\setlength\abovedisplayskip{2pt}
\setlength\belowdisplayskip{2pt}
\begin{equation}
    \s_i^{t+1}=(1-\beta_0)\s_i^t+\beta_0 g_i(\w_t,\bftau_i^t; \B_i),
\label{eq:update_s}
\end{equation}}where $\beta_0\in(0,1)$. Intuitively, when $t$ increases, $\w_{t-1}$ and $\bftau^{t-1}$ are getting close to $\w_t$ and $\bftau^t$, hence \textbf{the previous value of $\s_i^t$ is useful for estimating $g_i$}. 
With these stochastic estimators, we compute the gradients of~(\ref{eq:objective}) in terms of $\w_t$ and $\bftau_i^t$ with \emph{controllable} approximation error by:
{\setlength\abovedisplayskip{0pt}
\setlength\belowdisplayskip{0pt}
\begin{align}
\hspace{-10mm}
\hspace*{-0.11in}&G(\bftau_i^t) = \frac{1}{n} \left[\frac{\bftau_i^t}{\s_i^t}\nabla_{\bftau_i}g_i(\w,\bftau_i;\B_i) + \log(\s_i^t) + \rho\right]\label{eq:grad_tau},\\
\hspace*{-0.11in}&G(\w_t) = \frac{1}{B}\sum_{\x_i\in\B}\frac{\bftau_i^t}{\s_i^t}\nabla_{\w}g_i(\w,\bftau_i;\B_i).\label{eq:grad_w}
\end{align}}
The complete procedure is presented 
in Algorithm~\ref{algo:sogclr_dro}, named \textbf{iSogCLR} with \textbf{i} standing for individualization of temperatures. 
In step 1, we initialize all $\bftau_i$ to $\tau_{\text{init}}$. We implement the momentum update for $\bftau_i^{t+1}$ and $\w_{t+1}$ in Step 8, 9 and Step 12, 13, respectively, where $\beta_1\in(0,1)$ is the momentum parameter. The momentum-style update can be replaced by an Adam-style update using adaptive step sizes and the same convergence rate can be established~\citep{guo2021stochastic}. 

In terms of the additional memory cost, while it scales with the number of samples, it typically is not a significant concern in practical applications. Firstly, the additional memory cost is still small compared with the number of model parameters. For example, the additional memory cost for 1 million samples is $2\times\frac{10^6\times 4}{1024^2}=7.63$MB. Secondly, the GPU memory usage can be optimized by storing variables $\s_i$, $\u_i$ and $\bftau_i$ in the CPU memory. To further minimize the impact on training time, one can employ an \emph{asynchronous} strategy to transfer data. Specifically, before the $t$-th iteration, we can prefetch $\s_i^t$, $\u_i^t$ and $\bftau_i^t$ from the CPU and transfer them to the GPU. After forward propagation, we conduct back-propagation and asynchronously copy the updated $\s_i^{t+1}$, $\u_i^{t+1}$ and $\bftau_i^{t+1}$ back to the CPU memory and fetch a new batch of them from CPU. By utilizing high-bandwidth CPU-GPU interconnects, such as PCIe4 or NVLink, the time required to transfer these variables can effectively overlap with the time of back-propagation. This approach facilitates fast training and reduces GPU memory consumption.

\begin{algorithm}[t]
\caption{iSogCLR}\label{algo:sogclr_dro}
\begin{algorithmic}[1]
\REQUIRE $\beta_0, \beta_1, \eta$
\STATE Initialize $\w_1,\s^1, \u^1, \v_1$, $\bftau^1=\boldsymbol{\tau_{\text{init}}}$
\FOR{$t=1,2,\dots,T$}
\STATE Draw a batch of $B$ samples denoted by $\B\subset\D$
\FOR{$\x_i\in\B$}
\STATE Compute $g_i(\w_t,\bftau_i^t;\B_i)$ according to~(\ref{eq:g_stochastic_estimator})
\STATE Update $\s_i^{t+1}$ according to~(\ref{eq:update_s})
\STATE Compute $G(\bftau_i^t)$ according to~(\ref{eq:grad_tau})
\STATE Update $\u_i^{t+1}=(1-\beta_1)\u_i^t+\beta_1 G(\bftau_i^t)$
\STATE Update $\bftau_i^{t+1}=\Pi_{\Omega}\left[\bftau_i^t - \eta \u_i^{t+1} \right]$
\ENDFOR
\STATE Compute gradient estimator $G(\w_t)$ according to~(\ref{eq:grad_w})
\STATE Compute $\v_{t+1}=(1-\beta_1)\v_t + \beta_1 G(\w_t)$
\STATE Update $\w_{t+1}=\w_t - \eta \v_{t+1}$ \text{(or Adam-style)}
\ENDFOR
\end{algorithmic}
\end{algorithm}

We highlight the differences between RGCO/iSogCLR and GCO/SogCLR~\cite{yuan2022provable} and our contributions of analysis: (i) RGCO has additional $n$ temperature variables $\bftau_i$; nevertheless we prove the same iteration complexity as SogCLR. (ii) the constraint $\bftau_i\geq \tau_0$ makes the analysis more complicated. In particular, to show $F(\w,\bftau)$ is smooth in terms of $(\w, \bftau)$, we follow~\citet{qi2022stochastic} to derive an upper bound $\tau_{\max}$ for the optimal $\bftau_*$ and use the constraint set $\Omega=\{\tau_0\leq \bf\tau\leq \tau_{\max}\}$ in the analysis. (iii) Due to the constraint on $\bftau$, we employ a different notion of stationary point using the regular subgradient (cf.~Appendix~\ref{sec:convergence_analysis}) of the non-smooth extended objective $\bar F(\w, \bftau)=F(\w, \bftau) + \delta_\Omega(\bftau)$, i.e., $\hat{\partial}\bar{F}(\w,\bftau)$, which also complicates the analysis. Finally, the convergence guarantee of iSogCLR is:
\vspace*{-0.02in}
\begin{thm}
    Under appropriate conditions and settings of parameters $\beta_0,\beta_1\!=\!\O(B'\epsilon^2)$, $\eta\!=\!\O\!\left(\!\frac{BB'\epsilon^2}{n}\!\right)$, where $B\!=\!|\B|,B'\!=\!|\B_i|$,  after $T\!=\!\O\!\left(\!\frac{n}{BB'\epsilon^4}\!\right)$ iterations Algorithm~\ref{algo:sogclr_dro} finds an $\epsilon$-stationary solution of the problem, i.e., $\E[\text{dist}(0, \hat\partial \bar F(\w_t,\bftau_t))^2]\!\leq\!\epsilon^2$ for a random $t\in\{1,\ldots,T\}$. 
\label{thm:short_stat}
\end{thm}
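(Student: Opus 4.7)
The approach is to follow the stochastic compositional optimization framework of SogCLR (Yuan et al., 2022), augmented with the KL-duality arguments of Qi et al. (2022) to handle the $n$ new temperature variables $\bftau_i$. First, I would argue that the temperature iterates can be restricted to a compact box $\Omega = [\tau_0, \tau_{\max}]^n$. An upper bound $\tau_{\max}$ independent of $\w$ follows by inspecting the first-order optimality condition of the inner problem $\min_{\bftau_i \geq \tau_0} \bftau_i \log g_i(\w,\bftau_i;\S_i^-) + \bftau_i \rho$: since the hardness scores $h_i(\z)$ are uniformly bounded for normalized features, the derivative in $\bftau_i$ becomes strictly positive once $\bftau_i$ exceeds a universal constant, mirroring the analogous bound in Qi et al. On $\Omega$ and with $\bftau_i \geq \tau_0 > 0$, the inner functions $g_i$, the outer compositions $F_i$, and hence $F$ are $C^1$ with bounded and Lipschitz gradients in $(\w, \bftau_i)$, so the extended objective $\bar F(\w,\bftau) := F(\w,\bftau) + \delta_\Omega(\bftau)$ supports the stationarity notion $\text{dist}(0, \hat\partial \bar F(\w,\bftau))$ used in the theorem.

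Second, I would control the tracking error of the moving averages $\s_i^t$ relative to the true value $g_i(\w_t, \bftau_i^t; \S_i^-)$. Because $\x_i \in \B$ with probability $B/n$, the expected contraction per iteration of the recursion (\ref{eq:update_s}) is of order $\beta_0 B/n$, while the mini-batch variance contribution from $|\B_i|=B'$ scales like $\beta_0^2 B/(nB')$ and the one-step drift caused by the $(\w,\bftau)$ update is $O(\eta^2)$. Telescoping and averaging over $i\in[n]$ would produce a tracking bound that, under the stated parameter choices, decays at the $\epsilon^2$ rate, following the pattern of the tracking lemma in Yuan et al.

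Third, I would transfer this tracking error into bias-plus-variance bounds for $G(\w_t)$ and $G(\bftau_i^t)$. Since these estimators only differ from the true chain-rule gradients of $F_i$ by substituting $\s_i^t$ for $g_i$, the chain rule together with boundedness of all factors on $\Omega$ gives the required control, modulo an additional $O(1/B + 1/B')$ from mini-batch sampling. The momentum iterates $\v_t, \u_i^t$ then satisfy standard variance-reduction recursions of the form $\E\|\v_{t+1} - \nabla_{\w} F(\w_t,\bftau_t)\|^2 \leq (1-\beta_1)\E\|\v_t - \nabla_{\w} F(\w_{t-1},\bftau_{t-1})\|^2 + O(\beta_1^2) + O(\eta^2/\beta_1)$ plus the tracking error, as in Guo et al. and Qi et al. Applying the smoothness-based descent lemma to $\bar F(\w_t,\bftau_t)$, using the projection step on $\bftau$ to convert gradient error into regular-subgradient stationarity, summing over $t=1,\dots,T$, and balancing parameters via $\beta_0, \beta_1 = \Theta(B'\epsilon^2)$ and $\eta = \Theta(BB'\epsilon^2/n)$ yields the claimed $T = O(n/(BB'\epsilon^4))$ iteration complexity.

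The principal obstacle is the interplay between the $n$ lazily-updated scalars $\s_i$ and the $n$ individualized temperatures $\bftau_i$: each $\s_i$ drifts as $\bftau_i$ moves, but $\bftau_i$ is itself updated only when $\x_i$ appears in the mini-batch, so showing that the drift term in the $\s_i$ recursion remains $O(\eta^2)$ rather than inflating by an $n/B$ factor requires a careful coupling of the sampling indicator with the trajectory of $\bftau_i^t$. A secondary subtlety is establishing the $\w$-independent upper bound $\tau_{\max}$, which is needed both for the smoothness of $F$ and for the projection step in Algorithm~\ref{algo:sogclr_dro} to yield meaningful stationarity of $\bar F$.
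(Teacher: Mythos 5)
Your proposal follows essentially the same route as the paper's proof: bound the optimal temperatures by $\tau_{\max}=\tau_0+C/\rho$ to obtain a compact box $\Omega$ on which $F$ is smooth, prove a tracking-error lemma for the lazily-updated $\s_i$ with contraction factor $O(\beta_0 B/n)$, feed this into the momentum variance-reduction recursions for $\v_t$ and $\u_i^t$ (the latter also contracting only at rate $B\beta_1/n$ due to lazy updates), and close via a descent lemma stated in terms of $\text{dist}(0,\hat\partial\bar F)$ with the same parameter balancing. The only minor deviation is that you obtain $\tau_{\max}$ from the first-order condition of the dual minimization in $\bftau_i$, whereas the paper bounds the Lagrange multiplier directly via a Slater point (Lemma~3 of Nedi\'c and Ozdaglar); both yield the same constant and the rest of the argument is unaffected.
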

\vspace*{-0.3in}{\noindent\bf Remark:} The theorem indicates that iSogCLR has the same $\O\left(\frac{1}{\epsilon^4}\right)$ complexity as SogCLR~\citep{yuan2022provable}. We refer the interested readers to Appendix~\ref{sec:convergence_analysis} for the proof, where we also exhibit the conditions similar to~\citep{yuan2022provable}.

\section{Experiments}

\begin{table*}[t]
\vspace{-4mm}
\caption{Linear evaluation results with 400 pretraining epochs on six unimodal image datasets. We report the average top-1 accuracies (\%) and standard deviation over 3 runs with different random seeds. Full results are provided in Table~\ref{tab:unimodal_datasets_results_full_balance} and~\ref{tab:unimodal_datasets_results_full_imbalance} in Appendix~\ref{app:add-exp-res}.}
\vspace{-4mm}
\label{tab:unimodal_datasets_results_main}
\vskip 0.0in
\begin{center}
\begin{small}
\begin{sc}
\renewcommand{\arraystretch}{0.8}
\begin{tabular}{p{2.2cm}p{2cm}p{2cm}p{2cm}|p{2cm}p{2.2cm}p{2cm}}
\toprule
Method & CIFAR10 & CIFAR100 & ImageNet100 & CIFAR10-LT & CIFAR100-LT & iNaturalist \\
\midrule
SimCLR & 88.74$\pm$0.18 & 62.34$\pm$0.09 & 79.96$\pm$0.20  & 77.09$\pm$0.13 & 49.33$\pm$0.12 & 91.52$\pm$0.17 \\
Barlow Twins & 87.39$\pm$0.14 & 62.28$\pm$0.13 & 79.16$\pm$0.13 & 75.94$\pm$0.08 & 48.39$\pm$0.14 & 91.89$\pm$0.21 \\
FlatCLR & 88.61$\pm$0.10 & 63.27$\pm$0.07 & 80.24$\pm$0.16 & 77.96$\pm$0.12 & 52.61$\pm$0.06 & 92.54$\pm$0.09 \\
Spectral CL & 88.77$\pm$0.09 & 63.06$\pm$0.18 & 80.48$\pm$0.08 & 76.38$\pm$0.21 & 51.86$\pm$0.16 & 92.13$\pm$0.16 \\
SogCLR & 88.93$\pm$0.11 & 63.14$\pm$0.12 & 80.54$\pm$0.14 & 77.70$\pm$0.07 & 52.35$\pm$0.08 & 92.60$\pm$0.08 \\
VICReg & 88.96$\pm$0.16  & 62.44$\pm$0.13 & 80.16$\pm$0.22 & 75.05$\pm$0.09 & 48.43$\pm$0.13 & 93.03$\pm$0.14 \\
SimCo & 88.86$\pm$0.12 & 62.67$\pm$0.06 & 79.73$\pm$0.17 & 77.71$\pm$0.13 & 51.06$\pm$0.09 & 92.10$\pm$0.12 \\
iSogCLR & \textbf{89.24}$\pm$0.15 & \textbf{63.82}$\pm$0.14 & \textbf{81.14}$\pm$0.19 & \textbf{78.37}$\pm$0.16 & \textbf{53.06}$\pm$0.12 & \textbf{93.08}$\pm$0.19 \\
\bottomrule
\end{tabular}
\end{sc}
\end{small}
\end{center}
\vskip -0.2in
\end{table*}

In this section, we conduct experiments on unimodal and bimodal datasets and observe that our algorithm outperforms prior strong baselines. Moreover, in-depth analyses show that the samples with different semantics are indeed assigned with suitable temperatures. We also perform ablation studies to better understand the behaviors of iSogCLR. 
The code to reproduce the results in this paper is available at \url{https://github.com/zhqiu/contrastive-learning-iSogCLR/}.

In unimodal setting, We compare our iSogCLR with five CL methods: {\bf SimCLR}~\citep{chen2020simple}, {\bf FlatCLR}~\citep{chen2021simpler}, {\bf SimCo}~\citep{zhang2022dual}, {\bf Spectral CL}~\citep{haochen2021provable}, {\bf SogCLR}~\citep{yuan2022provable}, and two non-contrastive methods: {\bf Barlow Twins}~\citep{zbontar2021barlow} and {\bf VICReg}~\citep{bardes2021vicreg}. In bimodal setting, we compare with {\bf CLIP}~\citep{radford2021learning}, {\bf CyCLIP}~\citep{goel2022cyclip}, and {\bf SogCLR}. 

For fair comparison, we set the hyper-parameters of all methods using grid search. SimCLR, FlatCLR, and SogCLR contain $\tau$, which is tuned in a range of $\{0.1,0.3,0.5,0.7\}$. For other methods, 
we fine-tune their hyper-parameters around the recommended values in their  papers. Following~\citet{radford2021learning,goel2022cyclip}, $\tau$ is directly optimized in CLIP and CyCLIP. 
The detailed implementation and data information are in Appendix~\ref{app:exp-impl} and~\ref{app:exp-data}, respectively.

\begin{table*}[t]
\vspace{-4mm}
\caption{Results on two bimodal downstream tasks. For image-text retrieval on Flickr30K and MSCOCO, we compute IR@$1$ and TR@$1$ for the Recall@$1$ on image-retrieval (IR) and text-retrieval (TR). For classification tasks, we compute top-$1$ accuracy (\%). We report the average of scores and standard deviation over 3 runs with different random seeds. Full results are in Table~\ref{tab:bimodal_zs_retrieval_full_flickr},~\ref{tab:bimodal_zs_retrieval_full_coco}, and~\ref{tab:bimodal_zs_classification_full} in Appendix~\ref{app:add-exp-res}.}
\vspace{-4mm}
\label{tab:bimodal_results_main}
\vskip 0.2in
\begin{center}
\begin{small}
\begin{sc}
\renewcommand{\arraystretch}{0.8}
\begin{tabular}{p{1.4cm}p{1.7cm}p{1.7cm}p{1.7cm}p{1.7cm}|p{1.7cm}p{1.7cm}p{1.7cm}}
\toprule
\multirow{1}{*}{\thead{Method}} &
\multicolumn{2}{c}{\thead{Flickr30K Retrieval}} &
\multicolumn{2}{c}{\thead{MSCOCO Retrieval}} &
\multicolumn{3}{c}{\thead{Zero-shot Classification top-1 Acc}} \\
\cmidrule(lr){2-3}
\cmidrule(lr){4-5}
\cmidrule(lr){6-8}
& IR@1 & TR@1 & IR@1 & TR@1 & CIFAR10 & CIFAR100 & ImageNet1K  \\
\midrule
CLIP & 40.98$\pm$0.22  & 50.90$\pm$0.17  & 21.32$\pm$0.12  & 26.98$\pm$0.21 & 60.63$\pm$0.19 & 30.70$\pm$0.11 & 36.27$\pm$0.17 \\
CyCLIP & 42.46$\pm$0.13 & 51.70$\pm$0.23 & 21.58$\pm$0.19 & 26.18$\pm$0.24 & 57.19$\pm$0.20 & 33.11$\pm$0.14 & 36.75$\pm$0.21 \\
SogCLR & 43.32$\pm$0.18 & 57.18$\pm$0.20 & 22.43$\pm$0.13 & 30.08$\pm$0.22 & \textbf{61.09}$\pm$0.24 & 33.26$\pm$0.12 & 37.46$\pm$0.19 \\
iSogCLR & \textbf{44.36}$\pm$0.12 & \textbf{60.20}$\pm$0.26 & \textbf{23.27}$\pm$0.18 & \textbf{32.72}$\pm$0.13 & 58.91$\pm$0.15 & \textbf{33.81}$\pm$0.18 & \textbf{40.72}$\pm$0.23 \\
\bottomrule
\end{tabular}
\end{sc}
\end{small}
\end{center}
\vskip -0.2in
\end{table*}

\subsection{Unimodal Experiments}

{\bf Data.} We consider three balanced datasets: CIFAR10, CIFAR100, ImageNet100~\citep{wu2019large}, and three imbalanced datasets: CIFAR10-LT, CIFAR100-LT, iNaturalist2018~\citep{iNat18}. 
ImageNet100 is a subset with 100 classes from ImageNet1K~\citep{russakovsky2015imagenet}. CIFAR10-LT and CIFAR100-LT are created following the Long-Tailed (LT) imbalance setting~\citep{cui2019class} and widely used~\citep{cao2019learning,cui2019class,qi2020attentional}. 
The iNaturalist dataset is a large-scale dataset with dramatically different number of images per category. We use its official training and validation splits.

{\bf Setup.} The backbone network, initial learning rate and batch size are set to ResNet-18, 0.8, and 128 for CIFAR datasets. While for ImageNet100 and iNaturalist2018, they are set to ResNet-50, 1.2, and 256, respectively. The projection head has three linear layers, each with 8192 output units. The first two layers of the projector are followed by a BN layer and rectified linear units. 
We employ LARS optimizer~\citep{you2017large} (with a momentum of 0.9 and weight decay of 1e-4) and cosine learning rate schedule. We also use learning rate warm-up for 10 epochs, i.e., learning rate is gradually increased to the maximum value. We resize input images to 224$\times$224 and follow the same image augmentation strategies as in SimCLR~\citep{chen2020simple} including random crop, color distortion, and Gaussian blur. 
For linear evaluation, we train the last classification layer using SGD with Nesterov momentum with a batch size of 256 for 100 epochs. The initial learning rate is set to 30.0 and decayed by 0.2 at 40, 60, and 80 epochs. We tune $\beta_0$ and $\rho$ in our algorithm from $\{0.7,0.8,0.9\}$ and $\{0.1,0.2,0.3,0.4\}$, respectively. $\tau_{\text{init}}$ and $\tau_0$ are set to 0.7 and 0.05 by default.

{\bf Results.} We present partial results in Table~\ref{tab:unimodal_datasets_results_main} and full results in Table~\ref{tab:unimodal_datasets_results_full_balance},~\ref{tab:unimodal_datasets_results_full_imbalance} in Appendix~\ref{app:add-exp-res}. First, comparing iSogCLR, SogCLR and SimCLR, we observe that (i) SogCLR is generally better than SimCLR, showing the advantage of optimizing a GCL under limited mini-batch sizes; and (ii) iSogCLR outperforms SogCLR in all cases, confirming the effectiveness of individualized temperatures. In Figure~\ref{fig:in_depth_cifar10}, we visualize the learned embeddings from these three methods on CIFAR10, where each color represents a class. Note that the class boundaries of iSogCLR are more clear than that of others, indicating that iSogCLR indeed improves feature qualities. 
We also observe that iSogCLR outperforms prior strong baselines, e.g., VICReg, Spectral CL. Besides, iSogCLR achieves larger improvements on imbalanced data, e.g., has relative improvements of 2.37\% and 7.56\%  over SimCLR on CIFAR100 and CIFAR100-LT, respectively. 

\subsection{Bimodal Experiments}

{\bf Data.} We adopt Conceptual Captions 3M (CC3M)~\citep{sharma2018conceptual} dataset, which is widely used in vision-and-language pretraining~\citep{li2021supervision,mu2022slip,goel2022cyclip}. Because some links of the images in CC3M have expired, the number of pairs we downloaded is about 2.85M, which is smaller than 3.3M in original paper.
During evaluation, we use two common bimodal datasets: Flickr30K~\citep{plummer2015flickr30k}, MSCOCO~\citep{lin2014microsoft}, obtained from the well-known Karpathy split~\citep{karpathy2015deep}, 
and three standard image datasets: CIFAR10, CIFAR100, and ImageNet1K.

{\bf Setup.} Following recent studies on bimodal SSL~\citep{li2021align,dou2022coarse}, we adopt ResNet-50 and DistilBert~\citep{sanh2019distilbert} as the image and text encoder, which are initialized with weights from unimodal pretraining. 
Specifically, we use the ResNet-50 model pretrained on ImageNet from timm library~\citep{rw2019timm}. The DistilBert model comes from huggingface library~\citep{wolf-etal-2020-transformers}, which is pretrained on BookCorpus~\citep{Zhu_2015_ICCV} and English Wikipedia. The output embedding of each encoder is then transformed to a lower-dimensional (256-d) representation by a linear layer and normalized for computing contrastive loss. We use a batch size of 512 for 30 epochs pre-training, where the image resolution is 256$\times$256. We employ Adam-W optimizer~\citep{loshchilov2017decoupled} with cosine learning rate decay. The learning rate is warmed-up to 2e-4 in the first 1000 iterations and decayed to 1e-6 by a cosine decay scheduler. We employ Adam-W optimizer~\citep{loshchilov2017decoupled} with the weight decay of 0.02. 
We tune $\beta_0$ and $\rho$ from $\{0.7,0.8,0.9\}$ and $\{5.8,6.0,6.2,6.4\}$, respectively. $\tau_{\text{init}}$ and $\tau_0$ are set to 0.01 and 0.005 by default. We evaluate models on two downstream tasks: cross-modal retrieval and image classification in zero-shot setting, following the widely-used evaluation protocol~\citep{radford2021learning,goel2022cyclip}.  

\begin{figure}[t]
\begin{minipage}[c]{0.155\textwidth}
\centering\includegraphics[width=1\textwidth]{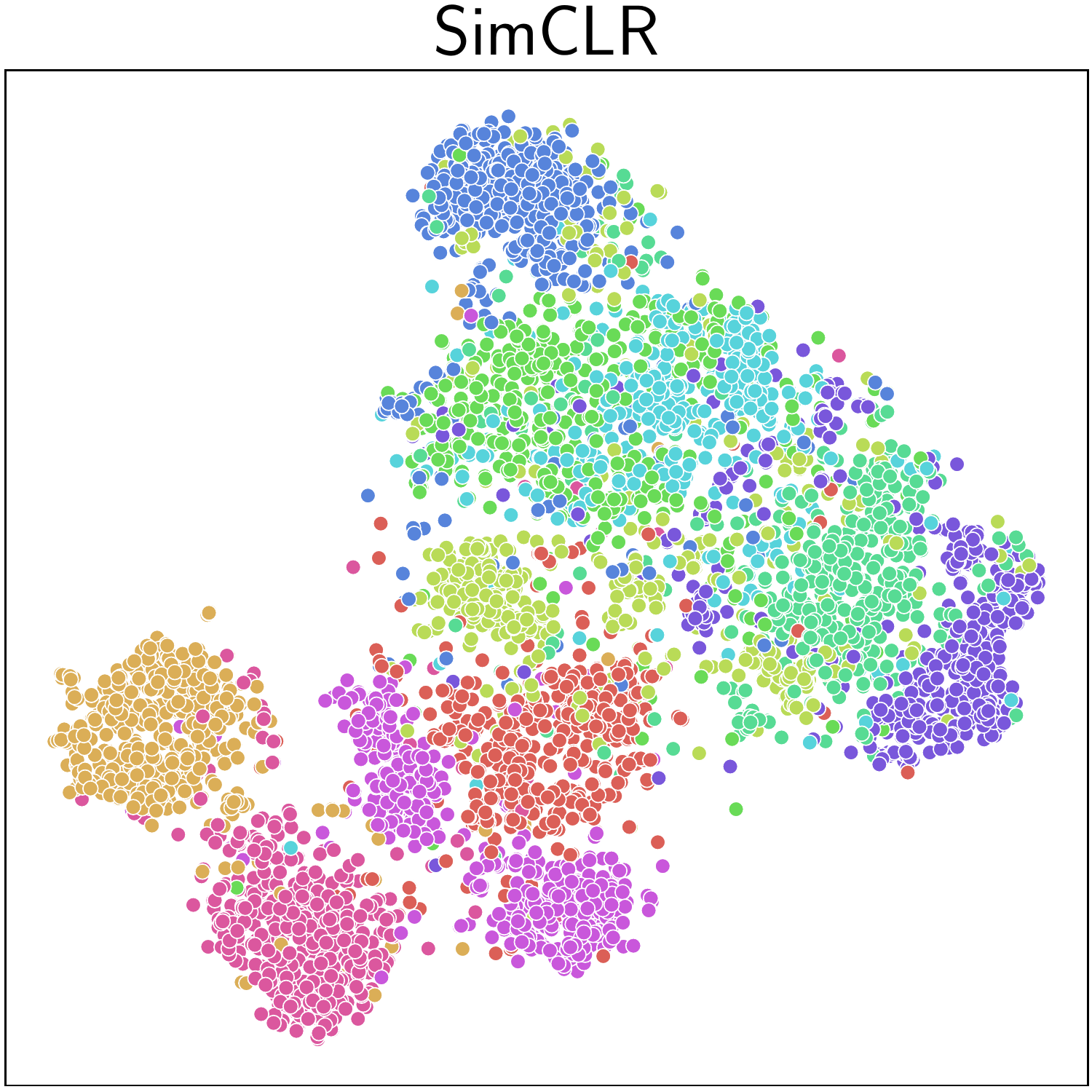}
\end{minipage}
\begin{minipage}[c]{0.155\textwidth}
\centering\includegraphics[width=1\textwidth]{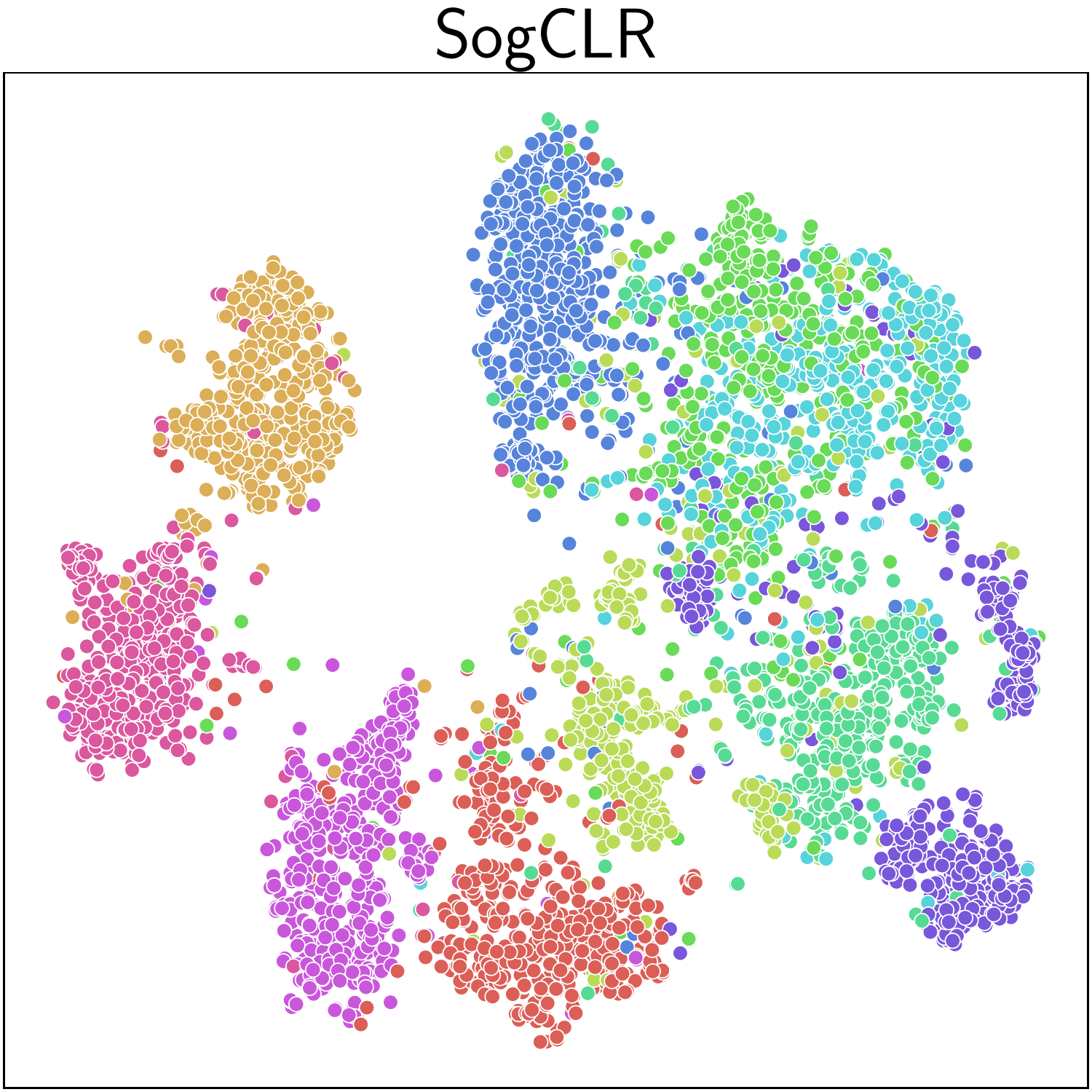}
\end{minipage}
\begin{minipage}[c]{0.155\textwidth}
\centering\includegraphics[width=1\textwidth]{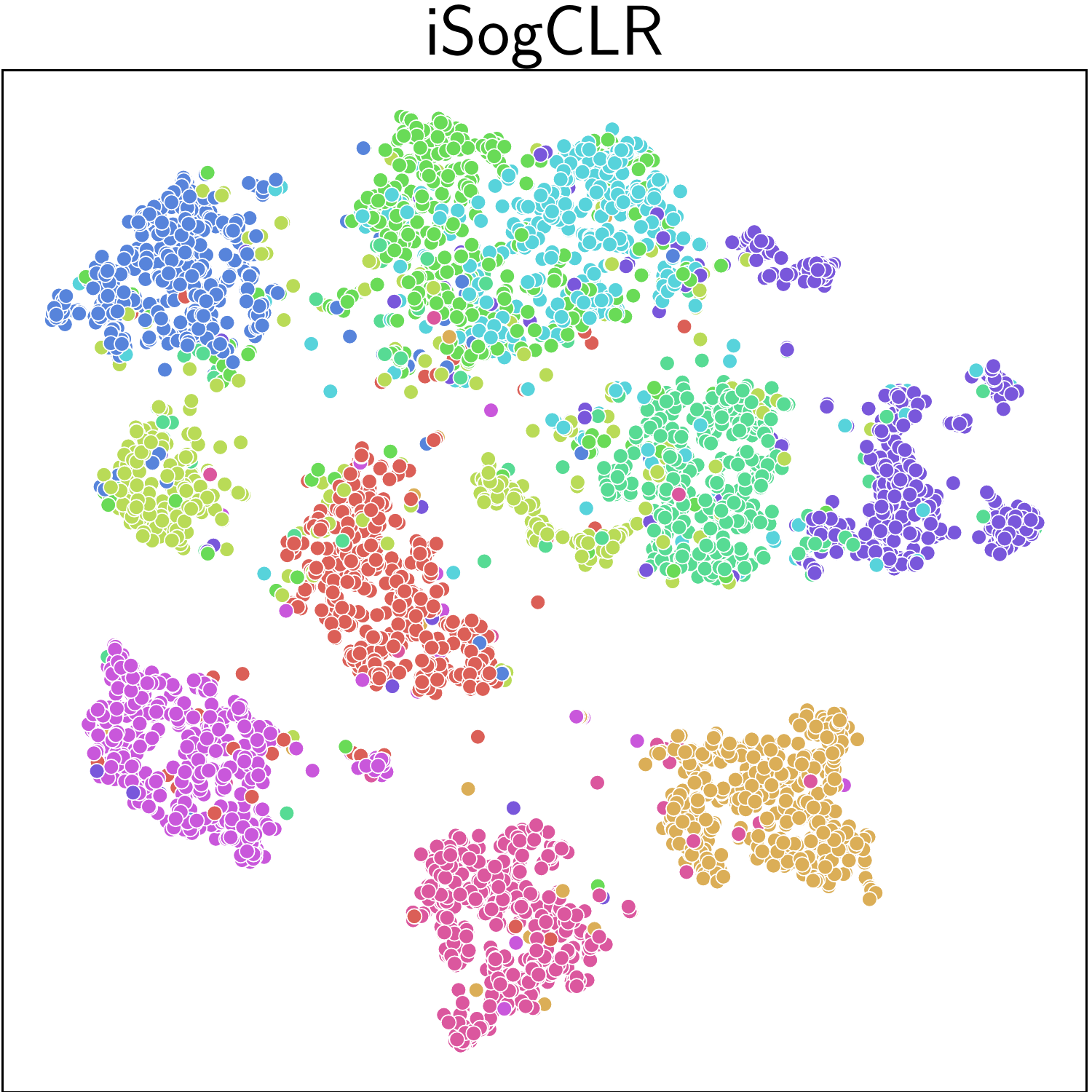}
\end{minipage}
\vspace{-0.2cm}
\caption{The arrangement of features (projected using t-SNE) for CIFAR10 samples learned by SimCLR, SogCLR and iSogCLR.}
\label{fig:in_depth_cifar10}
\vspace{0.05cm}

\begin{minipage}[c]{0.245\textwidth}
\centering\includegraphics[width=1\textwidth]{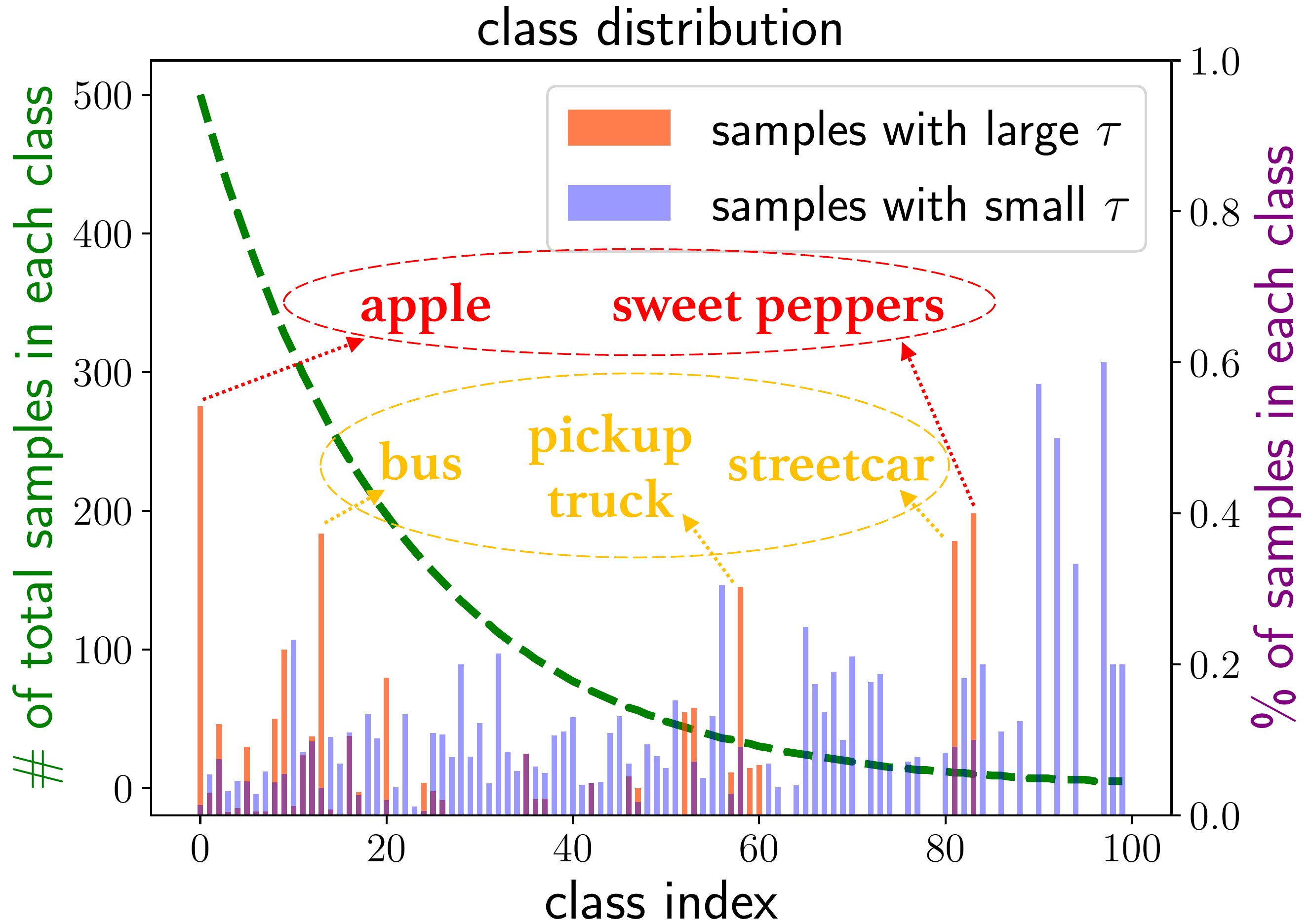}
\end{minipage}
\hspace*{0.05in}
\begin{minipage}[c]{0.22\textwidth}
\centering\includegraphics[width=1\textwidth]{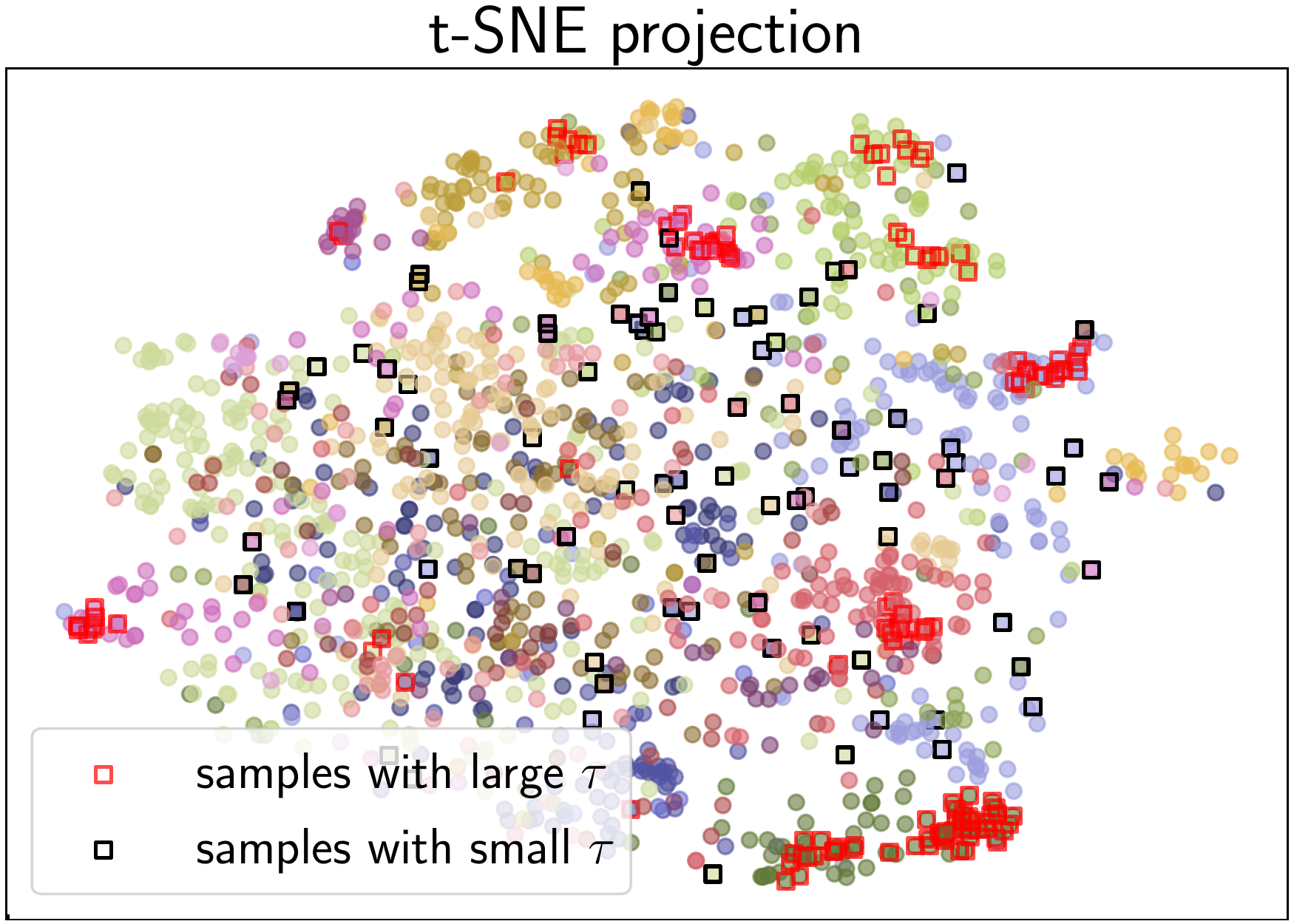}
\end{minipage}
\vspace{-0.2cm}
\caption{The class distributions and t-SNE projection for samples with large and small $\tau$ values in CIFAR100-LT. 
Left: The green dashed line and left axis denote the number of samples in each class, the red/blue bars and right axis denote the proportions of samples with large/small $\tau$ values in each class.
Right: Each color represents a \emph{superclass} in CIFAR100-LT.}
\label{fig:in_depth_cifar100_lt}
\vspace{-0.2cm}
\end{figure}

\begin{figure*}[t]
\begin{center}
\begin{minipage}[c]{0.48\textwidth}
\centering\includegraphics[width=1\textwidth]{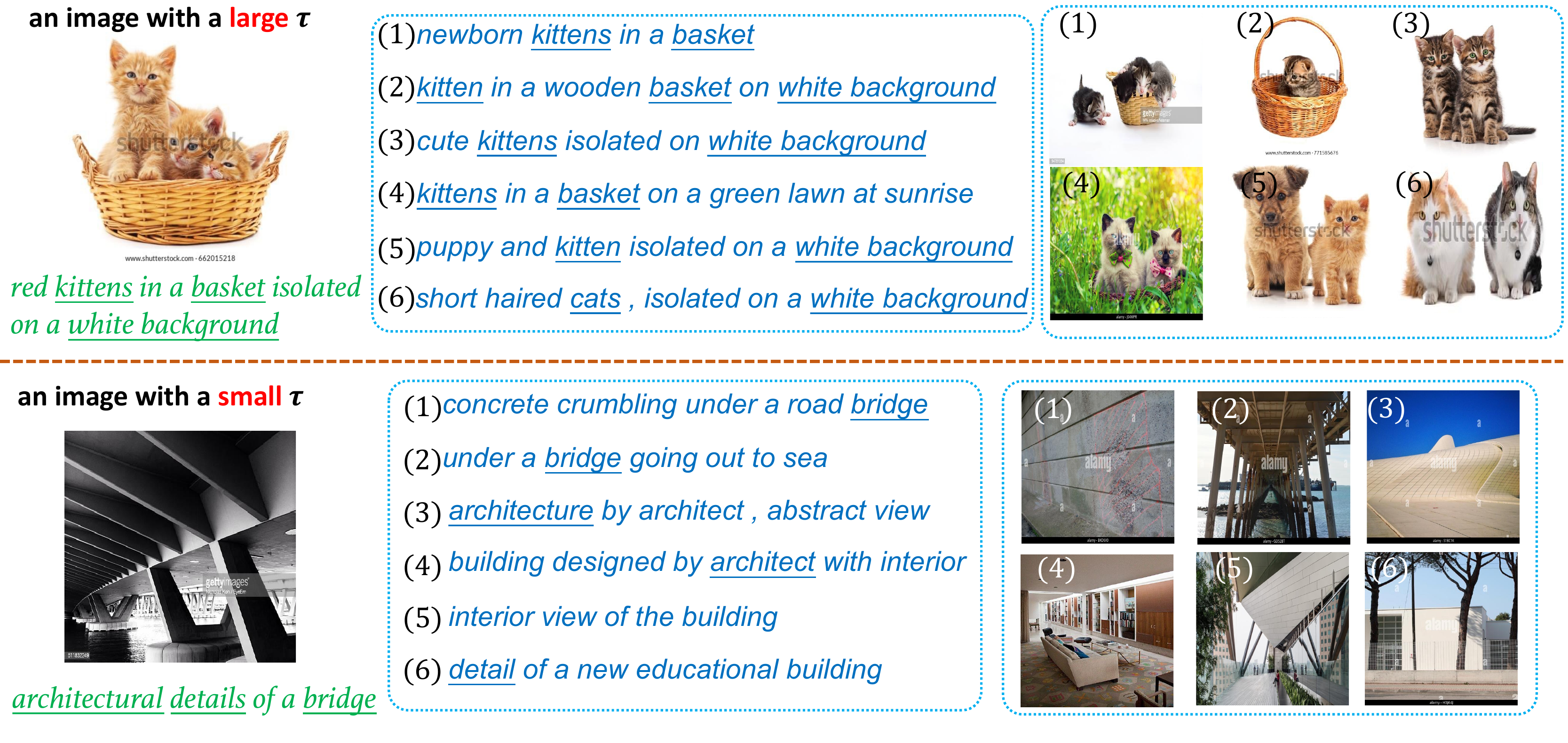}
\end{minipage}
\begin{minipage}[c]{0.48\textwidth}
\centering\includegraphics[width=1\textwidth]{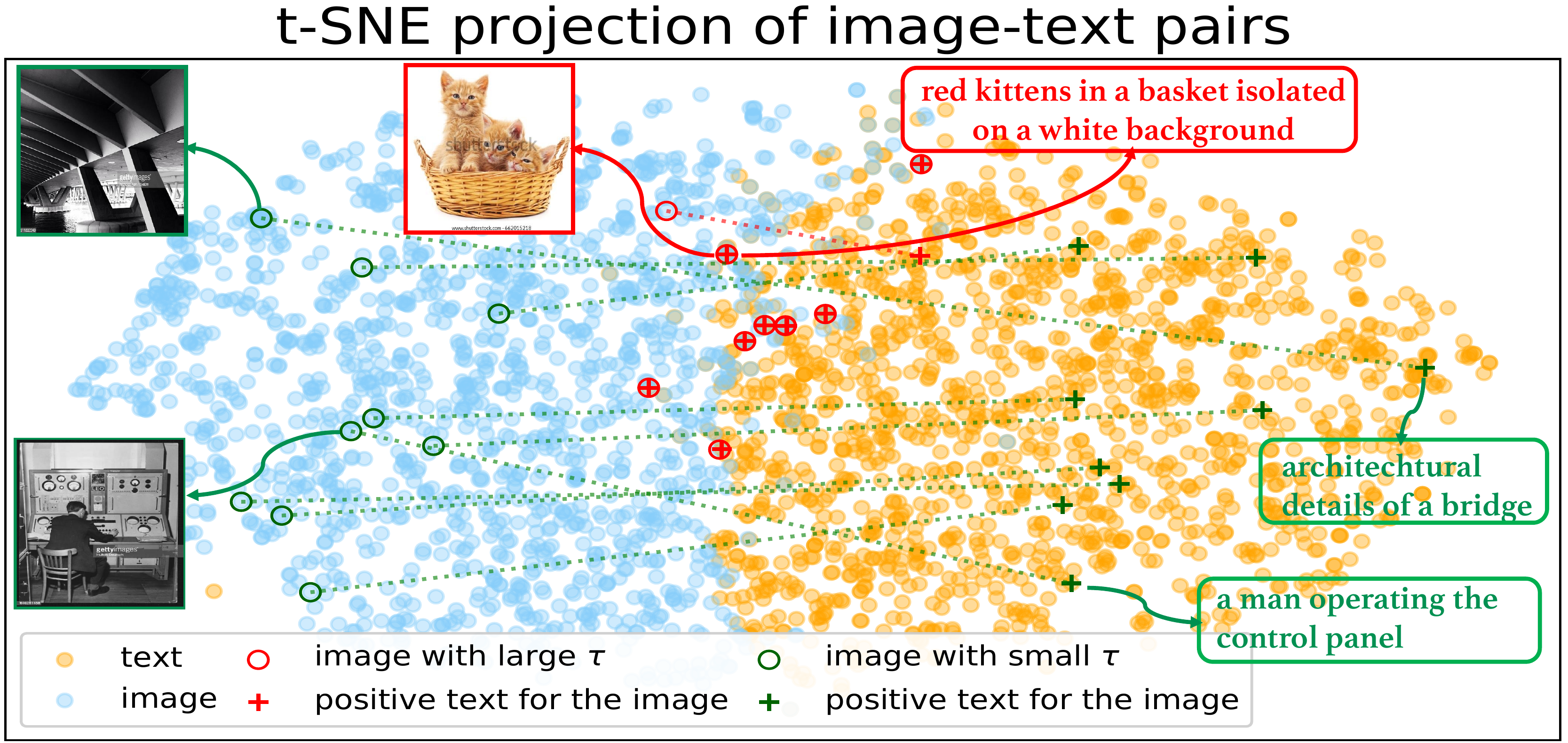}
\end{minipage}
\vspace{-0.3cm}
\caption{In-depth analyses on CC3M. Left: the contents of several hard negative image-text pairs of the cat and bridge images. Right: the tSNE of learned representations of sampled image-text pairs, with large and small temperatures marked by red and green, respectively.}
\label{fig:in_depth_cc3m}
\end{center}
\vspace{-0.7cm}
\end{figure*}

\begin{figure}[t]
\begin{minipage}[c]{0.235\textwidth}
\centering\includegraphics[width=1\textwidth]{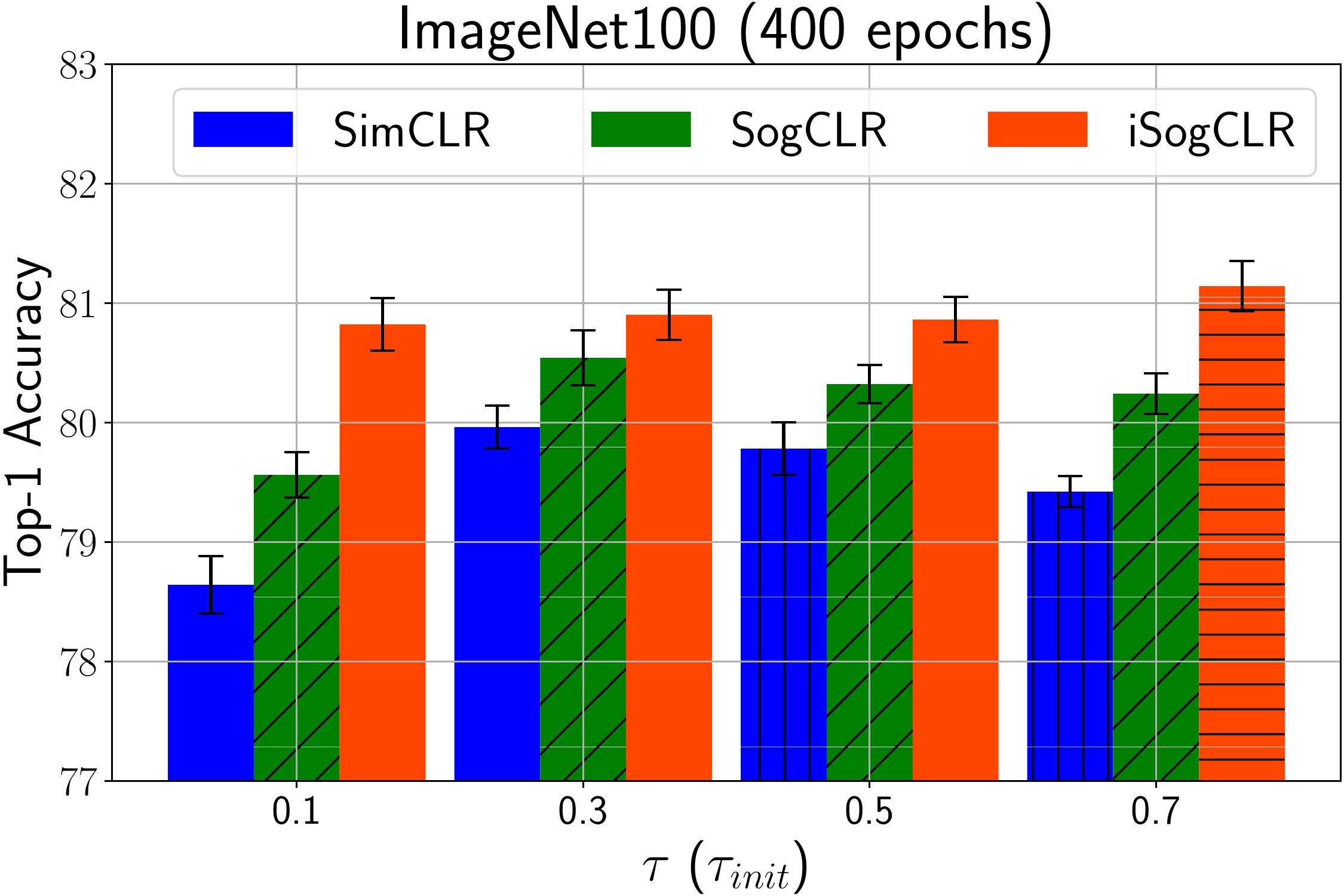}
\end{minipage}
\begin{minipage}[c]{0.235\textwidth}
\centering\includegraphics[width=1\textwidth]{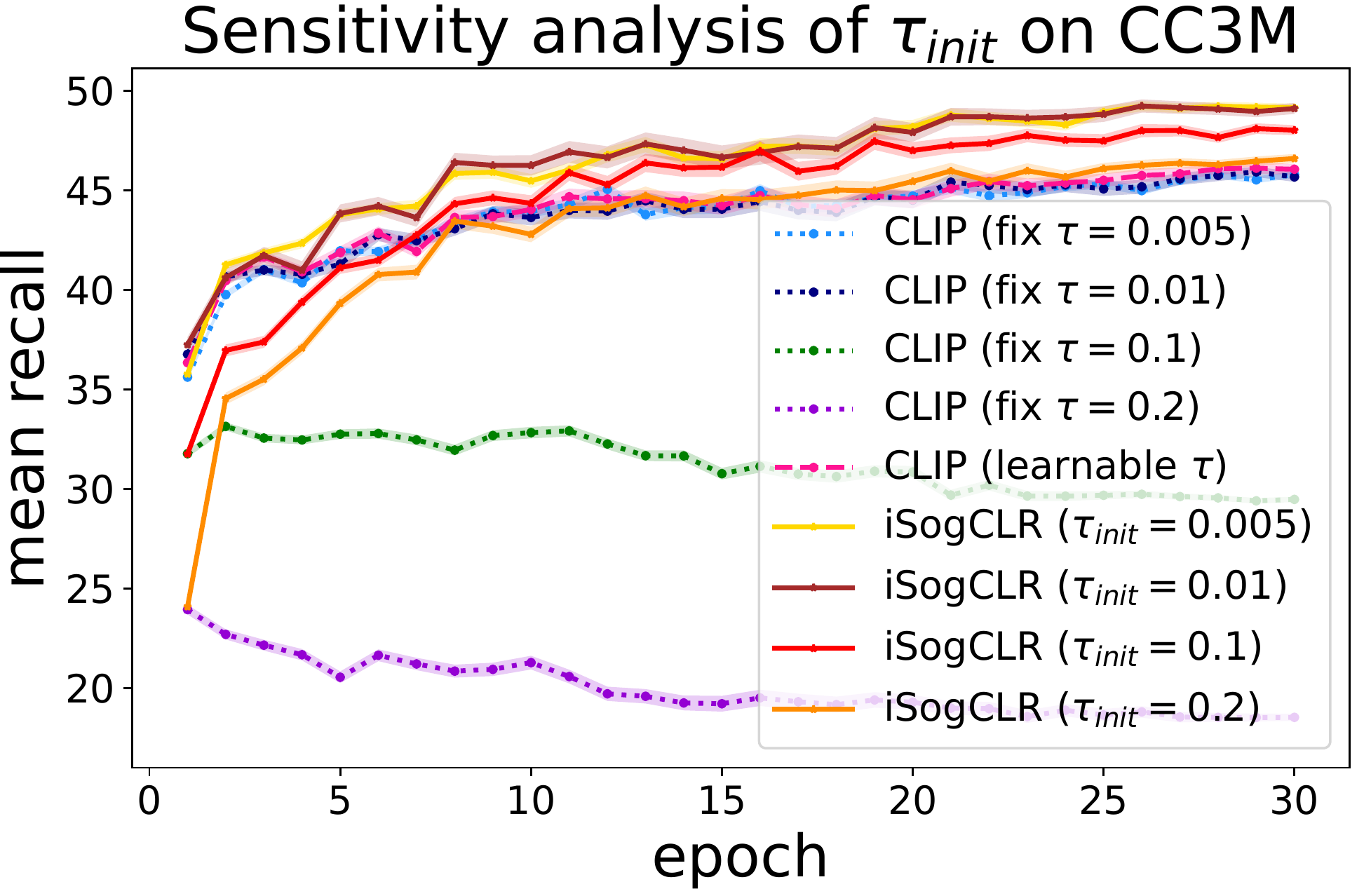}
\end{minipage}
\vspace{-0.2cm}
\caption{Effect of $\tau$ and $\tau_{\text{init}}$ on SimCLR/SogCLR and iSogCLR.}
\label{fig:ablation_tau_init}
\end{figure}

{\bf Results.} We present partial results in Table~\ref{tab:bimodal_results_main} and full results in Table~\ref{tab:bimodal_zs_retrieval_full_flickr},~\ref{tab:bimodal_zs_retrieval_full_coco},~\ref{tab:bimodal_zs_classification_full} in Appendix~\ref{app:add-exp-res}. Compared with baselines, our algorithm achieves significant improvements on both downstream tasks. Specifically, iSogCLR improves CLIP by 4\%$\sim$17\% and 2\%$\sim$8\% on image-text retrieval and zero-short classification, respectively. Large-scale bimodal data always contain long-tail underlying semantics~\citep{wang2022vlmixer}, thus the optimal $\tau$ of different samples may vary greatly. Hence iSogCLR with individualized temperatures is much more suitable than the methods with a global $\tau$.

\subsection{In-depth Analyses}
\label{sec:in_depth_analyses}

Here, we demonstrate that iSogCLR indeed assigns suitable temperatures to samples with different types of semantics. Specifically, we consider the following two scenarios.



{\bf Unimodal data.} 
We use CIFAR100-LT to study the characteristics of samples with different $\tau$ values. First, we select top-600 samples with large temperatures and bottom-600 samples with small temperatures. The \emph{class distributions} of these two groups of samples are in the left of Figure~\ref{fig:in_depth_cifar100_lt}.  We observe that samples with small $\tau$ account for a higher proportion of tail classes. Interestingly, although some of samples belong to tail classes, e.g., `sweet peppers', `streetcar' and `pickup truck', they are semantically similar to some head classes, e.g., `apple', `bus'. Thus these samples actually have frequent semantics and iSogCLR correctly assigns them large $\tau$ values. The right part of Figure~\ref{fig:in_depth_cifar100_lt} shows the projection of samples in these two groups. Note that most of samples with large $\tau$ values are in the centers of clusters, while most of samples with small $\tau$ values are separated from clusters. These results clearly show that iSogCLR makes samples with frequent semantics have large $\tau$ values to keep semantic structures, and makes samples with rare semantics have small $\tau$ values to be more discriminative.

{\bf Bimodal data.} First, we use the data of ``a kitten in a basket"  and  ``architectural details of a bridge"  for more illustrations. We show several hard negative pairs of the cat and bridge images in the left of Figure~\ref{fig:in_depth_cc3m}. 
One can observe that for the cat image, its hard negative pairs contain very similar semantics. For the bridge image, however, it has fewer hard negative pairs with similar semantics with it.  We also present learned features of 1500 random image-text pairs with highlights on several pairs with large and small $\tau$ values in Figure~\ref{fig:in_depth_cc3m} (right). Notice that images with large $\tau$ values are very close to their texts, while images with small $\tau$ values are far from their texts. The reason is that pairs with large $\tau$ values have frequent semantics, thus the model learns their patterns well and their features are well aligned. By contrast, the pairs with small $\tau$ values have rare semantics and their features are not learned so well. These results show that iSogCLR learns suitable temperatures for samples with different semantics. More samples in CC3M data are provided in Figure~\ref{fig:app_large_tau_images_cc3m} and~\ref{fig:app_small_tau_images_cc3m} in Appendix~\ref{app:add-exp-res}, showing that images with large $\tau$ values are related to frequent human activities or life scenes, while images with small $\tau$ values correspond to rare activities or scenes.


\begin{figure}[t]
\begin{minipage}{0.24\textwidth}
  \includegraphics[width=\linewidth]{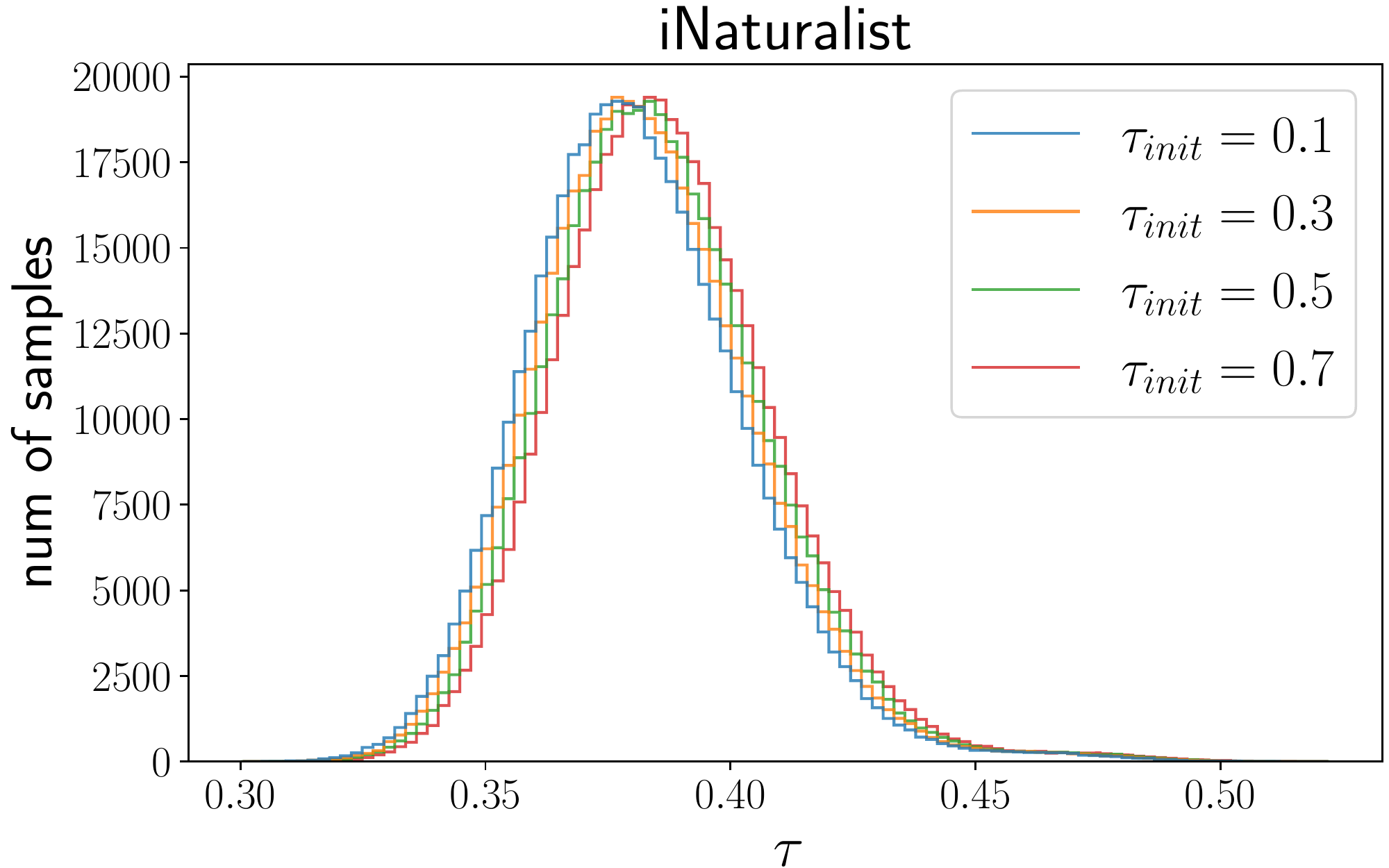}
\end{minipage}\hfil
\begin{minipage}{0.24\textwidth}
  \includegraphics[width=1.0\linewidth]{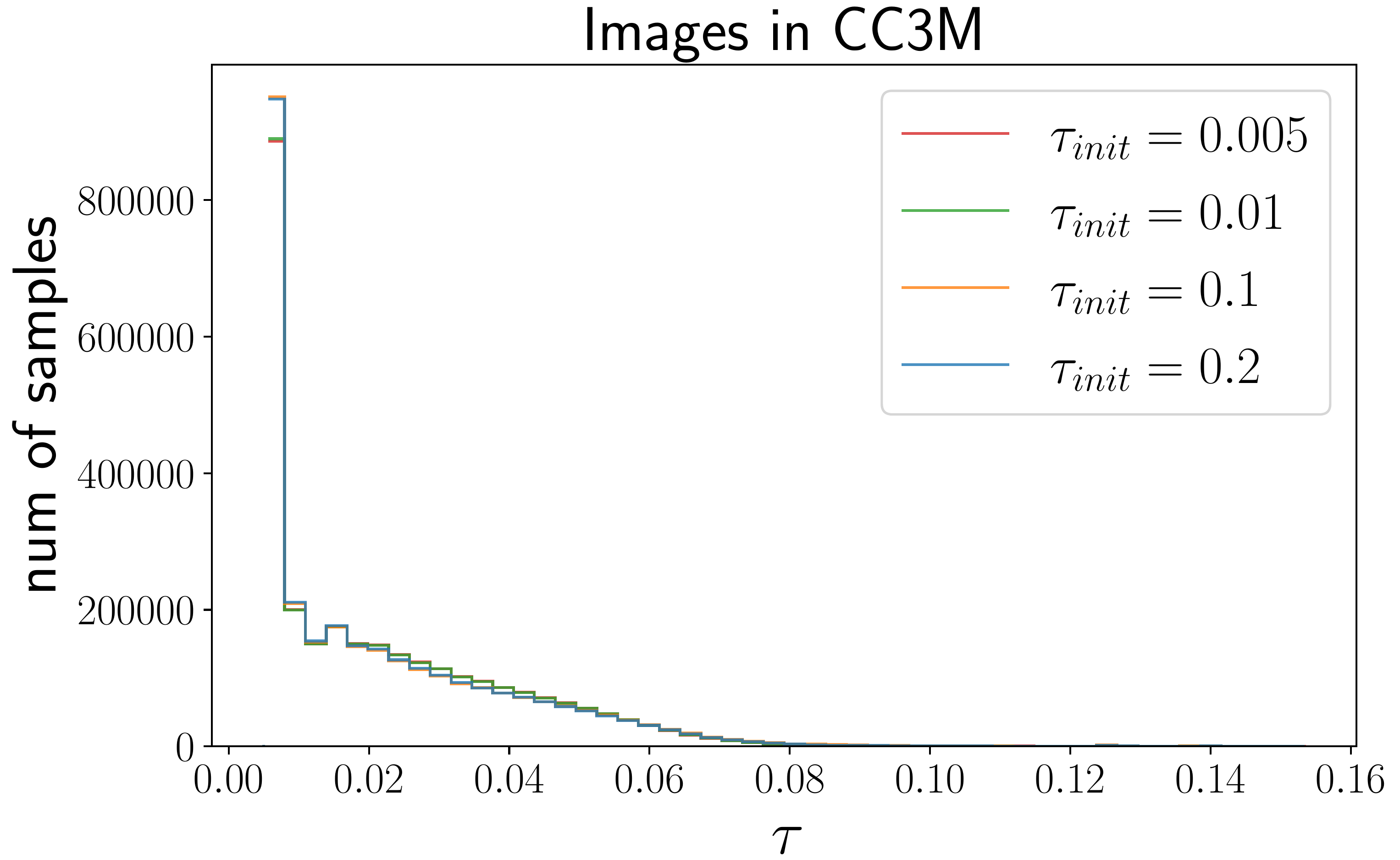}
\end{minipage}
\vspace{-0.25cm}
\caption{Final distributions of learned temperatures.}
\label{fig:ablation_properties_learn_tau_imagenet100}
\vskip -0.0in
\end{figure}

\subsection{Ablation Studies}
\vspace*{-0.05in}

In this section, we conduct extensive ablation studies to shed light on the behaviors of iSogCLR. First, we study the effect of different $\tau_{\text{init}}$ values on the final performance and convergence of iSogCLR, and provide the results in Figure~\ref{fig:ablation_tau_init}. From the left part of Figure~\ref{fig:ablation_tau_init}, one can observe that iSogCLR is not sensitive to $\tau_{\text{init}}$ and always outperforms SimCLR/SogCLR with a tuned $\tau$. More results are in Table~\ref{tab:ablation_temperature} in Appendix~\ref{app:add-exp-res}. The results on CC3M in the right of Figure~\ref{fig:ablation_tau_init} indicate that CLIP fails to converge when $\tau$ is fixed to a large value (e.g., 0.1, 0.2). On the contrary, regardless of the values of $\tau_{\text{init}}$, iSogCLR converges well and matches or outperforms CLIP with the tuned or learned $\tau$.

We further visualize the final distributions of learned temperatures on different datasets in Figure~\ref{fig:ablation_properties_learn_tau_imagenet100}. Note that these distributions are similar regardless of $\tau_{\text{init}}$ values. We also observe that the distributions on unimodal data usually follow a Gaussian distribution, while that on bimodal CC3M data has a long-tail (cf. Figure~\ref{fig:app_tau_dists} in Appendix~\ref{app:add-exp-res} for more results). It is interesting to observe that the learned temperatures for bimodal dataset are smaller than those for image datasets, which is consistent with the literature found by manual tuning~\citep{chen2020simple,liang2022mind}.  

Due to the limited space, more studies are provided in Appendix~\ref{app:add-exp-res}. In particular, the results about the effect of hyper-parameter $\rho$ in Table~\ref{tab:ablation_rho_full} indicate that iSogCLR is not sensitive to $\rho$. We also compare with other heuristic baselines with individualized learnable temperatures in Appendix~\ref{app:add-exp-res}, and show that our method is more advantageous for learning individualized temperatures (cf. Table~\ref{tab:compare_with_TaU_simclr}).

\section{Conclusion}
\vspace*{-0.05in}

In this work, we propose a novel method named iSogCLR for contrastive SSL with automatic temperature individualization. We first design a novel robust global contrastive objective based on DRO. Then we propose a provable stochastic algorithm. Theoretical and experimental results show that iSogCLR finds suitable temperatures for different samples. Comprehensive experiments demonstrate the effectiveness of iSogCLR on both unimodal and bimodal tasks.

\bibliography{tau-sogclr}

\begin{thebibliography}{84}
\providecommand{\natexlab}[1]{#1}
\providecommand{\url}[1]{\texttt{#1}}
\expandafter\ifx\csname urlstyle\endcsname\relax
  \providecommand{\doi}[1]{doi: #1}\else
  \providecommand{\doi}{doi: \begingroup \urlstyle{rm}\Url}\fi

\bibitem[Aberdam et~al.(2021)Aberdam, Litman, Tsiper, Anschel, Slossberg,
  Mazor, Manmatha, and Perona]{aberdam2021sequence}
Aberdam, A., Litman, R., Tsiper, S., Anschel, O., Slossberg, R., Mazor, S.,
  Manmatha, R., and Perona, P.
\newblock Sequence-to-sequence contrastive learning for text recognition.
\newblock In \emph{Proceedings of the 34th IEEE/CVF Conference on Computer
  Vision and Pattern Recognition}, pp.\  15302--15312, 2021.

\bibitem[Bardes et~al.(2021)Bardes, Ponce, and LeCun]{bardes2021vicreg}
Bardes, A., Ponce, J., and LeCun, Y.
\newblock Vicreg: Variance-invariance-covariance regularization for
  self-supervised learning.
\newblock \emph{arXiv preprint arXiv:2105.04906}, 2021.

\bibitem[Ben-Tal et~al.(2013)Ben-Tal, Den~Hertog, De~Waegenaere, Melenberg, and
  Rennen]{ben2013robust}
Ben-Tal, A., Den~Hertog, D., De~Waegenaere, A., Melenberg, B., and Rennen, G.
\newblock Robust solutions of optimization problems affected by uncertain
  probabilities.
\newblock \emph{Management Science}, 59\penalty0 (2):\penalty0 341--357, 2013.

\bibitem[Bertsimas et~al.(2018)Bertsimas, Gupta, and Kallus]{bertsimas2018data}
Bertsimas, D., Gupta, V., and Kallus, N.
\newblock Data-driven robust optimization.
\newblock \emph{Mathematical Programming}, 167\penalty0 (2):\penalty0 235--292,
  2018.

\bibitem[Blanchet et~al.(2019)Blanchet, Kang, and Murthy]{blanchet2019robust}
Blanchet, J., Kang, Y., and Murthy, K.
\newblock Robust wasserstein profile inference and applications to machine
  learning.
\newblock \emph{Journal of Applied Probability}, 56\penalty0 (3):\penalty0
  830--857, 2019.

\bibitem[Cao et~al.(2019)Cao, Wei, Gaidon, Arechiga, and Ma]{cao2019learning}
Cao, K., Wei, C., Gaidon, A., Arechiga, N., and Ma, T.
\newblock Learning imbalanced datasets with label-distribution-aware margin
  loss.
\newblock In \emph{Advances in Neural Information Processing Systems},
  volume~32, pp.\  1565--1576, 2019.

\bibitem[Caron et~al.(2020)Caron, Misra, Mairal, Goyal, Bojanowski, and
  Joulin]{caron2020unsupervised}
Caron, M., Misra, I., Mairal, J., Goyal, P., Bojanowski, P., and Joulin, A.
\newblock Unsupervised learning of visual features by contrasting cluster
  assignments.
\newblock In \emph{Advances in Neural Information Processing Systems},
  volume~33, pp.\  9912--9924, 2020.

\bibitem[Chen et~al.(2021)Chen, Gan, Li, Guo, Chen, Gao, Chung, Xu, Zeng, Lu,
  et~al.]{chen2021simpler}
Chen, J., Gan, Z., Li, X., Guo, Q., Chen, L., Gao, S., Chung, T., Xu, Y., Zeng,
  B., Lu, W., et~al.
\newblock Simpler, faster, stronger: Breaking the log-k curse on contrastive
  learners with flatnce.
\newblock \emph{arXiv preprint arXiv:2107.01152}, 2021.

\bibitem[Chen et~al.(2020)Chen, Kornblith, Norouzi, and Hinton]{chen2020simple}
Chen, T., Kornblith, S., Norouzi, M., and Hinton, G.
\newblock A simple framework for contrastive learning of visual
  representations.
\newblock In \emph{Proceedings of the 37th International Conference on Machine
  Learning}, pp.\  1597--1607, 2020.

\bibitem[Chen \& He(2021)Chen and He]{chen2021exploring}
Chen, X. and He, K.
\newblock Exploring simple siamese representation learning.
\newblock In \emph{Proceedings of the 34th IEEE/CVF Conference on Computer
  Vision and Pattern Recognition}, pp.\  15750--15758, 2021.

\bibitem[Chuang et~al.(2020)Chuang, Robinson, Lin, Torralba, and
  Jegelka]{chuang2020debiased}
Chuang, C.-Y., Robinson, J., Lin, Y.-C., Torralba, A., and Jegelka, S.
\newblock Debiased contrastive learning.
\newblock In \emph{Advances in Neural Information Processing Systems},
  volume~33, pp.\  8765--8775, 2020.

\bibitem[Cui et~al.(2019)Cui, Jia, Lin, Song, and Belongie]{cui2019class}
Cui, Y., Jia, M., Lin, T.-Y., Song, Y., and Belongie, S.
\newblock Class-balanced loss based on effective number of samples.
\newblock In \emph{Proceedings of the 32rd IEEE/CVF conference on computer
  vision and pattern recognition}, pp.\  9268--9277, 2019.

\bibitem[Dou et~al.(2022)Dou, Kamath, Gan, Zhang, Wang, Li, Liu, Liu, LeCun,
  Peng, et~al.]{dou2022coarse}
Dou, Z.-Y., Kamath, A., Gan, Z., Zhang, P., Wang, J., Li, L., Liu, Z., Liu, C.,
  LeCun, Y., Peng, N., et~al.
\newblock Coarse-to-fine vision-language pre-training with fusion in the
  backbone.
\newblock In \emph{Advances in Neural Information Processing Systems},
  volume~35, pp.\  9694--9705, 2022.

\bibitem[Duchi et~al.(2021)Duchi, Glynn, and Namkoong]{duchi2021statistics}
Duchi, J.~C., Glynn, P.~W., and Namkoong, H.
\newblock Statistics of robust optimization: A generalized empirical likelihood
  approach.
\newblock \emph{Mathematics of Operations Research}, 46\penalty0 (3):\penalty0
  946--969, 2021.

\bibitem[Dwibedi et~al.(2021)Dwibedi, Aytar, Tompson, Sermanet, and
  Zisserman]{dwibedi2021little}
Dwibedi, D., Aytar, Y., Tompson, J., Sermanet, P., and Zisserman, A.
\newblock With a little help from my friends: Nearest-neighbor contrastive
  learning of visual representations.
\newblock In \emph{Proceedings of the 34th IEEE/CVF International Conference on
  Computer Vision}, pp.\  9588--9597, 2021.

\bibitem[Ermolov et~al.(2021)Ermolov, Siarohin, Sangineto, and
  Sebe]{ermolov2021whitening}
Ermolov, A., Siarohin, A., Sangineto, E., and Sebe, N.
\newblock Whitening for self-supervised representation learning.
\newblock In \emph{Proceedings of the 38th International Conference on Machine
  Learning}, pp.\  3015--3024, 2021.

\bibitem[Eun et~al.(2020)Eun, Moon, Park, Jung, and Kim]{eun2020learning}
Eun, H., Moon, J., Park, J., Jung, C., and Kim, C.
\newblock Learning to discriminate information for online action detection.
\newblock In \emph{Proceedings of the 33rd IEEE/CVF Conference on Computer
  Vision and Pattern Recognition}, pp.\  809--818, 2020.

\bibitem[Feldman(2020)]{feldman2020does}
Feldman, V.
\newblock Does learning require memorization? a short tale about a long tail.
\newblock In \emph{Proceedings of the 52nd Annual ACM SIGACT Symposium on
  Theory of Computing}, pp.\  954--959, 2020.

\bibitem[Ge et~al.(2021)Ge, Mishra, Li, Wang, and Jacobs]{ge2021robust}
Ge, S., Mishra, S., Li, C.-L., Wang, H., and Jacobs, D.
\newblock Robust contrastive learning using negative samples with diminished
  semantics.
\newblock In \emph{Advances in Neural Information Processing Systems},
  volume~34, pp.\  27356--27368, 2021.

\bibitem[Goel et~al.(2022)Goel, Bansal, Bhatia, Rossi, Vinay, and
  Grover]{goel2022cyclip}
Goel, S., Bansal, H., Bhatia, S., Rossi, R.~A., Vinay, V., and Grover, A.
\newblock Cyclip: Cyclic contrastive language-image pretraining.
\newblock \emph{arXiv preprint arXiv:2205.14459}, 2022.

\bibitem[Goyal et~al.(2021)Goyal, Caron, Lefaudeux, Xu, Wang, Pai, Singh,
  Liptchinsky, Misra, Joulin, et~al.]{goyal2021self}
Goyal, P., Caron, M., Lefaudeux, B., Xu, M., Wang, P., Pai, V., Singh, M.,
  Liptchinsky, V., Misra, I., Joulin, A., et~al.
\newblock Self-supervised pretraining of visual features in the wild.
\newblock \emph{arXiv preprint arXiv:2103.01988}, 2021.

\bibitem[Grill et~al.(2020)Grill, Strub, Altch{\'e}, Tallec, Richemond,
  Buchatskaya, Doersch, Avila~Pires, Guo, Gheshlaghi~Azar,
  et~al.]{grill2020bootstrap}
Grill, J.-B., Strub, F., Altch{\'e}, F., Tallec, C., Richemond, P.,
  Buchatskaya, E., Doersch, C., Avila~Pires, B., Guo, Z., Gheshlaghi~Azar, M.,
  et~al.
\newblock Bootstrap your own latent-a new approach to self-supervised learning.
\newblock In \emph{Advances in Neural Information Processing Systems},
  volume~33, pp.\  21271--21284, 2020.

\bibitem[Guo et~al.(2021)Guo, Xu, Yin, Jin, and Yang]{guo2021stochastic}
Guo, Z., Xu, Y., Yin, W., Jin, R., and Yang, T.
\newblock On stochastic moving-average estimators for non-convex optimization.
\newblock \emph{arXiv preprint arXiv:2104.14840}, 2021.

\bibitem[G{\"u}rb{\"u}zbalaban et~al.(2022)G{\"u}rb{\"u}zbalaban,
  Ruszczy{\'n}ski, and Zhu]{gurbuzbalaban2022stochastic}
G{\"u}rb{\"u}zbalaban, M., Ruszczy{\'n}ski, A., and Zhu, L.
\newblock A stochastic subgradient method for distributionally robust
  non-convex and non-smooth learning.
\newblock \emph{Journal of Optimization Theory and Applications}, 194\penalty0
  (3):\penalty0 1014--1041, 2022.

\bibitem[HaoChen et~al.(2021)HaoChen, Wei, Gaidon, and Ma]{haochen2021provable}
HaoChen, J.~Z., Wei, C., Gaidon, A., and Ma, T.
\newblock Provable guarantees for self-supervised deep learning with spectral
  contrastive loss.
\newblock In \emph{Advances in Neural Information Processing Systems},
  volume~34, pp.\  5000--5011, 2021.

\bibitem[He et~al.(2020)He, Fan, Wu, Xie, and Girshick]{he2020momentum}
He, K., Fan, H., Wu, Y., Xie, S., and Girshick, R.
\newblock Momentum contrast for unsupervised visual representation learning.
\newblock In \emph{Proceedings of the 33rd IEEE/CVF Conference on Computer
  Vision and Pattern Recognition}, pp.\  9729--9738, 2020.

\bibitem[Hinton et~al.(2015)Hinton, Vinyals, Dean,
  et~al.]{hinton2015distilling}
Hinton, G., Vinyals, O., Dean, J., et~al.
\newblock Distilling the knowledge in a neural network.
\newblock \emph{arXiv preprint arXiv:1503.02531}, 2\penalty0 (7), 2015.

\bibitem[Horn et~al.(2018)Horn, Aodha, Song, Cui, Sun, Shepard, Adam, Perona,
  and Belongie]{iNat18}
Horn, G.~V., Aodha, O.~M., Song, Y., Cui, Y., Sun, C., Shepard, A., Adam, H.,
  Perona, P., and Belongie, S.~J.
\newblock The inaturalist species classification and detection dataset.
\newblock In \emph{Proceedings of the 31st IEEE/CVF Conference on Computer
  Vision and Pattern Recognition}, pp.\  8769--8778, 2018.

\bibitem[Huang et~al.(2022)Huang, Jin, Lu, Hou, Cheng, Fu, Shen, and
  Feng]{huang2022contrastive}
Huang, Z., Jin, X., Lu, C., Hou, Q., Cheng, M.-M., Fu, D., Shen, X., and Feng,
  J.
\newblock Contrastive masked autoencoders are stronger vision learners.
\newblock \emph{arXiv preprint arXiv:2207.13532}, 2022.

\bibitem[Jia et~al.(2021)Jia, Yang, Xia, Chen, Parekh, Pham, Le, Sung, Li, and
  Duerig]{jia2021scaling}
Jia, C., Yang, Y., Xia, Y., Chen, Y.-T., Parekh, Z., Pham, H., Le, Q., Sung,
  Y.-H., Li, Z., and Duerig, T.
\newblock Scaling up visual and vision-language representation learning with
  noisy text supervision.
\newblock In \emph{Proceedings of the 38th International Conference on Machine
  Learning}, pp.\  4904--4916, 2021.

\bibitem[Jin et~al.(2021)Jin, Zhang, Wang, and Wang]{jin2021non}
Jin, J., Zhang, B., Wang, H., and Wang, L.
\newblock Non-convex distributionally robust optimization: Non-asymptotic
  analysis.
\newblock In \emph{Advances in Neural Information Processing Systems},
  volume~34, pp.\  2771--2782, 2021.

\bibitem[Kalantidis et~al.(2020)Kalantidis, Sariyildiz, Pion, Weinzaepfel, and
  Larlus]{kalantidis2020hard}
Kalantidis, Y., Sariyildiz, M.~B., Pion, N., Weinzaepfel, P., and Larlus, D.
\newblock Hard negative mixing for contrastive learning.
\newblock In \emph{Advances in Neural Information Processing Systems},
  volume~33, pp.\  21798--21809, 2020.

\bibitem[Karpathy \& Fei-Fei(2015)Karpathy and Fei-Fei]{karpathy2015deep}
Karpathy, A. and Fei-Fei, L.
\newblock Deep visual-semantic alignments for generating image descriptions.
\newblock In \emph{Proceedings of the 28th IEEE/CVF Conference on Computer
  Vision and Pattern Recognition}, pp.\  3128--3137, 2015.

\bibitem[Khaertdinov et~al.(2021)Khaertdinov, Ghaleb, and
  Asteriadis]{khaertdinov2021contrastive}
Khaertdinov, B., Ghaleb, E., and Asteriadis, S.
\newblock Contrastive self-supervised learning for sensor-based human activity
  recognition.
\newblock In \emph{2021 IEEE International Joint Conference on Biometrics},
  pp.\  1--8, 2021.

\bibitem[Levy et~al.(2020)Levy, Carmon, Duchi, and Sidford]{levy2020large}
Levy, D., Carmon, Y., Duchi, J.~C., and Sidford, A.
\newblock Large-scale methods for distributionally robust optimization.
\newblock In \emph{Advances in Neural Information Processing Systems},
  volume~33, pp.\  8847--8860, 2020.

\bibitem[Li et~al.(2020)Li, Zhou, Xiong, and Hoi]{li2020prototypical}
Li, J., Zhou, P., Xiong, C., and Hoi, S.~C.
\newblock Prototypical contrastive learning of unsupervised representations.
\newblock \emph{arXiv preprint arXiv:2005.04966}, 2020.

\bibitem[Li et~al.(2021{\natexlab{a}})Li, Selvaraju, Gotmare, Joty, Xiong, and
  Hoi]{li2021align}
Li, J., Selvaraju, R., Gotmare, A., Joty, S., Xiong, C., and Hoi, S. C.~H.
\newblock Align before fuse: Vision and language representation learning with
  momentum distillation.
\newblock In \emph{Advances in Neural Information Processing Systems},
  volume~34, pp.\  9694--9705, 2021{\natexlab{a}}.

\bibitem[Li et~al.(2021{\natexlab{b}})Li, Liang, Zhao, Cui, Ouyang, Shao, Yu,
  and Yan]{li2021supervision}
Li, Y., Liang, F., Zhao, L., Cui, Y., Ouyang, W., Shao, J., Yu, F., and Yan, J.
\newblock Supervision exists everywhere: A data efficient contrastive
  language-image pre-training paradigm.
\newblock \emph{arXiv preprint arXiv:2110.05208}, 2021{\natexlab{b}}.

\bibitem[Liang et~al.(2022)Liang, Zhang, Kwon, Yeung, and Zou]{liang2022mind}
Liang, W., Zhang, Y., Kwon, Y., Yeung, S., and Zou, J.
\newblock Mind the gap: Understanding the modality gap in multi-modal
  contrastive representation learning.
\newblock \emph{arXiv preprint arXiv:2203.02053}, 2022.

\bibitem[Lin et~al.(2014)Lin, Maire, Belongie, Hays, Perona, Ramanan,
  Doll{\'a}r, and Zitnick]{lin2014microsoft}
Lin, T.-Y., Maire, M., Belongie, S., Hays, J., Perona, P., Ramanan, D.,
  Doll{\'a}r, P., and Zitnick, C.~L.
\newblock Microsoft coco: Common objects in context.
\newblock In \emph{Proceedings of the 11th European Conference on Computer
  Vision}, pp.\  740--755, 2014.

\bibitem[Loshchilov \& Hutter(2017)Loshchilov and
  Hutter]{loshchilov2017decoupled}
Loshchilov, I. and Hutter, F.
\newblock Decoupled weight decay regularization.
\newblock \emph{arXiv preprint arXiv:1711.05101}, 2017.

\bibitem[Mu et~al.(2022)Mu, Kirillov, Wagner, and Xie]{mu2022slip}
Mu, N., Kirillov, A., Wagner, D., and Xie, S.
\newblock Slip: Self-supervision meets language-image pre-training.
\newblock In \emph{Proceedings of the 19th European Conference on Computer
  Vision}, pp.\  529--544, 2022.

\bibitem[Namkoong \& Duchi(2017)Namkoong and Duchi]{namkoong2017variance}
Namkoong, H. and Duchi, J.~C.
\newblock Variance-based regularization with convex objectives.
\newblock In \emph{Advances in Neural Information Processing Systems},
  volume~30, pp.\  2450--2504, 2017.

\bibitem[Nedi{\'c} \& Ozdaglar(2009)Nedi{\'c} and
  Ozdaglar]{nedic2009subgradient}
Nedi{\'c}, A. and Ozdaglar, A.
\newblock Subgradient methods for saddle-point problems.
\newblock \emph{Journal of Optimization Theory and Applications}, 142\penalty0
  (1):\penalty0 205--228, 2009.

\bibitem[Oord et~al.(2018)Oord, Li, and Vinyals]{oord2018representation}
Oord, A. v.~d., Li, Y., and Vinyals, O.
\newblock Representation learning with contrastive predictive coding.
\newblock \emph{arXiv preprint arXiv:1807.03748}, 2018.

\bibitem[Plummer et~al.(2015)Plummer, Wang, Cervantes, Caicedo, Hockenmaier,
  and Lazebnik]{plummer2015flickr30k}
Plummer, B.~A., Wang, L., Cervantes, C.~M., Caicedo, J.~C., Hockenmaier, J.,
  and Lazebnik, S.
\newblock Flickr30k entities: Collecting region-to-phrase correspondences for
  richer image-to-sentence models.
\newblock In \emph{Proceedings of the IEEE International Conference on Computer
  Vision}, pp.\  2641--2649, 2015.

\bibitem[Qi et~al.(2020)Qi, Xu, Jin, Yin, and Yang]{qi2020attentional}
Qi, Q., Xu, Y., Jin, R., Yin, W., and Yang, T.
\newblock Attentional biased stochastic gradient for imbalanced classification.
\newblock \emph{arXiv preprint arXiv:2012.06951}, 2020.

\bibitem[Qi et~al.(2021)Qi, Luo, Xu, Ji, and Yang]{qi2021stochastic}
Qi, Q., Luo, Y., Xu, Z., Ji, S., and Yang, T.
\newblock Stochastic optimization of areas under precision-recall curves with
  provable convergence.
\newblock In \emph{Advances in Neural Information Processing Systems},
  volume~34, pp.\  1752--1765, 2021.

\bibitem[Qi et~al.(2022)Qi, Lyu, Bai, Yang, et~al.]{qi2022stochastic}
Qi, Q., Lyu, J., Bai, E.~W., Yang, T., et~al.
\newblock Stochastic constrained dro with a complexity independent of sample
  size.
\newblock \emph{arXiv preprint arXiv:2210.05740}, 2022.

\bibitem[Radford et~al.(2021)Radford, Kim, Hallacy, Ramesh, Goh, Agarwal,
  Sastry, Askell, Mishkin, Clark, et~al.]{radford2021learning}
Radford, A., Kim, J.~W., Hallacy, C., Ramesh, A., Goh, G., Agarwal, S., Sastry,
  G., Askell, A., Mishkin, P., Clark, J., et~al.
\newblock Learning transferable visual models from natural language
  supervision.
\newblock In \emph{Proceedings of the 38th International Conference on Machine
  Learning}, pp.\  8748--8763, 2021.

\bibitem[Richemond et~al.(2020)Richemond, Grill, Altch{\'e}, Tallec, Strub,
  Brock, Smith, De, Pascanu, Piot, et~al.]{richemond2020byol}
Richemond, P.~H., Grill, J.-B., Altch{\'e}, F., Tallec, C., Strub, F., Brock,
  A., Smith, S., De, S., Pascanu, R., Piot, B., et~al.
\newblock Byol works even without batch statistics.
\newblock \emph{arXiv preprint arXiv:2010.10241}, 2020.

\bibitem[Robinson et~al.(2021)Robinson, Chuang, Sra, and
  Jegelka]{robinson2021contrastive}
Robinson, J.~D., Chuang, C.-Y., Sra, S., and Jegelka, S.
\newblock Contrastive learning with hard negative samples.
\newblock In \emph{the 9th International Conference on Learning
  Representations}, 2021.

\bibitem[Rockafellar \& Wets(2009)Rockafellar and
  Wets]{rockafellar2009variational}
Rockafellar, R.~T. and Wets, R. J.-B.
\newblock \emph{Variational analysis}, volume 317.
\newblock Springer Science \& Business Media, 2009.

\bibitem[Russakovsky et~al.(2015)Russakovsky, Deng, Su, Krause, Satheesh, Ma,
  Huang, Karpathy, Khosla, Bernstein, et~al.]{russakovsky2015imagenet}
Russakovsky, O., Deng, J., Su, H., Krause, J., Satheesh, S., Ma, S., Huang, Z.,
  Karpathy, A., Khosla, A., Bernstein, M., et~al.
\newblock Imagenet large scale visual recognition challenge.
\newblock \emph{International Journal of Computer Vision}, 115\penalty0
  (3):\penalty0 211--252, 2015.

\bibitem[Sanh et~al.(2019)Sanh, Debut, Chaumond, and Wolf]{sanh2019distilbert}
Sanh, V., Debut, L., Chaumond, J., and Wolf, T.
\newblock Distilbert, a distilled version of bert: smaller, faster, cheaper and
  lighter.
\newblock \emph{arXiv preprint arXiv:1910.01108}, 2019.

\bibitem[Sharma et~al.(2018)Sharma, Ding, Goodman, and
  Soricut]{sharma2018conceptual}
Sharma, P., Ding, N., Goodman, S., and Soricut, R.
\newblock Conceptual captions: A cleaned, hypernymed, image alt-text dataset
  for automatic image captioning.
\newblock In \emph{Proceedings of the 56th Annual Meeting of the Association
  for Computational Linguistics}, pp.\  2556--2565, 2018.

\bibitem[Sion(1958)]{sion1958general}
Sion, M.
\newblock On general minimax theorems.
\newblock \emph{Pacific Journal of mathematics}, 8\penalty0 (1):\penalty0
  171--176, 1958.

\bibitem[Staib \& Jegelka(2019)Staib and Jegelka]{staib2019distributionally}
Staib, M. and Jegelka, S.
\newblock Distributionally robust optimization and generalization in kernel
  methods.
\newblock In \emph{Advances in Neural Information Processing Systems},
  volume~32, pp.\  9134--9144, 2019.

\bibitem[Tamkin et~al.(2021)Tamkin, Wu, and Goodman]{tamkin2020viewmaker}
Tamkin, A., Wu, M., and Goodman, N.
\newblock Viewmaker networks: Learning views for unsupervised representation
  learning.
\newblock In \emph{the 9th International Conference on Learning
  Representations}, 2021.

\bibitem[Tian et~al.(2020)Tian, Sun, Poole, Krishnan, Schmid, and
  Isola]{tian2020makes}
Tian, Y., Sun, C., Poole, B., Krishnan, D., Schmid, C., and Isola, P.
\newblock What makes for good views for contrastive learning?
\newblock In \emph{Advances in Neural Information Processing Systems},
  volume~33, pp.\  6827--6839, 2020.

\bibitem[Tomasev et~al.(2022)Tomasev, Bica, McWilliams, Buesing, Pascanu,
  Blundell, and Mitrovic]{ReLiCv2}
Tomasev, N., Bica, I., McWilliams, B., Buesing, L., Pascanu, R., Blundell, C.,
  and Mitrovic, J.
\newblock Pushing the limits of self-supervised resnets: Can we outperform
  supervised learning without labels on imagenet?
\newblock \emph{arXiv preprint arXiv:2201.05119}, 2022.

\bibitem[Wang \& Yang(2022)Wang and Yang]{wang2022finite}
Wang, B. and Yang, T.
\newblock Finite-sum compositional stochastic optimization: Theory and
  applications.
\newblock In \emph{Proceedings of the 38th International Conference on Machine
  Learning}, pp.\  23292--23317, 2022.

\bibitem[Wang \& Liu(2021)Wang and Liu]{wang2021understanding}
Wang, F. and Liu, H.
\newblock Understanding the behaviour of contrastive loss.
\newblock In \emph{Proceedings of the 34th IEEE/CVF Conference on Computer
  Vision and Pattern Recognition}, pp.\  2495--2504, 2021.

\bibitem[Wang et~al.(2022)Wang, Jiang, Lu, Zheng, Cheng, Yin, and
  Luo]{wang2022vlmixer}
Wang, T., Jiang, W., Lu, Z., Zheng, F., Cheng, R., Yin, C., and Luo, P.
\newblock Vlmixer: Unpaired vision-language pre-training via cross-modal
  cutmix.
\newblock In \emph{Proceedings of the 39th International Conference on Machine
  Learning}, pp.\  22680--22690, 2022.

\bibitem[Wang \& Qi(2021)Wang and Qi]{wang2021contrastive}
Wang, X. and Qi, G.-J.
\newblock Contrastive learning with stronger augmentations.
\newblock \emph{arXiv preprint arXiv:2104.07713}, 2021.

\bibitem[Wightman(2019)]{rw2019timm}
Wightman, R.
\newblock Pytorch image models.
\newblock \url{https://github.com/rwightman/pytorch-image-models}, 2019.

\bibitem[Wolf et~al.(2020)Wolf, Debut, Sanh, Chaumond, Delangue, Moi, Cistac,
  Rault, Louf, Funtowicz, Davison, Shleifer, von Platen, Ma, Jernite, Plu, Xu,
  Scao, Gugger, Drame, Lhoest, and Rush]{wolf-etal-2020-transformers}
Wolf, T., Debut, L., Sanh, V., Chaumond, J., Delangue, C., Moi, A., Cistac, P.,
  Rault, T., Louf, R., Funtowicz, M., Davison, J., Shleifer, S., von Platen,
  P., Ma, C., Jernite, Y., Plu, J., Xu, C., Scao, T.~L., Gugger, S., Drame, M.,
  Lhoest, Q., and Rush, A.~M.
\newblock Transformers: State-of-the-art natural language processing.
\newblock In \emph{Proceedings of the 2020 Conference on Empirical Methods in
  Natural Language Processing: System Demonstrations}, pp.\  38--45, 2020.

\bibitem[Wu et~al.(2020)Wu, Mosse, Zhuang, Yamins, and
  Goodman]{wu2020conditional}
Wu, M., Mosse, M., Zhuang, C., Yamins, D., and Goodman, N.
\newblock Conditional negative sampling for contrastive learning of visual
  representations.
\newblock \emph{arXiv preprint arXiv:2010.02037}, 2020.

\bibitem[Wu et~al.(2019)Wu, Chen, Wang, Ye, Liu, Guo, and Fu]{wu2019large}
Wu, Y., Chen, Y., Wang, L., Ye, Y., Liu, Z., Guo, Y., and Fu, Y.
\newblock Large scale incremental learning.
\newblock In \emph{Proceedings of the 32nd IEEE/CVF Conference on Computer
  Vision and Pattern Recognition}, pp.\  374--382, 2019.

\bibitem[Xia et~al.(2022)Xia, Wu, Wang, Chen, and Li]{xia2022progcl}
Xia, J., Wu, L., Wang, G., Chen, J., and Li, S.~Z.
\newblock Progcl: Rethinking hard negative mining in graph contrastive
  learning.
\newblock In \emph{Proceedings of the 38th International Conference on Machine
  Learning}, pp.\  24332--24346, 2022.

\bibitem[Xie et~al.(2022)Xie, Zhan, Liu, Ong, and Loy]{xie2022delving}
Xie, J., Zhan, X., Liu, Z., Ong, Y.-S., and Loy, C.~C.
\newblock Delving into inter-image invariance for unsupervised visual
  representations.
\newblock \emph{International Journal of Computer Vision}, 130\penalty0
  (12):\penalty0 2994--3013, 2022.

\bibitem[Xu et~al.(2019)Xu, Jin, and Yang]{xu2019non}
Xu, Y., Jin, R., and Yang, T.
\newblock Non-asymptotic analysis of stochastic methods for non-smooth
  non-convex regularized problems.
\newblock In \emph{Advances in Neural Information Processing Systems},
  volume~32, pp.\  2630--2640, 2019.

\bibitem[Yang(2022)]{DBLP:journals/corr/abs-2206-00439}
Yang, T.
\newblock Algorithmic foundation of deep x-risk optimization.
\newblock \emph{CoRR}, abs/2206.00439, 2022.
\newblock \doi{10.48550/arXiv.2206.00439}.
\newblock URL \url{https://doi.org/10.48550/arXiv.2206.00439}.

\bibitem[Yao et~al.(2021)Yao, Huang, Hou, Lu, Niu, Xu, Liang, Li, Jiang, and
  Xu]{yao2021filip}
Yao, L., Huang, R., Hou, L., Lu, G., Niu, M., Xu, H., Liang, X., Li, Z., Jiang,
  X., and Xu, C.
\newblock Filip: Fine-grained interactive language-image pre-training.
\newblock \emph{arXiv preprint arXiv:2111.07783}, 2021.

\bibitem[You et~al.(2017)You, Gitman, and Ginsburg]{you2017large}
You, Y., Gitman, I., and Ginsburg, B.
\newblock Large batch training of convolutional networks.
\newblock \emph{arXiv preprint arXiv:1708.03888}, 2017.

\bibitem[Yuan et~al.(2022)Yuan, Wu, Qiu, Du, Zhang, Zhou, and
  Yang]{yuan2022provable}
Yuan, Z., Wu, Y., Qiu, Z.-H., Du, X., Zhang, L., Zhou, D., and Yang, T.
\newblock Provable stochastic optimization for global contrastive learning:
  Small batch does not harm performance.
\newblock In \emph{Proceedings of the 39th International Conference on Machine
  Learning}, pp.\  25760--25782, 2022.

\bibitem[Zbontar et~al.(2021)Zbontar, Jing, Misra, LeCun, and
  Deny]{zbontar2021barlow}
Zbontar, J., Jing, L., Misra, I., LeCun, Y., and Deny, S.
\newblock Barlow twins: Self-supervised learning via redundancy reduction.
\newblock In \emph{Proceedings of the 38th International Conference on Machine
  Learning}, pp.\  12310--12320, 2021.

\bibitem[Zhang et~al.(2022)Zhang, Zhang, Pham, Niu, Qiao, Yoo, and
  Kweon]{zhang2022dual}
Zhang, C., Zhang, K., Pham, T.~X., Niu, A., Qiao, Z., Yoo, C.~D., and Kweon,
  I.~S.
\newblock Dual temperature helps contrastive learning without many negative
  samples: Towards understanding and simplifying moco.
\newblock In \emph{Proceedings of the 35th IEEE/CVF Conference on Computer
  Vision and Pattern Recognition}, pp.\  14441--14450, 2022.

\bibitem[Zhang et~al.(2021)Zhang, Wu, Bayrooti, and
  Goodman]{zhang2021temperature}
Zhang, O., Wu, M., Bayrooti, J., and Goodman, N.
\newblock Temperature as uncertainty in contrastive learning.
\newblock \emph{arXiv preprint arXiv:2110.04403}, 2021.

\bibitem[Zhang et~al.(2020)Zhang, Jiang, Miura, Manning, and
  Langlotz]{zhang2020contrastive}
Zhang, Y., Jiang, H., Miura, Y., Manning, C.~D., and Langlotz, C.~P.
\newblock Contrastive learning of medical visual representations from paired
  images and text.
\newblock \emph{arXiv preprint arXiv:2010.00747}, 2020.

\bibitem[Zhu et~al.(2023)Zhu, G{\"u}rb{\"u}zbalaban, and
  Ruszczy{\'n}ski]{zhu2023distributionally}
Zhu, L., G{\"u}rb{\"u}zbalaban, M., and Ruszczy{\'n}ski, A.
\newblock Distributionally robust learning with weakly convex losses:
  Convergence rates and finite-sample guarantees.
\newblock \emph{arXiv preprint arXiv:2301.06619}, 2023.

\bibitem[Zhu et~al.(2014)Zhu, Anguelov, and Ramanan]{zhu2014capturing}
Zhu, X., Anguelov, D., and Ramanan, D.
\newblock Capturing long-tail distributions of object subcategories.
\newblock In \emph{Proceedings of the 27th IEEE/CVF Conference on Computer
  Vision and Pattern Recognition}, pp.\  915--922, 2014.

\bibitem[Zhu et~al.(2015)Zhu, Kiros, Zemel, Salakhutdinov, Urtasun, Torralba,
  and Fidler]{Zhu_2015_ICCV}
Zhu, Y., Kiros, R., Zemel, R., Salakhutdinov, R., Urtasun, R., Torralba, A.,
  and Fidler, S.
\newblock Aligning books and movies: Towards story-like visual explanations by
  watching movies and reading books.
\newblock In \emph{The IEEE International Conference on Computer Vision
  (ICCV)}, December 2015.

\bibitem[Zhuang et~al.(2022)Zhuang, Xiang, Bai, Jia, Turk-Browne, Norman,
  DiCarlo, and Yamins]{zhuang2022how}
Zhuang, C., Xiang, V., Bai, Y., Jia, X., Turk-Browne, N., Norman, K., DiCarlo,
  J.~J., and Yamins, D.~L.
\newblock How well do unsupervised learning algorithms model human real-time
  and life-long learning?
\newblock In \emph{Proceedings of the Neural Information Processing Systems
  Datasets and Benchmarks Track}, 2022.

\end{thebibliography}
\bibliographystyle{icml2022}

\newpage
\appendix
\onecolumn

\section{Derivation of the Equivalent Minimization Form}
\label{app:minmax_derivation}

In this section, we present the detailed steps for the derivation of~(\ref{eq:dual_form_RGCL}). Recall the problem:
\begin{equation*}
\max_{\mathbf{p} \in \Delta} \min_{\lambda \geq 0}\!\sum_{\mathbf{z}_j \in \mathcal{S}^-_i}\!p_j h_i(\z_j)\!-\!\tau_0\text{KL}(\mathbf{p},\!\boldsymbol{1}/m)\!-\!\lambda(\text{KL}(\mathbf{p},\!\boldsymbol{1}/m)\!-\!\rho).
\end{equation*}
We first apply Sion's minimax theorem~\citep{sion1958general} and have:
\begin{equation*}
\min_{\lambda \geq 0} \max_{\mathbf{p} \in \Delta}\!\sum_{\mathbf{z}_j \in \mathcal{S}^-_i}\!p_j h_i(\z_j)\!-\!\tau_0\text{KL}(\mathbf{p},\!\boldsymbol{1}/m)\!-\!\lambda(\text{KL}(\mathbf{p},\!\boldsymbol{1}/m)\!-\!\rho),
\end{equation*}
which is equivalent to
\begin{equation*}
\min_{\lambda \geq 0} \max_{\mathbf{p} \in \Delta}\!\sum_{\mathbf{z}_j \in \mathcal{S}^-_i}\!p_j h_i(\z_j)\!-\!(\lambda+\tau_0)(\text{KL}(\mathbf{p},\!\boldsymbol{1}/m)-\rho)\!-\!\tau_0\rho.
\end{equation*}
Let $\tau=\lambda+\tau_0$, then we have
\begin{equation*}
\min_{\tau\geq \tau_0} \max_{\mathbf{p} \in \Delta}\!\sum_{\mathbf{z}_j \in \mathcal{S}^-_i}\!p_j h_i(\z_j)\!-\!\tau(\text{KL}(\mathbf{p},\!\boldsymbol{1}/m)-\rho)\!-\!\tau_0\rho.
\end{equation*}
Then, the original problem is equivalent to the following problem:
\begin{equation*}
\min_{\w}\min_{\tau\geq \tau_0} \max_{\mathbf{p} \in \Delta}\!\sum_{\mathbf{z}_j \in \mathcal{S}^-_i}\!p_j h_i(\z_j)\!-\!\tau(\text{KL}(\mathbf{p},\!\boldsymbol{1}/m)-\rho)\!-\!\tau_0\rho.
\end{equation*}
Next, we fix $\x=(\w^{\top},\tau)^{\top}$ and derive the optimal solution $\p^{*}(\x)$ that depends on $\x$ and solves the inner maximization problem. To this end, we consider the following problem
\begin{equation*}
\min_{\mathbf{p} \in \Delta}\!\sum_{\mathbf{z}_j \in \mathcal{S}^-_i}\!-p_j h_i(\z_j)\!+\!\tau\text{KL}(\mathbf{p},\!\boldsymbol{1}/m),
\end{equation*}
which has the same optimal solution as our original problem.
There are actually three constraints to handle, i.e., $p_i\geq 0,\forall{i}$, $p_i\leq 1,\forall{i}$ and $\sum_{i=1}^{m} p_i=1$. Note that the constraint $p_i\geq 0,\forall{i}$ is enforced by the term $p_i\log(p_i)$, otherwise the above objective will be infinity. Besides, the constraint $p_i\leq 1$ is automatically satisfied due to $\sum_{i=1}^{m}p_i=1$ and $p_i\geq 0,\forall{i}$. Hence, we only to explicitly tackle the constraint $\sum_{i=1}^{m}p_i=1$. To this end, we define the following Lagrangian function:
\begin{equation*}
    L_{\x}(\p,\mu) = \!\sum_{\mathbf{z}_j \in \mathcal{S}^-_i}\!-p_j h_i(\z_j)\!+\!\tau\left(\log m + \sum_{i=1}^{m}p_i\log(p_i)\right) + \mu\left(\sum_{i=1}^{m}p_i -1\right),
\end{equation*}
where $\text{KL}(\mathbf{p},\!\boldsymbol{1}/m)=\log m + \sum_{i=1}^{m}p_i\log(p_i)$, and $\mu$ is the Lagrangian multiplier for the constraint $\sum_{i=1}^{m}p_i=1$. The optimal solutions satisfy the KKT conditions:
\begin{equation*}
    -h_i(\z_j) + \tau(\log(p_j^*(\x)) + 1) + \mu=0\quad \text{and}\quad \sum_{i=1}^{m} p_i^*(\x)=1.
\end{equation*}
From the first equation, we can derive $p_j^*(\x)\propto\exp(h_i(\z_j)/\tau)$. Due to the second equation, we conclude that $p_j^*(\x)=\frac{\exp(h_i(\z_j)/\tau)}{\sum_{\z_j\in\mathcal{S}^-_i}\exp(h_i(\z_j)/\tau)}$. Plugging this optimal $\p^*$ into the inner maximization problem over $\p$, we have
\begin{equation*}
    \sum_{\z_j\in\mathcal{S}^-_i}p_j^*(\x)h_i(\z_j) -\tau\left(\log m + \sum_{i=1}^{m}p_i^*(\x)\log(p_i^*(\x)) \right)\!=\!\tau\log\left(\frac{1}{m}\sum_{\mathbf{z}_j \in \mathcal{S}^-_i}\exp\left(\frac{h_i(\z_j)}{\tau}\right)\right)\!=\!\tau\log\left(\E_{\mathbf{z}_j \in \mathcal{S}^-_i}\exp\left(\frac{h_i(\z_j)}{\tau}\right)\right).
\end{equation*}
Therefore, we get the following equivalent problem:
\begin{equation*}
    \min_{\tau\geq\tau_0} \tau\log\left(\E_{\mathbf{z}_j \in \mathcal{S}^-_i}\exp\left(\frac{h_i(\z_j)}{\tau}\right)\right) + (\tau-\tau_0)\rho,
\end{equation*}
which is the dual form in~(\ref{eq:dual_form_RGCL}) of the original RGCL. The dual form for RGCL in bimodal setting can be derived in a similar way.

\section{iSogCLR for Bimodal CL Setting}
\label{app:bimodal_alg}

Recall the RGCO for bimodal SSL:
\begin{equation*}
\min_{\w,\bftau,\bftau'\geq\tau_0}\!F_{\text{B}}(\w,\bftau,\bftau')\!:=\!\frac{1}{n}\sum\nolimits_{(\x_i,\t_i)\in\D'}\Bigg\{(\bftau_{i}+\bftau'_{i})\rho  +\left.\bftau_{i}\log{\E}_{\t\in\T_i^-}\exp\!\left(\frac{h_{\x_i}(\t)}{\bftau_{i}}\right)\!+\!\bftau'_{i}\log{\E}_{\x\in\mathcal I^-_i}\!\exp\!\left(\frac{h_{\t_i}(\x)}{\bftau'_{i}}\!\right)\!\right\},
\end{equation*}
where
\begin{align*}
&h_{\x_i}(\t)=E_I(\x_i)^{\top}E_T(\t)-E_I(\x_i)^{\top}E_T(\t_i),\\
&h_{\t_i}(\x)=E_I(\x)^{\top}E_T(\t_i)-E_I(\x_i)^{\top}E_T(\t_i).
\end{align*}
It is worth to mention that an image-text pair can be viewed as \emph{two views of the same underlying concept}. {\bf So essentially, bimodal RGCO is consistent with unimodal RGCO because they all construct positive (resp. negative) pairs from the the different views of the same (resp. different) concepts, and pull close positive pairs and push away negative pairs. The only difference is the bimodal loss gets views from different modalities while the unimodal loss gets views from different augmentations}. Our algorithm is general for softmax-base contrastive loss and does not mind how to extract the views. Therefore it is applicable to both unimodal and bimodal CL. 

The algorithm for optimizing $F_{\text{B}}(\w,\bftau,\bftau')$ is very similar to that for optimizing unimodal RGCO $F(\w,\bftau)$ in Algorithm~\ref{algo:sogclr_dro}. Note that we employ the subscript `v' and `t' to represent variables for \underline{v}isual images and \underline{t}exts, respectively. At each iteration, we sample a random mini-batch of $B^{\prime}$ image-text pairs $\B^{\prime}=\{\x_1,\t_1,\ldots,\x_{B^{\prime}},\t_{B^{\prime}}\}$. Then we compute the stochastic estimators of $g_{\x_i}(\w_t,\bftau_{\text{v},i};\mathcal{T}^{\prime}_i)$ and $g_{\t_i}(\w_t,\bftau_{\text{t},i};\mathcal{I}^{\prime}_i)$ by
\begin{align}
   & g_{\x_i}(\w_t,\bftau_{\text{v},i};\mathcal{T}^{\prime}_i) = \frac{1}{|\mathcal{T}^{\prime}_i|}\sum_{\t\in \mathcal{T}^{\prime}_i}\exp\left(\frac{h_{\x_i}(\t)}{\bftau_{\text{v},i}}\right),\label{eq:bimodal_isogclr_g1} \\ 
   & g_{\t_i}(\w_t,\bftau_{\text{t},i};\mathcal{I}^{\prime}_i) = \frac{1}{|\mathcal{I}^{\prime}_i|}\sum_{\x\in \mathcal{I}^{\prime}_i}\exp\left(\frac{h_{\t_i}(\x)}{\bftau_{\text{t},i}}\right),\label{eq:bimodal_isogclr_g2}
\end{align}
where $\mathcal{I}^{\prime}_i=\{\x_1,\ldots,\x_{B^{\prime}}\}\backslash\{\x_i\}$ and $\mathcal{T}^{\prime}_i=\{\t_1,\ldots,\t_{B^{\prime}}\}\backslash\{\t_i\}$. To control the approximation error, we maintain the following two moving average estimators:
\begin{align}
&\s_{\text{v},i}^{t+1}=(1-\beta_0)\s_{\text{v},i}^t+\beta_0 g_{\x_i}(\w_t,\bftau_{\text{v},i};\mathcal{T}^{\prime}_i),\label{eq:bimodal_isogclr_s1} \\
&\s_{\text{t},i}^{t+1}=(1-\beta_0)\s_{\text{t},i}^t+\beta_0 g_{\t_i}(\w_t,\bftau_{\text{t},i};\mathcal{I}^{\prime}_i).\label{eq:bimodal_isogclr_s2}
\end{align}
where $\beta_0\in(0,1)$. With these estimators, we can compute the gradients of $F_B(\w,\bftau)$ w.r.t. $\w$, $\bftau_{\text{v}}$, and $\bftau_{\text{t}}$ by
\begin{align}
&G(\bftau_{\text{v},i}^t) = \frac{1}{n} \left[\frac{\bftau_{\text{v},i}^t}{\s_{\text{v},i}^t}\nabla_{\bftau_{\text{v},i}}g_{\x_i}(\w_t,\bftau_{\text{v},i};\mathcal{T}^{\prime}_i) + \log(\s_{\text{v},i}^t) + \rho\right]\label{eq:grad_tau_bimodal_1},\\
&G(\bftau_{\text{t},i}^t) = \frac{1}{n} \left[\frac{\bftau_{\text{t},i}^t}{\s_{\text{t},i}^t}\nabla_{\bftau_{\text{t},i}}g_{\t_i}(\w_t,\bftau_{\text{t},i};\mathcal{I}^{\prime}_i) + \log(\s_{\text{t},i}^t) + \rho\right]\label{eq:grad_tau_bimodal_2},\\
&G(\w_t) = \frac{1}{|\B^{\prime}|}\sum_{\x_i,\t_i\in\B^{\prime}}\left(\frac{\bftau_{\text{v},i}^t}{\s_{\text{v},i}^t}\nabla_{\w}g_{\x_i}(\w_t,\bftau_{\text{v},i};\mathcal{T}^{\prime}_i) + \frac{\bftau_{\text{t},i}^t}{\s_{\text{t},i}^t}\nabla_{\w}g_{\t_i}(\w_t,\bftau_{\text{t},i};\mathcal{I}^{\prime}_i)\right).\label{eq:grad_w_bimodal}
\end{align}

We present the detailed steps of using the momentum-style update in Algorithm~\ref{algo:sogclr_dro_bimodal}. A similar convergence guarantee to Theorem~\ref{thm:short_stat} can be established for iSogCLR in bimodal setting. The momentum-style update can be replaced by an Adam-style update using adaptive step sizes, and the same convergence rate can be established.

\begin{algorithm}[t]
\caption{iSogCLR for Bimodal SSL}\label{algo:sogclr_dro_bimodal}
\begin{algorithmic}[1]
\REQUIRE $\beta_0, \beta_1, \eta$
\STATE Initialize $\w_1,\s_\text{v}^1, \s_\text{t}^1, \u_\text{v}^1,\u_\text{t}^1, \v_1$, $\bftau_\text{v}^1=\bftau_\text{t}^1=\boldsymbol{\tau_{\text{init}}}$
\FOR{$t=1,2,\dots,T$}
\STATE Draw a batch of $B^{\prime}$ samples denoted by $\B^{\prime}\subset\D^{\prime}$
\FOR{$\x_i\in\B^{\prime}$}
\STATE Compute $g_{\x_i}(\w_t,\bftau_{\text{v},i};\mathcal{T}^{\prime}_i)$ and $g_{\t_i}(\w_t,\bftau_{\text{t},i};\mathcal{I}^{\prime}_i)$ according to~(\ref{eq:bimodal_isogclr_g1}) and~(\ref{eq:bimodal_isogclr_g2}), respectively
\STATE Update $\s_{\text{v},i}^{t+1}$ and $\s_{\text{t},i}^{t+1}$ according to~(\ref{eq:bimodal_isogclr_s1}) and~(\ref{eq:bimodal_isogclr_s2}), respectively 
\STATE Compute $G(\bftau_{\text{v},i}^t)$ and $G(\bftau_{\text{i},i}^t)$ according to~(\ref{eq:grad_tau_bimodal_1}) and~(\ref{eq:grad_tau_bimodal_2}), respectively
\STATE Update $\u_{\text{v},i}^{t+1}=(1-\beta_1)\u_{\text{v},i}^t+\beta_1 G(\bftau_{\text{v},i}^t)$ and $\u_{\text{t},i}^{t+1}=(1-\beta_1)\u_{\text{t},i}^t+\beta_1 G(\bftau_{\text{t},i}^t)$
\STATE Update $\bftau_{\text{v},i}^{t+1}=\Pi_{\Omega}\left[\bftau_{\text{v},i}^t - \eta \u_{\text{v},i}^{t+1} \right]$ and $\bftau_{\text{t},i}^{t+1}=\Pi_{\Omega}\left[\bftau_{\text{t},i}^t - \eta \u_{\text{t},i}^{t+1} \right]$
\ENDFOR
\STATE Compute gradient estimator $G(\w_t)$ according to~(\ref{eq:grad_w_bimodal})
\STATE Compute $\v_{t+1}=(1-\beta_1)\v_t + \beta_1 G(\w_t)$
\STATE Update $\w_{t+1}=\w_t - \eta \v_{t+1}$ \text{(or Adam-style)}
\ENDFOR
\end{algorithmic}
\end{algorithm}

\section{Experiments}
\label{app:exp}

\subsection{Details of Implementation}
\label{app:exp-impl}

For experiments on unimodal image datasets, we compare our algorithm, iSogCLR, against the following methods. {\bf SimCLR}~\citep{chen2020simple} is a pioneering work that directly optimize InfoNCE loss~\citep{oord2018representation}. {\bf FlatCLR}~\citep{chen2021simpler} employs a variant of InfoNCE loss for better performance in the small-batch-size regime. {\bf Spectral CL}~\citep{haochen2021provable} is based on spectral decomposition on population graph and has provable accuracy guarantees. {\bf SogCLR}~\citep{yuan2022provable} utilizes variance reduction techniques to achieve promising performance and has provable convergence guarantees. {\bf SimCo}~\citep{zhang2022dual} improves negative mining in CL by using dual temperatures. {\bf Barlow Twins}~\citep{zbontar2021barlow} and {\bf VICReg}~\citep{bardes2021vicreg} are non-contrastive methods and aim to maximize the information content of embeddings. On bimodal visual-language datasets, we consider the following baselines. {\bf CLIP}~\citep{radford2021learning} is one of the most popular VLP framework. {\bf CyCLIP}~\citep{goel2022cyclip} try to improve CLIP by optimizing the features to be geometrically consistent on image and text space. {\bf SogCLR} can also be applied to solve bimodal SSL problems and is included in our comparison.

For unimodal experiments, we adopt a code base from GitHub\footnote{https://github.com/HobbitLong/SupContrast} and implement the baseline methods in our experiments based on their open source implementations. The backbone networks we use are ResNet-18 and ResNet-50 for experiments on CIFAR dataset and ImageNet100/iNaturalist, respectively. For the projection head, we employ that used by VICReg~\citep{bardes2021vicreg} for all methods. For bimodal experiments, we conduct experiments on the basis of ALBEF\footnote{https://github.com/salesforce/ALBEF}~\citep{li2021align}. We also implement bimodal CL baselines, e.g., CLIP, CyCLIP, and SogCLR, in the code base. We adopt ResNet-50 as the image encoder and DistilBert~\citep{sanh2019distilbert} as the text encoder. We train our models on Nvidia Tesla V100 GPU with 32GB memory and GTX 3090 GPU with 24GB memory.

\subsection{Details of Datasets}
\label{app:exp-data}

CIFAR-10 and CIFAR-100 are two widely-used image datasets. Both of them contain 50,000 images for training and 10,000 images for test. The full version of ImageNet contains 1000 classes (about 1.2M images) and we denote it as ImageNet-1K~\citep{russakovsky2015imagenet}. ImageNet-100~\citep{wu2019large} is a subset with randomly selected 100 classes (about 128K image) from ImageNet-1K. We also consider two imbalanced datasets: CIFAR100-LT and ImageNet-LT. We construct CIFAR100-LT following a widely-used strategy in the literature~\citep{cao2019learning,qi2022stochastic} with the imbalance ratio $\rho$=100, and keep the test set unchanged. The imbalance ratio $\rho$ is defined as the ratio between sample sizes of the most frequent and least frequent classes. The LT imbalance follows the exponentially decayed sample size between different classes. The iNaturalist species classification and detection dataset~\citep{iNat18} is a real-world large-scale dataset with 437,513 images from 8142 classes in its 2018 version.

Conceptual Captions 3M (CC3M) dataset~\citep{sharma2018conceptual} contains about 2.9 million image-caption pairs crawled from the Internet. Note that as time goes by, some images are not available. Thus the number of image-caption pairs we use in our experiments is smaller than that in the original papers. Each image in MSCOCO and Flickr30K datasets has about 5 captions. MSCOCO dataset~\citep{lin2014microsoft} contains 113K images and 567K captions, and Flickr30K dataset~\citep{plummer2015flickr30k} has 32K images and 158K captions. We employ the well-known Karpathy split~\citep{karpathy2015deep} for these two datasets.

\subsection{Additional Experimental Results}
\label{app:add-exp-res}

\begin{table*}[t]
\vspace{-4mm}
\caption{Linear evaluation (top-1 accuracy (\%)) under different training epochs on three balanced unimodal image datasets.}
\vspace{-0mm}
\label{tab:unimodal_datasets_results_full_balance}
\vskip 0.1in
\begin{center}
\begin{small}
\begin{sc}
\begin{tabular}{p{2.2cm}p{1.6cm}p{1.6cm}p{1.6cm}p{1.6cm}p{1.6cm}p{1.6cm}}
\toprule
\multirow{1}{*}{\thead{Method}} &
\multicolumn{2}{c}{\thead{CIFAR10}} &
\multicolumn{2}{c}{\thead{CIFAR100}} &
\multicolumn{2}{c}{\thead{ImageNet100}} \\
\cmidrule(lr){2-3}
\cmidrule(lr){4-5}
\cmidrule(lr){6-7}
& 400ep & 800ep & 400ep & 800ep  & 200ep & 400ep \\
\midrule
SimCLR & 88.74$\pm$0.18 & 89.64$\pm$0.12 & 62.34$\pm$0.09 & 64.78$\pm$0.14 & 78.84$\pm$0.18 & 79.96$\pm$0.20  \\
Barlow Twins & 87.39$\pm$0.14 & 88.39$\pm$0.16 & 62.28$\pm$0.13 & 64.33$\pm$0.13 & 77.02$\pm$0.14 & 79.16$\pm$0.13  \\
FlatCLR & 88.61$\pm$0.10 & 89.22$\pm$0.06 & 63.27$\pm$0.07 & 64.51$\pm$0.08 & 79.06$\pm$0.09 & 80.24$\pm$0.16 \\
Spectral CL & 88.77$\pm$0.09 & \textbf{90.30}$\pm$0.11 & 63.06$\pm$0.18 & 64.32$\pm$0.17 & 78.38$\pm$0.17 & 80.48$\pm$0.08 \\
SogCLR & 88.93$\pm$0.11 & 90.07$\pm$0.10 & 63.14$\pm$0.12 & 65.18$\pm$0.10 & 79.12$\pm$0.07  & 80.54$\pm$0.14  \\
VICReg & 88.96$\pm$0.16 & 89.90$\pm$0.12 & 62.44$\pm$0.13 & 64.18$\pm$0.09 & \textbf{79.58}$\pm$0.23 & 80.16$\pm$0.22  \\
SimCo & 88.86$\pm$0.12 & 89.79$\pm$0.15 & 62.67$\pm$0.06 & 64.74$\pm$0.12 & 77.36$\pm$0.16 & 79.73$\pm$0.17  \\
iSogCLR & \textbf{89.24}$\pm$0.15 & 90.25$\pm$0.09 & \textbf{63.82}$\pm$0.14 & \textbf{65.95}$\pm$0.07 & 79.42$\pm$0.15 & \textbf{81.14}$\pm$0.19 \\
\bottomrule
\end{tabular}
\end{sc}
\end{small}
\end{center}
\vskip -0.0in
\end{table*}

\begin{table*}[t]
\vspace{-4mm}
\caption{Linear evaluation (top-1 accuracy (\%)) under different training epochs on three imbalanced unimodal image datasets.}
\vspace{-0mm}
\label{tab:unimodal_datasets_results_full_imbalance}
\vskip 0.1in
\begin{center}
\begin{small}
\begin{sc}
\begin{tabular}{p{2.2cm}p{1.6cm}p{1.6cm}p{1.6cm}p{1.6cm}p{1.6cm}p{1.6cm}}
\toprule
\multirow{1}{*}{\thead{Method}} &
\multicolumn{2}{c}{\thead{CIFAR10-LT}} &
\multicolumn{2}{c}{\thead{CIFAR100-LT}} &
\multicolumn{2}{c}{\thead{iNaturalist}}  \\
\cmidrule(lr){2-3}
\cmidrule(lr){4-5}
\cmidrule(lr){6-7}
& 400ep & 800ep & 400ep & 800ep  & 200ep & 400ep \\
\midrule
SimCLR  & 77.09$\pm$0.13 & 78.36$\pm$0.07 & 49.33$\pm$0.12 & 51.89$\pm$0.09 & 90.79$\pm$0.14 & 91.52$\pm$0.17 \\
Barlow Twins & 75.94$\pm$0.08 & 77.12$\pm$0.14 & 48.39$\pm$0.14 & 50.74$\pm$0.15 & 90.57$\pm$0.22 & 91.89$\pm$0.21 \\
FlatCLR & 77.96$\pm$0.12 & 79.19$\pm$0.08 & 52.61$\pm$0.06 & 54.14$\pm$0.08 & 91.48$\pm$0.15 & 92.54$\pm$0.09 \\
Spectral CL & 76.38$\pm$0.21 & 78.63$\pm$0.13 & 51.86$\pm$0.16 & 53.46$\pm$0.17 & 91.28$\pm$0.11 & 92.13$\pm$0.16 \\
SogCLR & 77.70$\pm$0.07 & 79.16$\pm$0.09 & 52.35$\pm$0.08 & 53.58$\pm$0.13 & 91.89$\pm$0.18 & 92.60$\pm$0.08\\
VICReg & 75.05$\pm$0.09 & 77.84$\pm$0.15 & 48.43$\pm$0.13 & 51.68$\pm$0.06 & 92.18$\pm$0.06 & 93.03$\pm$0.14 \\
SimCo & 77.71$\pm$0.13 & 78.56$\pm$0.19 & 51.06$\pm$0.09 & 52.31$\pm$0.14 & 91.03$\pm$0.18 & 92.10$\pm$0.12 \\
iSogCLR & \textbf{78.37}$\pm$0.16 & \textbf{79.69}$\pm$0.08 & \textbf{53.06}$\pm$0.12 & \textbf{54.42}$\pm$0.18 & \textbf{92.33}$\pm$0.23 & \textbf{93.08}$\pm$0.19 \\
\bottomrule
\end{tabular}
\end{sc}
\end{small}
\end{center}
\vskip -0.0in
\end{table*}

{\bf Unimodal experimental results.} We present the full results on three balanced datasets and three imbalanced datasets in Table~\ref{tab:unimodal_datasets_results_full_balance} and Table~\ref{tab:unimodal_datasets_results_full_imbalance}, respectively. One can observe than our iSogCLR matches or outperforms prior strong baselines.

\begin{table*}[t]
\vspace{-4mm}
\caption{Zero-shot image-text retrieval (text-to-image and image-to-text) results (Recall@$k$), where $k\in\{1,5,10\}$, on Flickr30K dataset.}
\vspace{-4mm}
\label{tab:bimodal_zs_retrieval_full_flickr}
\vskip 0.2in
\begin{center}
\begin{small}
\begin{sc}
\begin{tabular}{p{1.3cm}p{1.8cm}p{1.8cm}p{1.8cm}p{1.8cm}p{1.8cm}p{1.8cm}}
\toprule
\multirow{1}{*}{\thead{Method}} &
\multicolumn{3}{c}{\thead{Image retrieval}} &
\multicolumn{3}{c}{\thead{Text retrieval}} \\
\cmidrule(lr){2-4}
\cmidrule(lr){5-7}
& R@1 & R@5 & R@10 & R@1 & R@5 & R@10  \\
\midrule
CLIP & 40.98$\pm$0.22 & 69.60$\pm$0.19 & 79.22$\pm$0.08 & 50.90$\pm$0.17 & 81.00$\pm$0.16 & 87.90$\pm$0.22 \\
CyCLIP & 42.46$\pm$0.13 & 69.56$\pm$0.16 & 78.74$\pm$0.21 & 51.70$\pm$0.23 & 79.90$\pm$0.18 & 88.40$\pm$0.11  \\
SogCLR & 43.32$\pm$0.18 & 71.06$\pm$0.13 & 79.54$\pm$0.19 & 57.18$\pm$0.20 & 81.03$\pm$0.26 & 88.62$\pm$0.18 \\
iSogCLR & \textbf{44.36}$\pm$0.12 & \textbf{72.64}$\pm$0.17 & \textbf{80.92}$\pm$0.13 & \textbf{60.20}$\pm$0.26 & \textbf{84.60}$\pm$0.21 & \textbf{90.50}$\pm$0.14 \\
\bottomrule
\end{tabular}
\end{sc}
\end{small}
\end{center}
\vskip -0.0in
\end{table*}

\begin{table*}[t]
\vspace{-4mm}
\caption{Zero-shot image-text retrieval (text-to-image and image-to-text) results (Recall@$k$), where $k\in\{1,5,10\}$, on MSCOCO dataset.}
\vspace{-4mm}
\label{tab:bimodal_zs_retrieval_full_coco}
\vskip 0.2in
\begin{center}
\begin{small}
\begin{sc}
\begin{tabular}{p{1.3cm}p{1.8cm}p{1.8cm}p{1.8cm}p{1.8cm}p{1.8cm}p{1.8cm}}
\toprule
\multirow{1}{*}{\thead{Method}} &
\multicolumn{3}{c}{\thead{Image retrieval}} &
\multicolumn{3}{c}{\thead{Text retrieval}} \\
\cmidrule(lr){2-4}
\cmidrule(lr){5-7}
& R@1 & R@5 & R@10 & R@1 & R@5 & R@10 \\
\midrule
CLIP & 21.32$\pm$0.12 & 45.52$\pm$0.17 & 57.30$\pm$0.16 & 26.98$\pm$0.21 & 54.86$\pm$0.15 & 66.86$\pm$0.19 \\
CyCLIP & 21.58$\pm$0.19 & 45.46$\pm$0.13 & 57.56$\pm$0.22 & 26.18$\pm$0.24 & 53.24$\pm$0.18 & 65.86$\pm$0.22 \\
SogCLR & 22.43$\pm$0.13 & 46.74$\pm$0.11 & 58.32$\pm$0.20 & 30.08$\pm$0.22 & 56.94$\pm$0.17 & 67.39$\pm$0.24 \\
iSogCLR & \textbf{23.27}$\pm$0.18 & \textbf{47.23}$\pm$0.24 & \textbf{59.07}$\pm$0.19 & \textbf{32.72}$\pm$0.13 & \textbf{59.52}$\pm$0.11 & \textbf{70.78}$\pm$0.21 \\
\bottomrule
\end{tabular}
\end{sc}
\end{small}
\end{center}
\vskip -0.0in
\end{table*}

\begin{table*}[t]
\vspace{-4mm}
\caption{Zero-shot top-$k$ classification accuracy (\%), where $k\in\{1,3,5\}$.}
\vspace{-4mm}
\label{tab:bimodal_zs_classification_full}
\vskip 0.2in
\begin{center}
\begin{small}
\begin{sc}
\begin{tabular}{p{1.2cm}p{1.6cm}p{1.6cm}p{1.6cm}p{1.6cm}p{1.6cm}p{1.6cm}p{1.6cm}p{1.6cm}p{1.6cm}}
\toprule
\multirow{1}{*}{\thead{Method}} &
\multicolumn{3}{c}{\thead{CIFAR10}} &
\multicolumn{3}{c}{\thead{CIFAR100}}  \\
\cmidrule(lr){2-4}
\cmidrule(lr){5-7}
& top-1 & top-3 & top-5 & top-1 & top-3 & top-5  \\
\midrule
CLIP & 60.63$\pm$0.19 & 87.29$\pm$0.12 & \textbf{95.02}$\pm$0.16 & 30.70$\pm$0.11 & 49.49$\pm$0.13 & 58.51$\pm$0.14 \\
CyCLIP & 57.19$\pm$0.20 & 85.02$\pm$0.14 & 93.94$\pm$0.23 & 33.11$\pm$0.14 & 52.99$\pm$0.17 & 61.01$\pm$0.22 \\
SogCLR & \textbf{61.09}$\pm$0.24 & \textbf{88.12}$\pm$0.19 & 94.92$\pm$0.18 & 33.26$\pm$0.12 & \textbf{52.46}$\pm$0.22 & 60.71$\pm$0.15  \\
iSogCLR & 58.91$\pm$0.15 & 86.27$\pm$0.24 & 93.43$\pm$0.11 & \textbf{33.81}$\pm$0.18 & 53.21$\pm$0.21 & \textbf{61.83}$\pm$0.19  \\
\bottomrule
\end{tabular}
\begin{tabular}{p{1.2cm}p{1.6cm}p{1.6cm}p{1.6cm}}
\toprule
\multirow{1}{*}{\thead{Method}} &
\multicolumn{3}{c}{\thead{ImageNet1K}} \\
\cmidrule(lr){2-4}
& top-1 & top-3 & top-5  \\
\midrule
CLIP & 36.27$\pm$0.17 & 51.03$\pm$0.17 & 56.84$\pm$0.22 \\
CyCLIP  & 36.75$\pm$0.21 & 51.32$\pm$0.18 & 57.08$\pm$0.23 \\
SogCLR  & 37.46$\pm$0.19 & 52.68$\pm$0.16 & 58.04$\pm$0.10 \\
iSogCLR  & \textbf{40.72}$\pm$0.23 & \textbf{54.38}$\pm$0.14 & \textbf{59.11}$\pm$0.17 \\
\bottomrule
\end{tabular}
\end{sc}
\end{small}
\end{center}
\vskip -0.0in
\end{table*}

{\bf Bimodal experimental results.} We provide the full results of the zero-shot image-text retrieval tasks on Flickr30K and MSCOCO in Table~\ref{tab:bimodal_zs_retrieval_full_flickr} and Table~\ref{tab:bimodal_zs_retrieval_full_coco}, respectively. It is notable that our method has large improvements compared with baselines. We also present the full results of the zero-shot classification tasks on three standard image datasets in Table~\ref{tab:bimodal_zs_classification_full}, and observe that our method achieves the best performance in most cases.

{\bf More ablation studies} 

{\bf Effect of $\tau_{\text{init}}$.} We present more ablation studies on the hyper-parameters of iSogCLR. In Table~\ref{tab:ablation_temperature}, we first present the effect of $\tau$ and $\tau_{\text{init}}$ on the performance of SimCLR and iSogCLR, respectively. One can observe that $\tau$ is an important hyper-parameter for SimCLR. SimCLR equiped with a tuned $\tau$ can be a strong baseline on many dataset. Besides, we find that our iSogCLR is not sensitive to $\tau_{\text{init}}$ in a range of 0.1$\sim$0.7. Moreover, iSogCLR with any $\tau_{\text{init}}$ is this range can outperforms SimCLR with a tuned $\tau$. These results demonstrate the effectiveness of our method.

\begin{table*}[t]
\vspace{-4mm}
\caption{The effect of $\tau$ ($\tau_{\text{init}}$) to SimCLR (iSogCLR). We report top-1 accuracy after pretraining for 400 epochs.}
\vspace{-4mm}
\label{tab:ablation_temperature}
\vskip 0.2in
\begin{center}
\begin{small}
\begin{sc}
\begin{tabular}{p{1.2cm}p{0.8cm}p{0.8cm}p{0.8cm}p{0.8cm}p{0.8cm}p{0.8cm}p{0.8cm}p{0.8cm}p{0.8cm}p{0.8cm}p{0.8cm}p{0.8cm}}
\toprule
\multirow{1}{*}{\thead{Method}} &
\multicolumn{4}{c}{\thead{CIFAR10}} &
\multicolumn{4}{c}{\thead{CIFAR100}} &
\multicolumn{4}{c}{\thead{ImageNet100}} \\
\cmidrule(lr){2-5}
\cmidrule(lr){6-9}
\cmidrule(lr){10-13}
& 0.1 & 0.3 & 0.5 & 0.7 & 0.1 & 0.3 & 0.5 & 0.7 & 0.1 & 0.3 & 0.5 & 0.7   \\
\midrule
SimCLR & 85.85 & 88.34 & 88.74 & 88.27 & 60.49 & 62.34 & 62.02 & 61.73 & 78.64 & 79.96 & 79.78 & 79.42 \\ 
iSogCLR & \textbf{89.00} & \textbf{89.17} & \textbf{89.24} & \textbf{89.23} & \textbf{63.30} & \textbf{63.73} & \textbf{63.41} & \textbf{63.50} & \textbf{80.82} & \textbf{80.90} & \textbf{80.86} & \textbf{81.14} \\ 
\bottomrule
\end{tabular}
\end{sc}
\end{small}
\end{center}
\vskip -0.0in
\end{table*}

{\bf Effect of $\rho$.} We provide the effect of $\rho$ on the performance of iSogCLR in Table~\ref{tab:ablation_rho_full}. We observe that although the parameter $\rho$ in RGCL affects the degree of hardness-awareness, this parameter does not have a big impact on the performance of iSogCLR in most cases. We believe the reason is that we introduce a learnable Lagrangian multiplier $\lambda$ for each KL constraint in our derivation. Thus the degree of hardness-awareness of each anchor data is largely affected by $\lambda$, i.e., the individualized temperature, which is flexible and updated during learning.

\begin{table}[t]
\vspace{-3mm}
\caption{Effect of $\rho$ on iSogCLR ($\tau_{\text{init}}$ is set to 0.3). We report the average top-1 accuracies (\%) for 400 epochs pretraining.}
\vspace{-0mm}
\label{tab:ablation_rho_full}
\vskip 0.2in
\begin{center}
\begin{small}
\begin{sc}
\renewcommand{\arraystretch}{0.8}
\begin{tabular}{p{2.2cm}p{1.0cm}p{1.0cm}p{1.0cm}p{1.0cm}}
\toprule
\diagbox[dir=NW]{Data}{$\rho$}  & 0.1 & 0.2 & 0.3 & 0.4 \\
\midrule
CIFAR10 & 88.98 & \textbf{89.03} & 88.99 & 88.75  \\
CIFAR100 & 63.02 & 63.12 & 63.27 & \textbf{63.82}  \\
ImageNet100 & 80.70 & \textbf{80.96} & 80.54 & 80.18  \\
CIFAR10-LT & 77.86 & 78.05 & 78.31 & \textbf{78.37}  \\
CIFAR100-LT & 52.60 & 52.75 & 52.92 & \textbf{53.04}  \\
iNaturalist & 92.13 & 92.30 & \textbf{92.79} & 92.66  \\
\bottomrule
\end{tabular}
\end{sc}
\end{small}
\end{center}
\vskip -0.0in
\end{table}

{\bf Effect of $\beta_0$.} Another hyper-parameter in iSogCLR is the moving average parameter $\beta_0$ for updating $\s^{t+1}$ in~(\ref{eq:update_s}). Following~\citet{yuan2022provable} (cf.~Table 8 in their paper), we tune this parameter in a range of $\{0.7,0.8,0.9\}$. We find that when $\beta_0$ of iSogCLR is set in this range, the performance of the algorithm does not differ much in most cases.

{\bf Comparing with other baselines containing individualized learnable parameters.} 

{\bf Unimodal TaU+SimCLR.} We first compare our method with TaU+SimCLR~\citep{zhang2021temperature}, which adopts the framework of SimCLR and optimizes an input-dependent temperature as the uncertainty for the input. Specifically, for an input $\x$,~\citet{zhang2021temperature} edit the encoder network to return $d+1$ entries, where the first $d$ entries are the embedding of $\x$, and the last entry (let $e$ denote its value) is used to compute a temperature for the input by $\frac{\text{sigmoid}(e)}{t}$ ($t$ is a fixed hyper-parameter). We implement TaU+SimCLR following the pseudo code in the paper~\citep{zhang2021temperature}, and present the results on CIFAR dataset in Table~\ref{tab:compare_with_TaU_simclr}. One can observe that iSogCLR outperforms TaU+SimCLR by large margins. TaU+SimCLR learns input-dependent $\tau$ to estimate the uncertainty in out-of-distribution detection effectively, but with the cost of sacrificing the performance on downstream tasks.

\begin{table*}[t]
\vspace{-4mm}
\caption{Comparison between TaU+SimCLR and iSogCLR. We report the top-1 accuracies (\%) after 400 epochs pretraining on CIFAR datasets.}
\vspace{-0mm}
\label{tab:compare_with_TaU_simclr}
\vskip 0.1in
\begin{center}
\begin{small}
\begin{sc}
\begin{tabular}{p{2.2cm}p{2cm}p{2cm}p{2cm}p{2.2cm}}
\toprule
Method & CIFAR10 & CIFAR100 & CIFAR10-LT & CIFAR100-LT  \\
\midrule
Tau+SimCLR & 86.80 & 59.35 & 76.41 & 49.62 \\
iSogCLR  & 89.24 & 63.82 & 78.37 & 53.06 \\
\bottomrule
\end{tabular}
\end{sc}
\end{small}
\end{center}
\vskip -0.0in
\end{table*}

{\bf Directly Optimizing CLIP with individualized temperatures.} Besides, we also try to implement a variant of CLIP with individualized learnable temperatures. Similar to the CLIP with a global learnable temperature, we construct a learnable temperature for each image or text, compute the loss on each pair using their own temperatures, and optimize them by the automatic differentiation in PyTorch. We initialize all temperature parameters to 0.01. However, we observe that this variant is hard to converge. Specifically, {\bf we observe that the average of learnable temperature parameters is getting larger and larger during training}. We believe the reason is this. Let us consider the ordinary bimodal contrastive loss on a image-text pair $(\x_i,\t_i)$:
\begin{equation*}
\ell(\x_i,\t_i)\!=\log\!\underset{\t\in\T^-_i}{\sum}\!\exp\!\left(\!\frac{h_{\x_i}(\t)}{\tau}\!\right)+\log\!\underset{\x\in\mathcal I_i^-}{\sum}\!\exp\!\left(\!\frac{h_{\t_i}(\x)}{\tau}\!\right),
\end{equation*}
where $h_{\x_i}(\t)=E_I(\x_i)^{\top}E_T(\t)-E_I(\x_i)^{\top}E_T(\t_i)$ and $h_{\t_i}(\x)=E_I(\x)^{\top}E_T(\t_i)-E_I(\x_i)^{\top}E_T(\t_i)$. If $\x_i$ are very similar to $\t_i$ (e.g., a pair with frequent semantics, or the encoders are good), then $h_{\x_i}(\t)$ and $h_{\t_i}(\x)$ are always negative. At this time, the larger the temperature, the smaller the loss function. Hence naively optimizing contrastive loss with individualized temperatures probably does not work.

\begin{figure*}[htb]
    \centering 
\begin{minipage}{0.25\textwidth}
  \includegraphics[width=\linewidth]{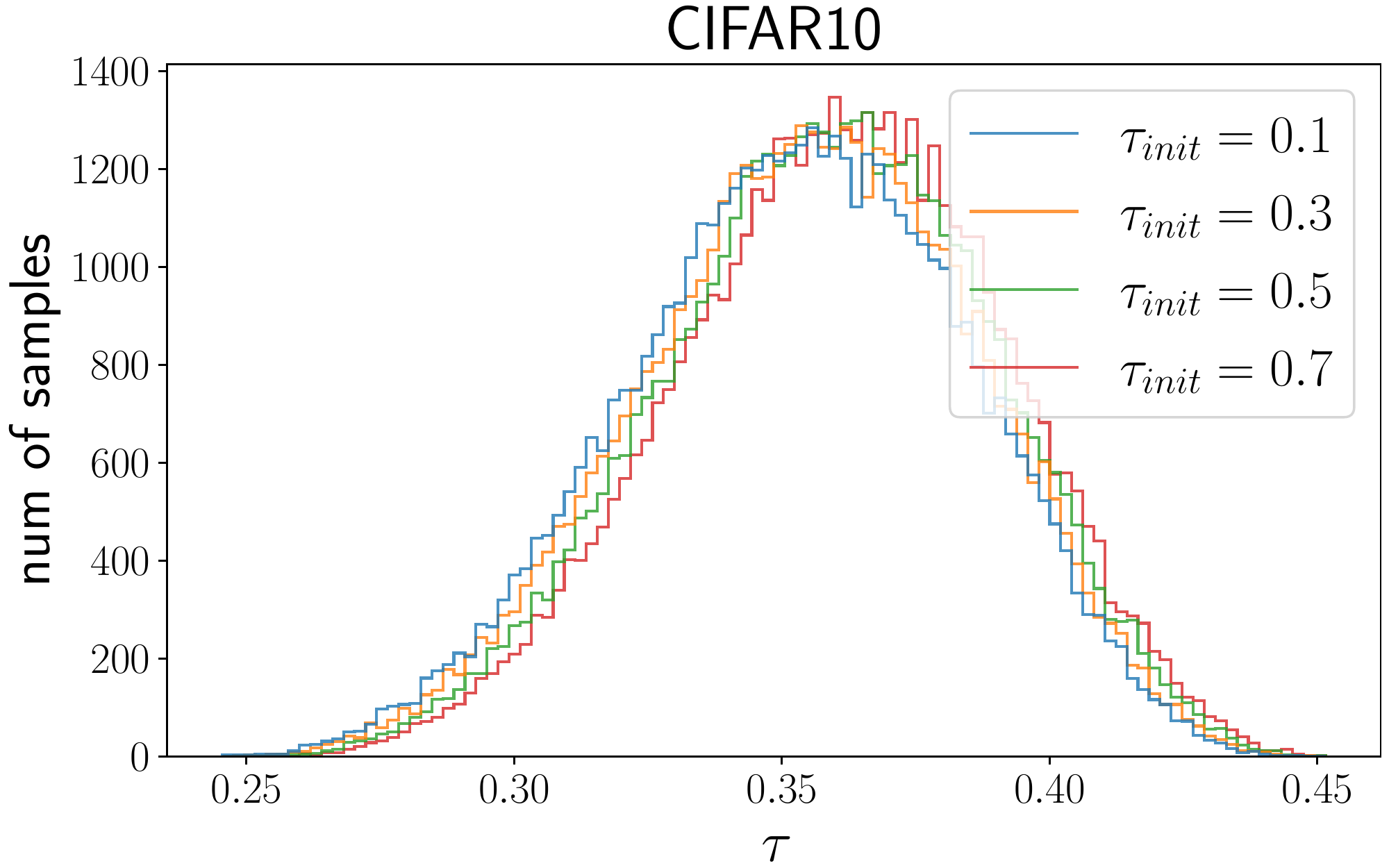}
\end{minipage}\hfil 
\begin{minipage}{0.25\textwidth}
  \includegraphics[width=1.0\linewidth]{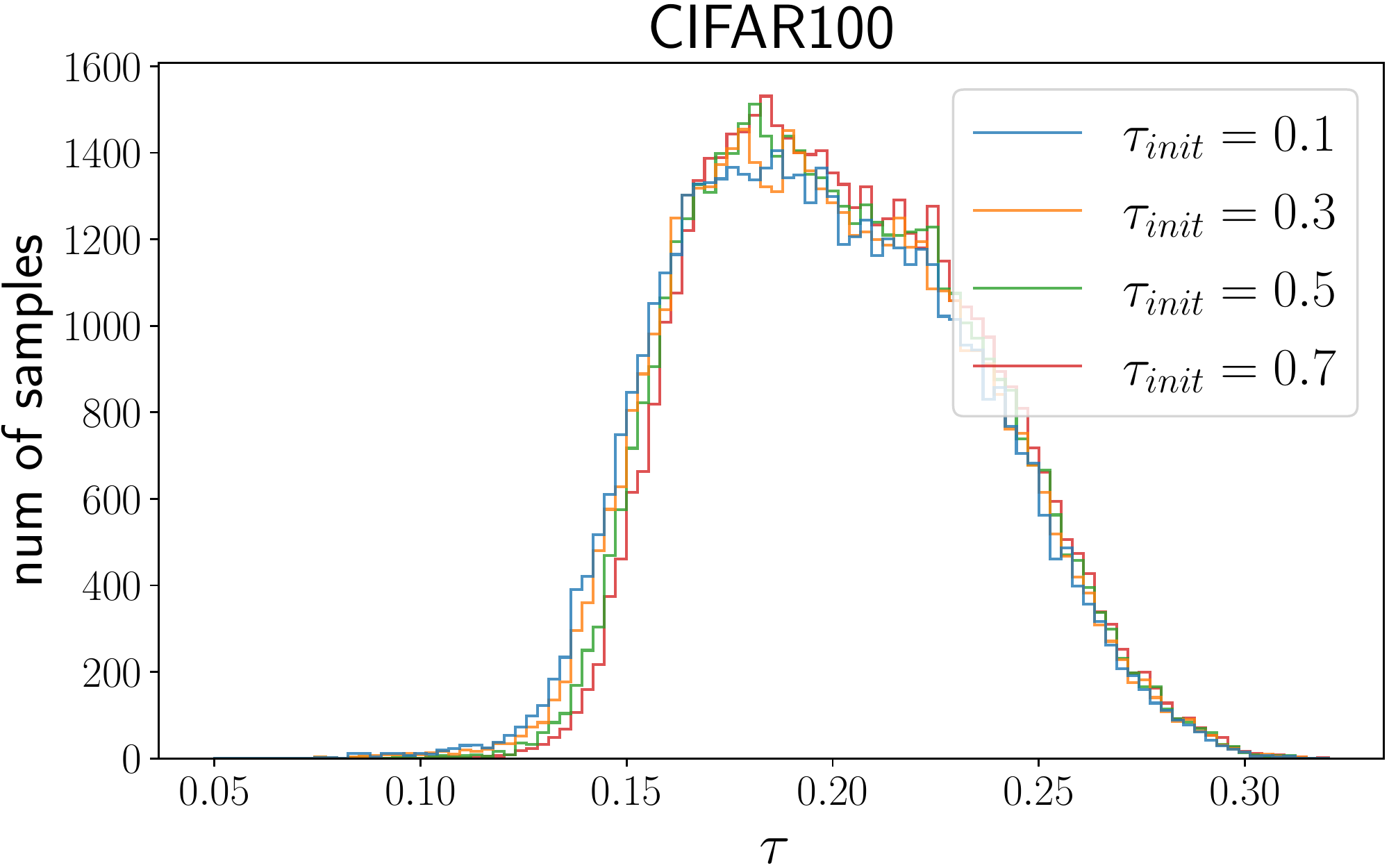}
\end{minipage}\hfil 
\begin{minipage}{0.25\textwidth}
  \includegraphics[width=1.0\linewidth]{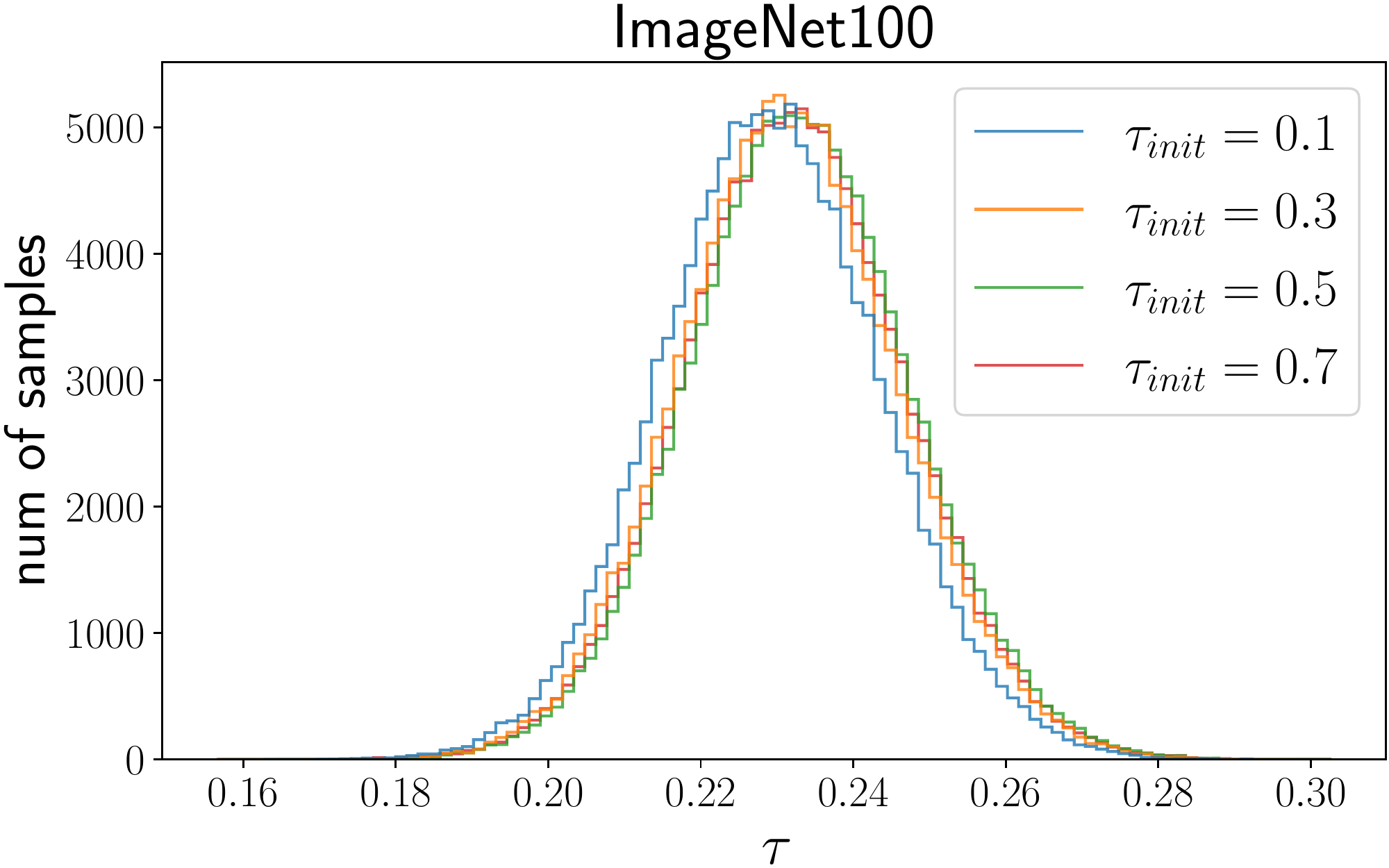}
\end{minipage}
\medskip
\begin{minipage}{0.25\textwidth}
  \includegraphics[width=1.0\linewidth]{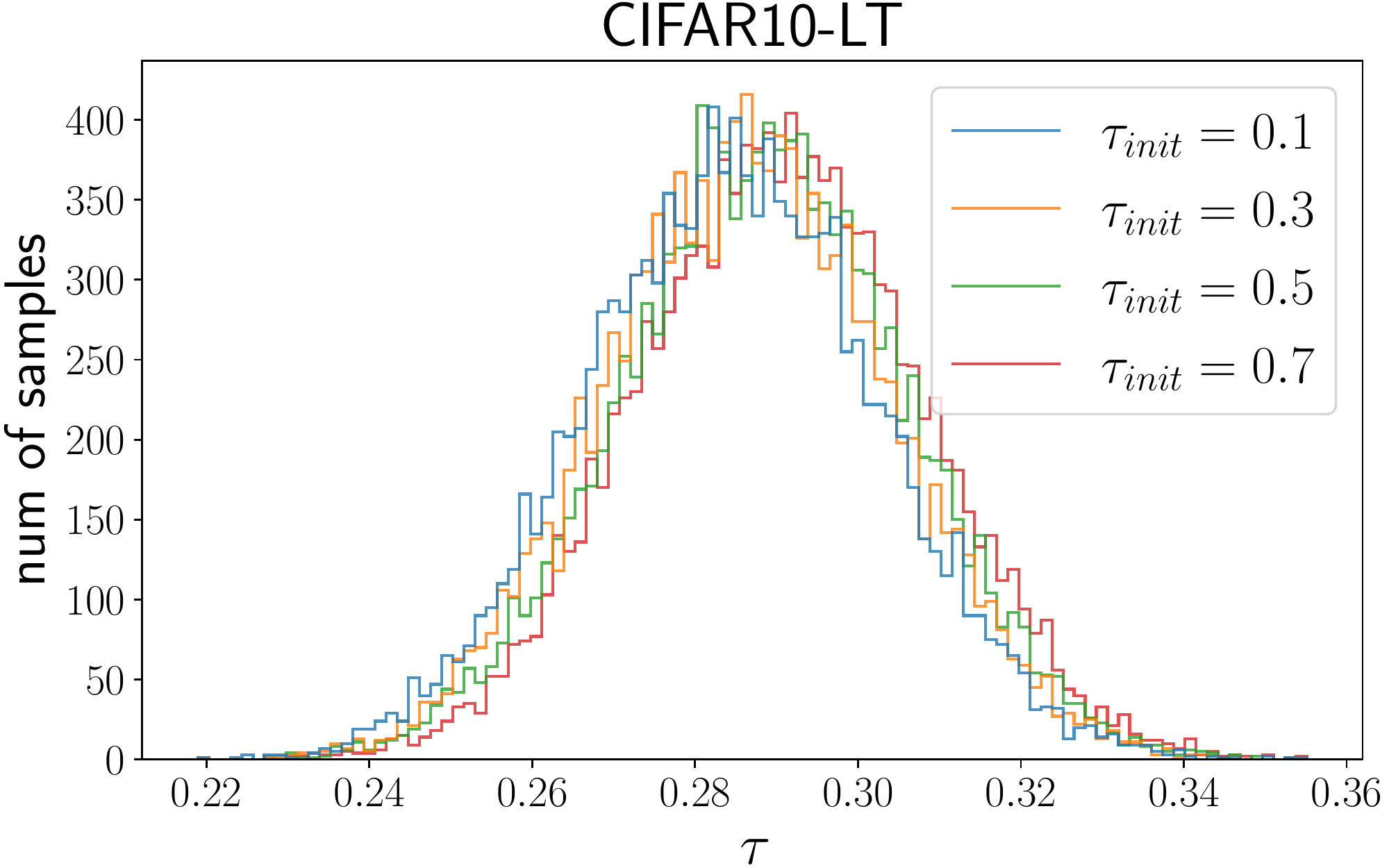}
\end{minipage}\hfil 
\begin{minipage}{0.25\textwidth}
  \includegraphics[width=1.0\linewidth]{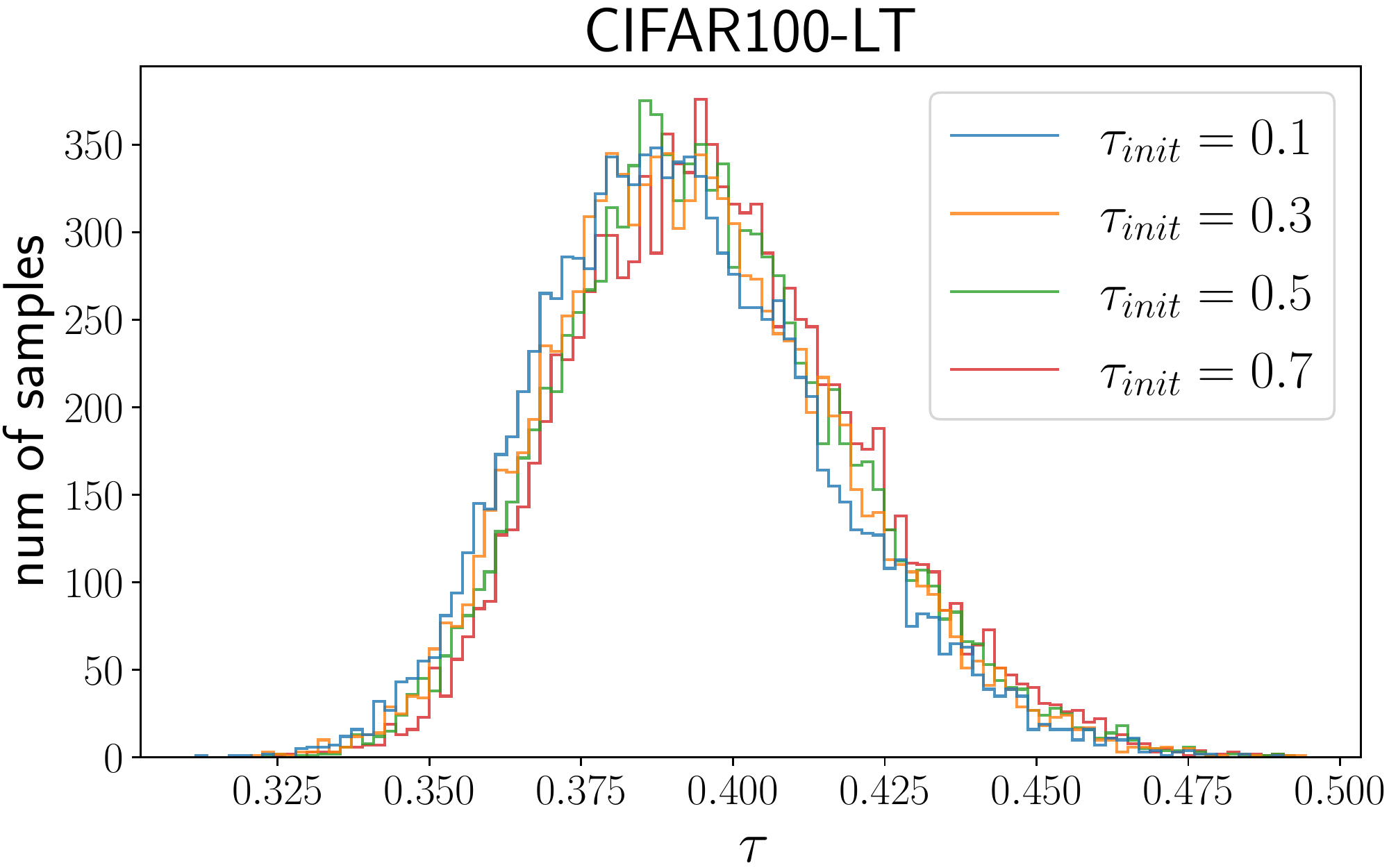}
\end{minipage}\hfil 
\begin{minipage}{0.25\textwidth}
  \includegraphics[width=\linewidth]{icml_appendix_final_tau_dists/iNat.pdf}
\end{minipage}
\medskip
\begin{minipage}{0.25\textwidth}
  \includegraphics[width=1.0\linewidth]{icml_appendix_final_tau_dists/cc3m_images_remove_line.pdf}
\end{minipage}\hfil 
\begin{minipage}{0.25\textwidth}
  \includegraphics[width=1.0\linewidth]{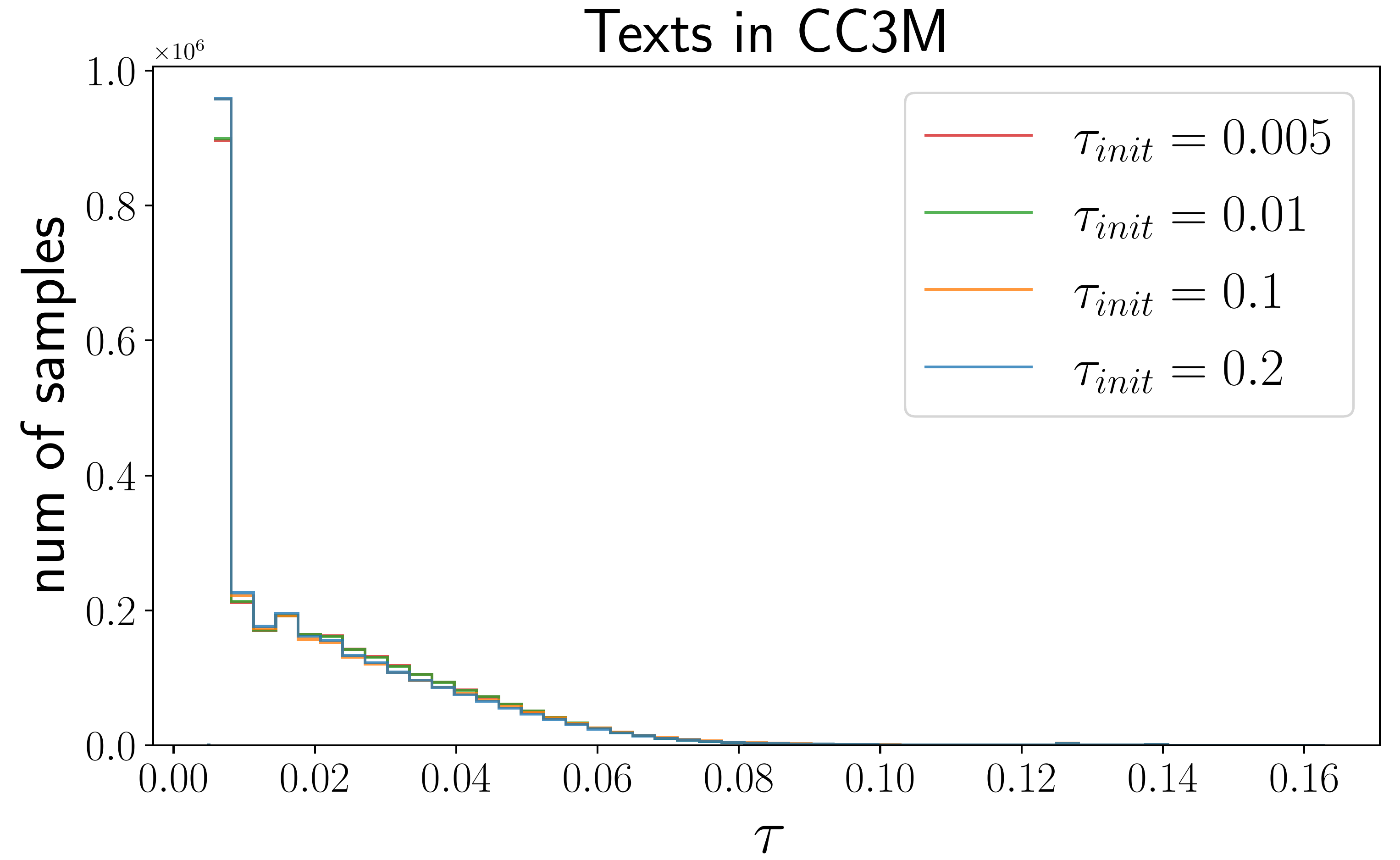}
\end{minipage}
\caption{Distributions of the final learned temperatures with different $\tau_{\text{init}}$ values on seven different datasets.}
\label{fig:app_tau_dists}
\end{figure*}

{\bf More results of the distributions of learned temperatures.} We present the final distributions of the learned temperatures with different $\tau_{\text{init}}$ values on all datasets in Figure~\ref{fig:app_tau_dists}. One can observe that the distributions for unimodal datasets are close to the Gaussian distribution. For CC3M dataset, we plot the distributions of learned temperatures of images and texts, respectively. We observe that these two distributions are very similar, and are close to the long-tail distribution with most samples have small temperatures.

{\bf More examples from CC3M dataset.} We present more images and texts with large and small learned temperatures in Figure~\ref{fig:app_large_tau_images_cc3m} and~\ref{fig:app_small_tau_images_cc3m}, respectively. One can observe that the images with large temperatures contain frequent semantics like person, house, animals, flowers, and natural landscape. While for images with small temperatures, their semantics could be abstract or rare in daily life.

\begin{figure*}
\centering
\includegraphics[width=0.8\linewidth]{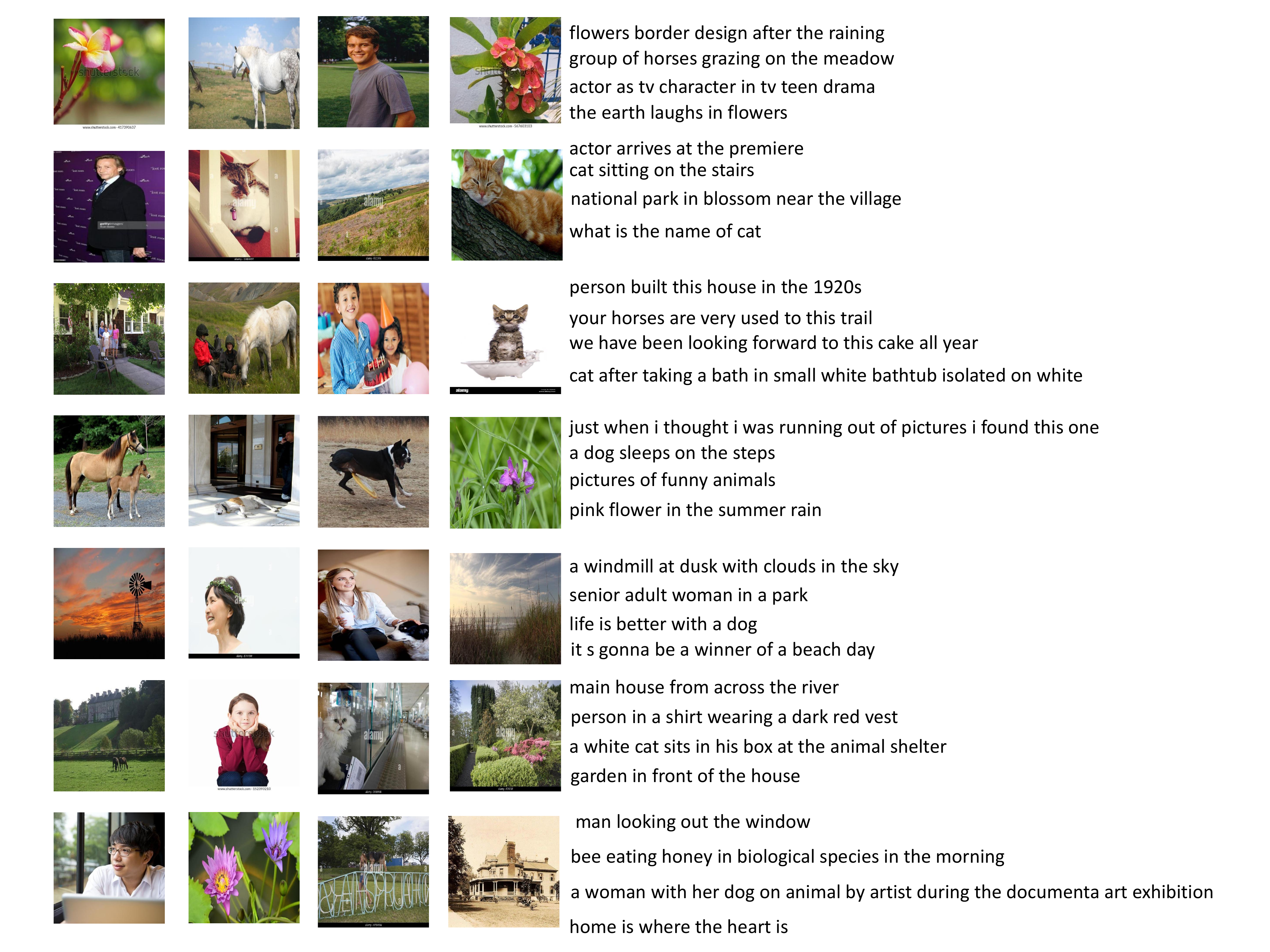}
\vspace{-5mm}
\caption{The images with large learned temperatures and their texts form CC3M. 
In general, they are very common in daily life, e.g., people, dogs, cats, flowers, houses, natural landscape, etc.}
\label{fig:app_large_tau_images_cc3m}
\end{figure*}

\begin{figure*}
\centering
\includegraphics[width=0.8\linewidth]
{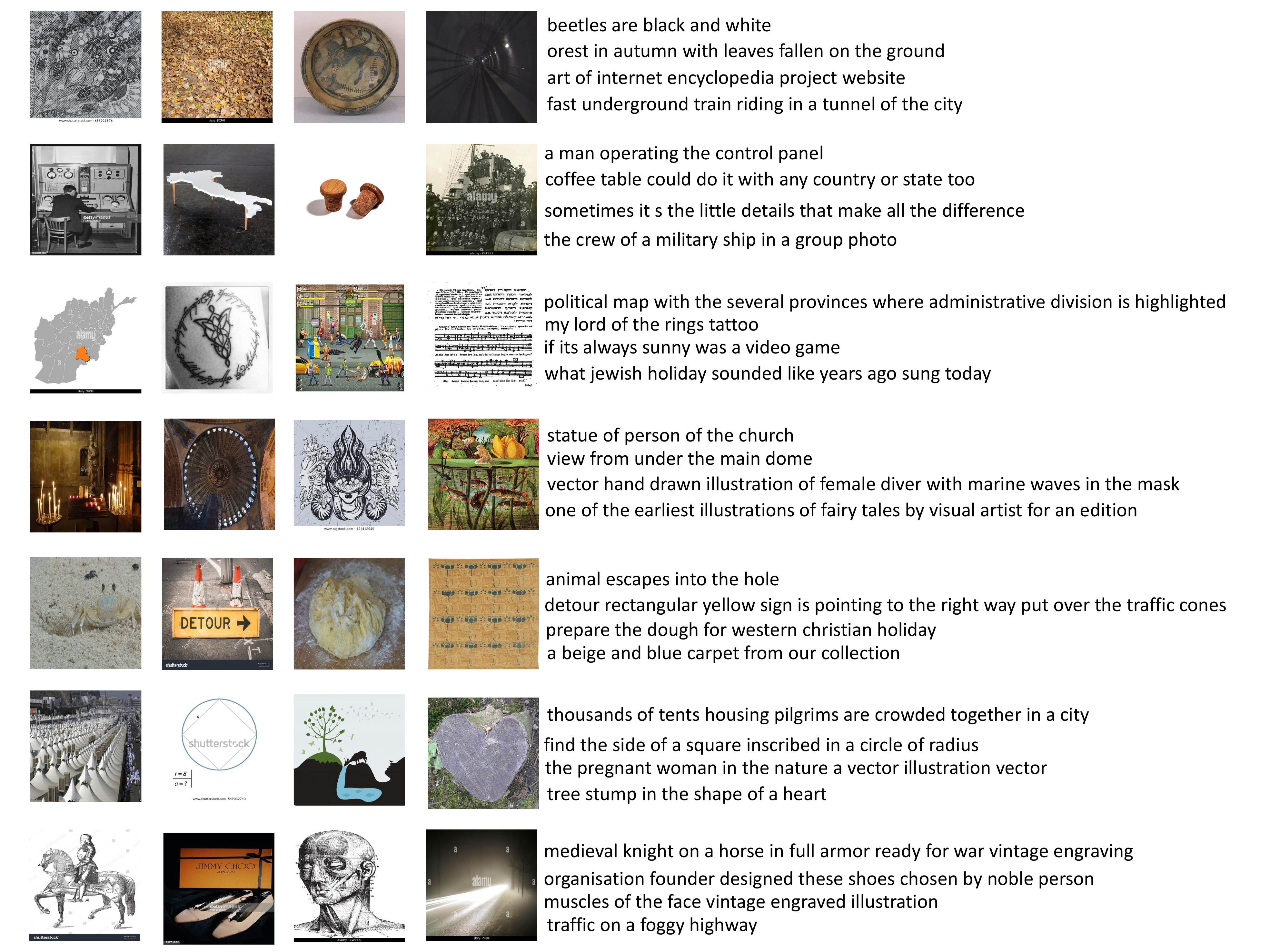}
\vspace{-5mm}
\caption{The images with small learned temperatures and their texts form CC3M. Most of them are not common in our lives or contain abstract concepts.}
\label{fig:app_small_tau_images_cc3m}
\end{figure*}

\section{Convergence Analysis}
\label{sec:convergence_analysis}

We first introduce some notations. Let $||\cdot||$ denote the Euclidean norm of a vector. We denote the combination of $\w$ and $\bftau$, i.e., $(\w^\top,\bftau^\top)^\top\in\R^{d+n}$ by $\z$. Recall that $h_i(\e
)\!=\!E(\A(\x_i))^{\top}\!E(\e)\!-\!E(\A(\x_i))^{\top}\!E(\A^{\prime}(\x_i))$, where {\bf we employ a new variable $\e$ in place of $\z$ used in~(\ref{eq:aux_func_uni_cl_loss}) to avoid conflicts}.

To simplify the notations, we use $g_i(\z)$ and $g_i(\z,\B)$ to represent $g_i(\w,\bftau_i;\S_i^-)$ and $g_i(\w,\bftau_i;\B_i)$, respectively. We can see that $\E_{\B}[g_i(\z,\B)]=g_i(\z)$. Then the objective~(\ref{eq:objective}) can be expressed as $F(\z)=F(\w,\bftau)=\frac{1}{n}\sum_{\x_i\in\D} f_{i}(\bftau_i,g_i(\z))$. We denote the batch sizes $B=|\B|$ and $B'=|\B_i|$.

Then we make the following standard assumptions regarding to problem~(\ref{eq:objective}).
\begin{ass}\label{ass:1}
\,There exists $R,\sigma,C_g,C_f,L_f,L_g,C$ such that
\begin{enumerate}[(i)]
\item The domain of model parameter $\w\in\mathcal{W}$ is bounded by $R$, i.e., for all $\w\in\mathcal{W}$, we have $||\w||\leq R$.
\item $\E_{\B}[||g_i(\z)-g_i(\z,\B)||^2]\leq\frac{\sigma^2}{B}$ and $\E_{\B}[||\nabla g_i(\z)-\nabla g_i(\z,\B)||^2]\leq\frac{\sigma^2}{B}$.
\item Functions $g_i$ and $f_i$ satisfy $||\nabla g_i||\leq C_g$ and $||\nabla f_i||\leq C_f$ for all $i$.
\item Functions $\nabla f_i(\cdot)$, $\nabla g_i(\cdot)$ are $L_f$,$L_g$-Lipschitz continuous for all $i$.
\item Functions $h_i(\e)$ is bounded by $C$ for all $i$, i.e., $|h_i(\e)|\leq C$.
\end{enumerate}
\end{ass}

{\noindent\bf Remark:} Assumption~\ref{ass:1}$(i)$ is also assumed by~\citet{levy2020large} and~\citet{qi2022stochastic}, and is mainly used for convex analysis. Assumption~\ref{ass:1}$(ii)$ assumes that the stochastic estimators of $g_i(\z)$ and $\nabla g_i(\z)$ have bounded variance. Assumption~\ref{ass:1}$(iii)$ and $(iv)$ are also standard for convergence analysis. Note that $E\left(\A\left(\x_i\right)\right)$, $E\left(\A^{\prime}\left(\x_i\right)\right)$ and $E\left(\e\right)$ are all \emph{normalized} vectors, thus their inner products are bounded and Assumption~\ref{ass:1}$(v)$ holds.

However, $F(\w,\bftau)$ is not necessarily smooth in terms of $\z=(\w^\top,\bftau^\top)^\top$ if $\bftau$ is unbounded. To address this concern, we have the following lemma:

\begin{lemma}
    The optimal solution of $\bftau_i^{*},i=1,2,\ldots,n$ to problem~(\ref{eq:objective}) is upper bounded by $\tilde{\tau}=\tau_0+C/\rho$, where $C$ is the upper bound for functions $h_i(\e)$ and $\rho$ is the constraint parameter.
\label{lemma:tau_upper_bound}
\end{lemma}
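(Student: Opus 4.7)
The plan is to exploit the fact that the objective $F(\w,\bftau)$ decouples across the temperature variables: with $\w$ held fixed, $\bftau_i$ appears in exactly one summand,
$$\phi_i(\bftau_i) := \bftau_i\log \E_{\e\in\S^-_i}\exp(h_i(\e)/\bftau_i) + \bftau_i\rho.$$
Hence any joint minimizer $(\w^*,\bftau^*)$ must satisfy $\bftau_i^* \in \arg\min_{\bftau_i\geq\tau_0}\phi_i(\bftau_i)$, and it suffices to show each such scalar subproblem attains its minimum at a point no larger than $\tilde\tau$.

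For the scalar analysis, I would sandwich $\phi_i(\bftau_i^*)$ between two elementary bounds. For the upper side, use optimality to write $\phi_i(\bftau_i^*) \leq \phi_i(\tau_0)$, then bound the right-hand side using $h_i(\e)\leq C$: since $\E\exp(h_i/\tau_0)\leq \exp(C/\tau_0)$, we get $\phi_i(\tau_0) \leq C + \tau_0\rho$. For the lower side, Jensen's inequality applied to the concave $\log$ yields $\log\E\exp(h_i/\bftau_i) \geq \E[h_i]/\bftau_i \geq -C/\bftau_i$ (using $h_i \geq -C$), so $\phi_i(\bftau_i) \geq -C + \bftau_i\rho$ for every $\bftau_i \geq \tau_0$. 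Plugging $\bftau_i = \bftau_i^*$ into this lower bound and combining with the upper bound yields $-C + \bftau_i^*\rho \leq C + \tau_0\rho$, i.e., $\bftau_i^* \leq \tau_0 + 2C/\rho$, which matches $\tilde\tau$ up to the constant factor in front of $C/\rho$ (tightening to exactly $C/\rho$ would require either a one-sided bound of the form $h_i\in[-C,0]$ or a slightly sharper control on $\phi_i(\tau_0)$).

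The argument is essentially elementary; the main subtlety is recognizing that the linear-in-$\bftau_i$ regularization term $\bftau_i\rho$ provides the coercive lower growth in $\bftau_i$ that forces $\bftau_i^*$ to be finite. Geometrically the same phenomenon is visible via the envelope formula $\phi_i'(\bftau_i) = \rho - \text{KL}(\p^*(\bftau_i),\1/m)$, derived exactly as in Appendix~\ref{app:minmax_derivation} with $\p^*(\bftau_i)$ the softmax distribution: for $\bftau_i$ large enough $\p^*$ concentrates near uniform, the KL term drops below $\rho$, and $\phi_i$ must turn upward. The envelope route is an appealing alternative, but the direct two-sided sandwich is self-contained and sidesteps any further regularity discussion, so that is what I would use in the final writeup.
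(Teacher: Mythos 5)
Your proof is correct, but it takes a genuinely different route from the paper's. The paper works on the primal--dual pair of the per-anchor DRO problem: it picks the uniform distribution as a Slater point for the constraint $\text{KL}(\p,\1/m)\leq\rho$, invokes a standard bound on the optimal Lagrange multiplier (Lemma 3 of Nedi\'c and Ozdaglar) to get $|\bar\lambda_i^*|\leq\frac{1}{\rho}\big(\E_{\p^*}[h_i]-\E_{\bar\p}[h_i]\big)\leq C/\rho$, and then translates via $\bftau_i=\bar\lambda_i+\tau_0$. You instead work entirely in the dual minimization form, decouple the objective across the $\bftau_i$'s, and sandwich $\phi_i(\bftau_i^*)$ between $\phi_i(\tau_0)\leq C+\tau_0\rho$ and the Jensen lower bound $\phi_i(\bftau_i)\geq -C+\bftau_i\rho$. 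Your route is more elementary and self-contained --- it needs neither strong duality nor the external multiplier bound --- and it makes transparent that the linear term $\bftau_i\rho$ is what provides coercivity. The price is the constant $2C/\rho$ in place of $C/\rho$; you flag this honestly, and it is worth noting that the paper's own final inequality has the same looseness if one tracks $|h_i|\leq C$ carefully (the difference of two expectations of a $[-C,C]$-valued function is only bounded by $2C$), so your bound is not actually weaker than what the paper's argument rigorously delivers. Since $\tilde\tau$ only needs to be some finite constant entering $L_F$ and $\hat g$ downstream, the factor of two is immaterial, and your proof is a valid (arguably cleaner) substitute.
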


\begin{proof}
Recall the primal problem for each $\x_i\in\D$:
\begin{equation*}
    \p^{*} = \max_{\{\p\in\Delta,\text{KL}(\p, \boldsymbol{1}/m) \leq \rho\}} \sum_{\e_j \in \S_i^-} \p_j h_i(\e_j)-\tau_0 \text{KL}(\p, \boldsymbol{1}/m),
\end{equation*}
where $\p^{*}$ is the optimal value of the above problem.

Invoking dual variable $\bar{\lambda}_i$, we obtain the dual problem
\begin{equation*}
    \q^{*} = \min_{\bar{\lambda}\geq0} \max_{\p\in\Delta} \sum_{\e_j\in\S_i^-} \p_j h_i(\e_j)-\tau_0 \text{KL}(\p, \boldsymbol{1}/m)-\bar{\lambda}_i \left( \text{KL}(\p, \boldsymbol{1}/m) - \rho\right).
\end{equation*}

Set $\bar{\p}=(1/m,\ldots,1/m)$, a Slater vector satisfying $ \text{KL}(\bar{\p}, \boldsymbol{1}/m) - \rho \leq 0$. Applying Lemma 3 in~\citep{nedic2009subgradient}, we have
\begin{equation*}
    |\bar{\lambda}_i^*|\leq\frac{1}{\rho}\left(\q^{*}- \sum_{\e_j \in \S_i^-} \bar{\p}_j h_i(\e_j)-\tau_0 \text{KL}(\bar{\p}, \boldsymbol{1}/m)\right).
\end{equation*}

Since the primal problem is concave in terms of $\p$, we have $\p^{*}=\q^{*}$. Therefore,
\begin{equation}
\begin{aligned}
    |\bar{\lambda}_i^*|&\leq\frac{1}{\rho}\left(\p^{*}- \sum_{\e_j \in\S_i^-} \bar{\p}_j h_i(\e_j)\right) \\
    &\leq \frac{1}{\rho}\left(\sum_{\e_j \in\S_i^-} \p_j^{*} h_i(\e_j)-\tau_0 D(\p^{*}, \boldsymbol{1}/m)  - \sum_{\e_j \in\S_i^-} \bar{\p}_j h_i(\e_j)\right)\\
    &\leq \frac{C}{\rho},
\label{lemma:tau_upper_bound_1}
\end{aligned}
\end{equation}
where the last inequality is because $|h_i(\e_j)|\leq C$. Let $\bftau_i=\bar{\lambda}_i+\tau_0$, we have
\begin{equation*}
\q^{*} = \min_{\tau\geq\tau_0} \max_{\p\in\Delta} \sum_{\e_j \in\S_i^-} \p_j h_i(\e_j)-\tau \left( \text{KL}(\p, \boldsymbol{1}/m) - \rho\right) -\tau_0\rho. 
\end{equation*}
By~(\ref{lemma:tau_upper_bound_1}), we know that the optimal solution for above problem $|\bftau_i^{*}|\leq |\bar{\lambda}_i^{*}|+\tau_0\leq  \frac{C}{\rho}+\tau_0$, which completes the proof.
    
\end{proof}

Due the boundness of functions $h_i(\e)$ (cf.~Assumption~\ref{ass:1}$(v)$) and $\bftau_i$ (cf.~Lemma~\ref{lemma:tau_upper_bound}), we have the following lemma:

\begin{lemma}
    Functions $g_i(\z_t)$ and $g_i(\z_t,\B)$ are lower bounded by $\hat{g}=\exp(-C/\tilde{\tau})$, where $-C$ is the lower bound for functions $h_i(\e)$ and $\tilde{\tau}$ is the upper bound for $\bftau_i^{*}$.
\end{lemma}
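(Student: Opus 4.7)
The plan is essentially a one-liner once the right structure is exposed. Both quantities $g_i(\z_t)$ and $g_i(\z_t,\B)$ are convex combinations of the same family of scalars $\exp(h_i(\e)/\bftau_i^t)$: the first is an expectation over $\e\in\S_i^-$ and the second is a finite average over $\e\in\B_i\subseteq\S_i^-$. Any uniform pointwise lower bound on a single summand therefore transfers verbatim to both quantities, so the entire task reduces to lower-bounding $\exp(h_i(\e)/\bftau_i^t)$ uniformly in $i$, $\e$, and $t$.

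To do this I would chain two ingredients. First, Assumption~\ref{ass:1}(v) provides $h_i(\e)\geq -C$ uniformly in $i$ and $\e$. Second, by construction of the constraint set $\Omega$ used in Algorithm~\ref{algo:sogclr_dro}, namely $\Omega=\{\tau_0\leq\bftau\leq\tilde{\tau}\}$ with $\tau_0$ built into the dual formulation~(\ref{eq:dual_form_RGCL}) and $\tilde{\tau}=\tau_0+C/\rho$ inherited from Lemma~\ref{lemma:tau_upper_bound}, every iterate satisfies $\bftau_i^t\in[\tau_0,\tilde{\tau}]$; this is guaranteed by the projection $\Pi_\Omega$ in step~9 of Algorithm~\ref{algo:sogclr_dro}. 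Since $\bftau_i^t>0$, dividing the first inequality by $\bftau_i^t$ produces $h_i(\e)/\bftau_i^t\geq -C/\bftau_i^t$, and using the worst-case endpoint of $\Omega$ (the most-negative value of $-C/\bftau_i^t$ over the feasible interval) yields a uniform lower bound of the form $h_i(\e)/\bftau_i^t\geq -C/\bar{\tau}$ for the appropriate endpoint $\bar{\tau}$. Monotonicity of $\exp(\cdot)$ then gives $\exp(h_i(\e)/\bftau_i^t)\geq\hat{g}$, and averaging over $\S_i^-$ or $\B_i$ preserves the bound.

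The only subtlety to track carefully is the direction of the inequality when one divides a possibly-negative quantity by a positive one: the infimum of $h_i(\e)/\bftau_i^t$ is attained by combining $h_i(\e)=-C$ with the extreme value of $\bftau_i^t$ in $\Omega$ that makes $-C/\bftau_i^t$ most negative, and the resulting $\hat{g}=\exp(-C/\tilde{\tau})$ is then simply an application of exponential monotonicity. There is no genuine obstacle in this lemma; it is purely a uniform compactness argument that leverages the projection step of the algorithm, and it feeds directly into subsequent smoothness and variance estimates that require $g_i$ to be bounded away from zero (so that, e.g., $1/g_i$ and $\log g_i$ remain well-behaved in the gradient formulas~(\ref{eq:grad_tau}) and~(\ref{eq:grad_w})).
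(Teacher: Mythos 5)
Your approach is the same as the paper's: both $g_i(\z_t)$ and $g_i(\z_t,\B)$ are averages of the terms $\exp(h_i(\e)/\bftau_i)$ over $\S_i^-$ and $\B_i$ respectively, so a uniform pointwise lower bound on a single term, obtained from $h_i(\e)\geq -C$ together with the confinement of $\bftau_i$ to $[\tau_0,\tilde{\tau}]$, transfers verbatim to both averages. That is exactly the paper's one-line argument.

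There is, however, a direction slip that you explicitly flag as ``the only subtlety'' and then resolve the wrong way --- and, notably, the paper's own proof commits the identical slip. For $C>0$ the map $\tau\mapsto -C/\tau$ is increasing on $[\tau_0,\tilde{\tau}]$, so its infimum is $-C/\tau_0$, attained at the \emph{lower} endpoint; your own recipe (``the extreme value of $\bftau_i^t$ that makes $-C/\bftau_i^t$ most negative'') therefore yields the uniform bound $\exp(h_i(\e)/\bftau_i)\geq\exp(-C/\tau_0)$, not $\exp(-C/\tilde{\tau})$. Using the upper endpoint $\tilde{\tau}$ as you and the paper both do gives $-C/\bftau_i\leq -C/\tilde{\tau}$, i.e., an \emph{upper} bound on the exponent, so $\exp(-C/\tilde{\tau})$ is not a valid lower bound: take $\bftau_i=\tau_0$ and $h_i(\e)\equiv -C$ and the average equals $\exp(-C/\tau_0)\ll\exp(-C/\tilde{\tau})$. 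The fix is cosmetic rather than structural: redefine $\hat{g}=\exp(-C/\tau_0)$. Every downstream use of the lemma (the smoothness constant $L_F$, the $1/\s_i^t$ and $\log(\s_i^t)$ terms in the gradient estimators, the constants $C_1,C_2,C_3$) only needs \emph{some} positive constant bounding $g_i$ away from zero, so nothing else in the analysis changes except numerical values of constants.
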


\begin{proof}
    Recall the definitions of $h_i(\e)$, $g_i(\z_t)$ and $g_i(\z_t,\B)$:
\begin{equation*}
\begin{aligned}
    h_i(\e) &=\!E(\A(\x_i))^{\top}\!E(\e)\!-\!E(\A(\x_i))^{\top}\!E(\A^{\prime}(\x_i)), \\
    g_i(\z) &=g_i(\w,\bftau_i;\S_i^-)=\frac{1}{|\S_i^-|}\sum_{\e\in\S_i^-}\exp\left(\frac{h_i(\e)}{\bftau_i}\right),\\
    g_i(\z,\B)&=g_i(\w,\bftau_i;\B_i)=\frac{1}{\B_i}\sum_{\e\in\B_i}\exp\left(\frac{h_i(\e)}{\bftau_i}\right).
\end{aligned}
\end{equation*}

Using $\bftau_i\leq\tilde{\tau}$ and $h_i(\e)\geq -C$, we have $g_i(\z)\geq \exp\left(\frac{-C}{\tilde{\tau}}\right)$. Similarly, we have $g_i(\z,\B)\geq \exp\left(\frac{-C}{\tilde{\tau}}\right)$, which completes the proof.
    
\end{proof}

We will also see that the constraint on the domain of $\bftau$ guarantees the smoothness of $F(\w,\bftau)$, which is critical for the proposed algorithm to enjoy fast convergence rate.

\begin{lemma}\label{lem:L_F_smooth}
For all $\w\in\mathcal{W}$, $\bftau_i\in[\tau_0,\tilde{\tau}]$, and $i = 1,2,\dots,n$, $F_i(\z)=F_i(\w,\bftau_i)=f_i(\bftau_i,g_i(\z))$ is $L_F$-smooth for some constant $L_F$.
\end{lemma}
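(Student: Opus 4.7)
The plan is to write $F_i(\z) = \bftau_i \log g_i(\z) + \bftau_i \rho$ and bound the Hessian $\nabla^2 F_i(\z)$ uniformly on the restricted domain $\mathcal{W} \times [\tau_0, \tilde\tau]$. By the chain rule, the Hessian decomposes into three blocks, each of which is built from $g_i$, its derivatives with respect to $\w$ and $\bftau_i$ (up to second order), and reciprocals of $g_i$. Specifically, I would derive
\begin{equation*}
\nabla^2_{\w\w} F_i = \bftau_i \Bigl( \tfrac{\nabla^2_\w g_i}{g_i} - \tfrac{\nabla_\w g_i (\nabla_\w g_i)^\top}{g_i^2} \Bigr),\quad
\tfrac{\partial^2 F_i}{\partial \bftau_i^2} = \tfrac{2\,\partial_{\bftau_i} g_i}{g_i} + \bftau_i \Bigl( \tfrac{\partial^2_{\bftau_i} g_i}{g_i} - \tfrac{(\partial_{\bftau_i} g_i)^2}{g_i^2} \Bigr),
\end{equation*}
and an analogous expression for the mixed block $\nabla^2_{\w\bftau_i} F_i$, then bound each resulting term separately.

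Next, I would collect the uniform bounds on the constituent quantities. From Assumption~\ref{ass:1}(v) and $\bftau_i \in [\tau_0, \tilde\tau]$, we get $e^{-C/\tau_0} \leq g_i \leq e^{C/\tau_0}$, in particular $g_i$ is bounded away from zero, which controls the $1/g_i$ and $1/g_i^2$ factors. Assumption~\ref{ass:1}(iii)--(iv) directly give $\|\nabla_\w g_i\| \leq C_g$ and $\|\nabla^2_\w g_i\| \leq L_g$. For the $\bftau_i$-derivatives, a direct computation yields
\begin{equation*}
\partial_{\bftau_i} g_i = -\frac{1}{m}\sum_{\e \in \S_i^-} \frac{h_i(\e)}{\bftau_i^2}\, e^{h_i(\e)/\bftau_i},
\end{equation*}
and a similar explicit formula for $\partial^2_{\bftau_i} g_i$ containing only $h_i(\e)$, exponentials $e^{h_i(\e)/\bftau_i}$, and powers $1/\bftau_i^k$ with $k \leq 4$. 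Since $|h_i(\e)| \leq C$ and $\bftau_i \geq \tau_0 > 0$, all such terms are uniformly bounded; the same argument applied to $\nabla_\w \partial_{\bftau_i} g_i$ (which inherits a factor of $\nabla_\w h_i$, uniformly bounded by the Lipschitz continuity of $E(\cdot)$) shows the mixed second derivative of $g_i$ is bounded.

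Plugging these bounds into the three Hessian blocks, each block has operator norm bounded by an explicit polynomial in $C, \tau_0^{-1}, \tilde\tau, C_g, L_g$, giving a finite constant $L_F$ such that $\|\nabla^2 F_i(\z)\| \leq L_F$ on $\mathcal{W} \times [\tau_0, \tilde\tau]$; smoothness of $F_i$ then follows. The main obstacle is the careful bounding of $\partial_{\bftau_i}^k g_i$ and $\nabla_\w \partial_{\bftau_i} g_i$: the arithmetic expansion is straightforward but produces several $1/\bftau_i^k$ factors, and it is precisely the lower bound $\bftau_i \geq \tau_0 > 0$ (enforced by the constraint set $\Omega$ and justified by Lemma~\ref{lemma:tau_upper_bound} giving the upper bound $\tilde\tau$) that keeps these factors finite. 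Without that constraint, the logarithmic composition $\log g_i$ together with the $1/\bftau_i$ inside $g_i$ would prevent uniform smoothness, which is why the projection step on $\bftau_i$ in Algorithm~\ref{algo:sogclr_dro} is essential for the subsequent convergence analysis.
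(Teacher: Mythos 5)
Your proposal is correct, but it takes a genuinely different route from the paper. The paper never computes second derivatives: it writes out $\nabla_{\w}F_i=\frac{\bftau_i}{g_i}\nabla_{\w}g_i$ and $\nabla_{\bftau_i}F_i=\frac{\bftau_i}{g_i}\nabla_{\bftau_i}g_i+\log g_i+\rho$, and then bounds $\|\nabla F_i(\z)-\nabla F_i(\tilde\z)\|$ directly by splitting each product into a sum of differences and invoking the abstract constants from Assumption~\ref{ass:1} ($\|\nabla g_i\|\leq C_g$, $L_g$-Lipschitzness of $\nabla g_i$), together with the lower bound $\hat g$ on $g_i$ and the upper bound $\tilde\tau$ on $\bftau_i$. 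You instead bound the full Hessian of $F_i$ blockwise, deriving explicit formulas for $\partial_{\bftau_i}g_i$ and $\partial^2_{\bftau_i}g_i$ from the exponential form of $g_i$. Your block formulas and the formula for $\partial_{\bftau_i}g_i$ are correct, and your lower bound $g_i\geq e^{-C/\tau_0}$ is the right worst case over $\bftau_i\in[\tau_0,\tilde\tau]$. What the paper's route buys is economy of hypotheses: it only needs the first-order regularity actually stated in Assumption~\ref{ass:1}(iii)--(iv), so it never requires $g_i$ (hence the encoder $E$) to be twice differentiable, whereas your use of $\|\nabla^2_{\w}g_i\|\leq L_g$ and of $\nabla_\w\partial_{\bftau_i}g_i$ implicitly strengthens the assumption to second-order differentiability, and converting a Hessian bound into a global Lipschitz bound on the gradient additionally requires convexity of $\mathcal{W}\times[\tau_0,\tilde\tau]$ (harmless here but worth stating). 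What your route buys is transparency: the explicit $1/\bftau_i^k$ and $1/g_i^k$ factors make it completely clear that the only thing standing between $F_i$ and non-smoothness is the box constraint $\bftau_i\in[\tau_0,\tilde\tau]$, which is exactly the structural point the paper's Lemma~\ref{lemma:tau_upper_bound} and the projection $\Pi_\Omega$ in Algorithm~\ref{algo:sogclr_dro} are designed to enforce. Both arguments yield a constant $L_F$ with the same qualitative dependence on $\tilde\tau$, $\hat g$, $C_g$, and $L_g$.
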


Note that Lemma~\ref{lem:L_F_smooth} naturally follows that function $F(\z)$ is also $L_F$-smooth.

\begin{proof}
    We have gradients
    \begin{equation*}
    \begin{aligned}
        \nabla_{\w} F_i(\w,\bftau_i)&=\nabla_{\w} g_i(\w,\bftau_i) \nabla_{g_i}f_i(\bftau_i,g_i(\w,\bftau_i))\\
        &= \frac{\bftau_i}{g_i(\w,\bftau_i)}\nabla_{\w} g_i(\w,\bftau_i)\\
        \nabla_{\bftau} F_i(\w,\bftau_i) &= \nabla_{\bftau} g_i(\w,\bftau_i) \nabla_{g_i}f_i(\bftau_i,g_i(\w,\bftau_i)) + \nabla_{\bftau} f_i(\bftau_i,g_i(\w,\bftau_i))\\
        &= \frac{\bftau_i}{g_i(\w,\bftau_i)}\nabla_{\bftau} g_i(\w,\bftau_i)+\nabla_{\bftau} f_i(\bftau_i,g_i(\w,\bftau_i))\\
        & = \frac{\bftau_i}{g_i(\w,\bftau_i)}\begin{pmatrix}
        0\\
        \vdots\\
        \nabla_{\bftau_i} g_i(\w,\bftau_i)\\
        \vdots\\
        0\\
        \end{pmatrix}
        +\begin{pmatrix}
        0\\
        \vdots\\
        \log(g_i(\w,\bftau_i))+\rho\\
        \vdots\\
        0\\
        \end{pmatrix}
    \end{aligned}
    \end{equation*}

    For any arbitrary $\z,\tilde{\z}$, we have
    \begin{equation*}
        \begin{aligned}
            &\|\nabla_{\z}F_i(\z)-\nabla_{\z}F_i(\tilde{\z})\|^2\\
            &= \|\nabla_{\w}F_i(\z)-\nabla_{\w}F_i(\tilde{\z})\|^2+\|\nabla_{\bftau}F_i(\z)-\nabla_{\bftau}F_i(\tilde{\z})\|^2\\
            &=\left\Vert\frac{\bftau_i}{g_i(\w,\bftau_i)}\nabla_{\w} g_i(\w,\bftau_i)-\frac{\tilde{\bftau}_i}{g_i(\tilde{\w},\tilde{\bftau}_i)}\nabla_{\w} g_i(\tilde{\w},\tilde{\bftau}_i)\right\Vert^2\\
            &\quad +\left\Vert\frac{\bftau_i}{g_i(\w,\bftau_i)}\nabla_{\bftau_i} g_i(\w,\bftau_i)+\log(g_i(\w,\bftau_i))-\left(\frac{\tilde{\bftau}_i}{g_i(\tilde{\w},\tilde{\bftau}_i)}\nabla_{\bftau_i} g_i(\tilde{\w},\tilde{\bftau}_i)+\log(g_i(\tilde{\w},\tilde{\bftau}_i))\right)\right\Vert^2
        \end{aligned}
    \end{equation*}
    Under assumption~\ref{ass:1}, we obtain
    \begin{equation*}
        \begin{aligned}
            &\left\Vert\frac{\bftau_i}{g_i(\w,\bftau_i)}\nabla_{\w} g_i(\w,\bftau_i)-\frac{\tilde{\bftau}_i}{g_i(\tilde{\w},\tilde{\bftau}_i)}\nabla_{\w} g_i(\tilde{\w},\tilde{\bftau}_i)\right\Vert^2\\
            &\leq 2\left\Vert\frac{\bftau_i}{g_i(\w,\bftau_i)}\big[\nabla_{\w} g_i(\w,\bftau_i)-\nabla_{\w} g_i(\tilde{\w},\tilde{\bftau}_i)\big]\right\Vert^2+2\left\Vert\left[\frac{\bftau_i}{g_i(\w,\bftau_i)}-\frac{\tilde{\bftau}_i}{g_i(\tilde{\w},\tilde{\bftau}_i)}\right]\nabla_{\w} g_i(\tilde{\w},\tilde{\bftau}_i)\right\Vert^2\\
            &\leq \frac{2\tilde{\tau}L_g}{\hat{g}}(\|\w-\tilde{\w}\|^2+\|\bftau_i-\tilde{\bftau}_i\|^2)+\frac{2\tilde{\tau}C_g^2}{\hat{g}^2}(\|\w-\tilde{\w}\|^2+\|\bftau_i-\tilde{\bftau}_i\|^2)
        \end{aligned}
    \end{equation*}
    and
    \begin{equation*}
    \begin{aligned}
        &\left\Vert\frac{\bftau_i}{g_i(\w,\bftau_i)}\nabla_{\bftau_i} g_i(\w,\bftau_i)+\log(g_i(\w,\bftau_i))-\left(\frac{\tilde{\bftau}_i}{g_i(\tilde{\w},\tilde{\bftau}_i)}\nabla_{\bftau_i} g_i(\tilde{\w},\tilde{\bftau}_i)+\log(g_i(\tilde{\w},\tilde{\bftau}_i))\right)\right\Vert^2\\
        &\leq 4\left(\frac{C_g}{\hat{g}} + \frac{\tilde{\tau}C_g^2}{\hat{g}^2} + \frac{\tilde{\tau}L_g}{\hat{g}} + \frac{C_g}{\hat{g}}\right)(\|\w-\tilde{\w}\|^2+\|\bftau_i-\tilde{\bftau}_i\|^2)
    \end{aligned}
    \end{equation*}

Define $L_F =  \frac{2\tilde{\tau}L_g}{\hat{g}} +\frac{2\tilde{\tau}C_g^2}{\hat{g}^2}+4\left(\frac{C_g}{\hat{g}} + \frac{\tilde{\tau}C_g^2}{\hat{g}^2} + \frac{\tilde{\tau}L_g}{\hat{g}} + \frac{C_g}{\hat{g}}\right)$, then $\|\nabla_{\z}F_i(\z)-\nabla_{\z}F_i(\tilde{\z})\|^2\leq L_f \|\z-\tilde{\z}\|^2$.
\end{proof}

Below, we let $\chi=\{\z|\w\in\W,\tau_0\leq\bftau_i\leq\tilde{\tau},i=1,2,\ldots,n\}$. $\delta_{\chi}(\z)=0$ if $\z\in\chi$, and $\delta_{\chi}(\z)=\infty$ if $\z\notin\chi$. Then problem~(\ref{eq:objective}) is equivalent to:
\begin{equation}
\min_{\z\in\R^{d+n}} \bar{F}(\z):=F(\z)+\delta_\chi(\z).
\label{eq:objective_equ}
\end{equation}

Now the update step of $\z_t$ can be written as $\z_{t+1}=\Pi_{\chi}(\z_t-\eta\d_{t+1})$, where $\Pi_{\chi}$ denotes the Euclidean projection onto the domain $\chi$, and $\d_{t+1}=(\v_{t+1}^\top,{\u^{t+1}}^\top)^\top$.

Since $\bar{F}$ is non-smooth, we define the regular subgradients as follows.

\begin{definition}[Regular Subgradient]
Consider a function $\Phi:\R^n\rightarrow \bar{\R}$ and $\Phi(\bar{\x})$ is finite. For a vector $\v\in\R^n$, $\v$ is a regular subgradient of $\Phi$ at $\bar{\x}$, written $\v\in\hat{\partial}\Phi(\bar{\x})$, if
\begin{equation*}
\liminf _{\mathbf{x} \rightarrow \overline{\mathbf{x}}} \frac{\Phi(\mathbf{x})-\Phi(\overline{\mathbf{x}})-\mathbf{v}^{\top}(\mathbf{x}-\overline{\mathbf{x}})}{\|\mathbf{x}-\overline{\mathbf{x}}\|} \geq 0.
\end{equation*}

\end{definition}

Since $F(\z)$ is differentiable, we use $\hat{\partial}\bar{F}(\z)=\nabla F(\z)+\hat{\partial}\delta_{\chi}(\z)$ (see Exercise 8.8 in~\citet{rockafellar2009variational}) in the analysis. The $\text{dist}(0,\hat{\partial}\bar{F}(\z))$ measures the distance between the origin and the regular subgradient set of $\bar{F}$ at $\z$. The oracle complexity is defined below:

\begin{definition}[Oracle Complexity]
    Let $\epsilon>0$ be a small constant, the oracle complexity is defined as the number of processing samples in order to achieve $\E[\text{dist}(0,\hat{\partial}\bar{F}(\z))]\leq\epsilon$ for a non-convex loss function or $\E[F(\z)-F(\z_{*})]\leq\epsilon$ for a convex loss function.
\end{definition}

To prove the main theorem, we present some required lemmas.

\begin{lemma}
    Under Assumption~\ref{ass:1}, run Algorithm~\ref{algo:sogclr_dro} with $\eta L_F\leq\frac{1}{4}$, and the output $\z_R$ of Algorithm~\ref{algo:sogclr_dro} satisfies
\begin{equation*}
    \E[dist(0,\hat{\partial}\bar{F}(\z_R))]\leq\frac{2+ 40L_F\eta}{T}\sum_{t=1}^{T}||\d_{t+1}-\nabla F(\z_t)||^2 + \frac{2\Delta}{\eta T} + \frac{40 L_F\Delta}{T},
\end{equation*}
where $\Delta:=\bar{F}(\z_1)-\inf_{\z\in\chi}\bar{F}(\z)$.
\label{lemma:F_smooth}
\end{lemma}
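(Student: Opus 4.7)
The plan is to obtain a one-step inequality that bounds $\text{dist}(0,\hat{\partial}\bar{F}(\z_{t+1}))^2$ by a constant times the gradient estimation error $\|\nabla F(\z_t)-\d_{t+1}\|^2$ plus a multiple of the function decrease $F(\z_t)-F(\z_{t+1})$, and then to telescope over $t=1,\dots,T$ and exploit the uniform random choice of $R$. Two ingredients are needed: a projection-based characterization of a regular subgradient at $\z_{t+1}$, and a descent inequality controlling $\|\z_{t+1}-\z_t\|^2$.

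First I would use the update $\z_{t+1}=\Pi_\chi(\z_t-\eta\d_{t+1})$ and the first-order optimality condition of the projection to identify $\xi_{t+1}:=(\z_t-\z_{t+1})/\eta-\d_{t+1}\in N_\chi(\z_{t+1})=\hat{\partial}\delta_\chi(\z_{t+1})$. Since $\hat{\partial}\bar{F}(\z_{t+1})=\nabla F(\z_{t+1})+\hat{\partial}\delta_\chi(\z_{t+1})$, the vector $\nabla F(\z_{t+1})+\xi_{t+1}$ lies in $\hat{\partial}\bar{F}(\z_{t+1})$, so
\begin{equation*}
\text{dist}(0,\hat{\partial}\bar{F}(\z_{t+1}))\leq \bigl\|\nabla F(\z_{t+1})-\d_{t+1}+(\z_t-\z_{t+1})/\eta\bigr\|.
\end{equation*}
Adding and subtracting $\nabla F(\z_t)$ inside the norm, then applying the triangle inequality and the $L_F$-smoothness of $F$ (Lemma~\ref{lem:L_F_smooth}), I obtain an upper bound of the form $2\|\nabla F(\z_t)-\d_{t+1}\|^2+2(L_F+1/\eta)^2\|\z_{t+1}-\z_t\|^2$ after squaring with $(a+b)^2\leq 2a^2+2b^2$.

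Next I would derive the descent inequality. $L_F$-smoothness yields $F(\z_{t+1})\leq F(\z_t)+\nabla F(\z_t)^\top(\z_{t+1}-\z_t)+\tfrac{L_F}{2}\|\z_{t+1}-\z_t\|^2$. The contraction property of the projection at $\z=\z_t$ gives $\d_{t+1}^\top(\z_{t+1}-\z_t)\leq -\tfrac{1}{2\eta}\|\z_{t+1}-\z_t\|^2$. Adding and subtracting $\d_{t+1}^\top(\z_{t+1}-\z_t)$ and applying Young's inequality produces
\begin{equation*}
F(\z_{t+1})\leq F(\z_t)+\eta\|\nabla F(\z_t)-\d_{t+1}\|^2-\Bigl(\tfrac{1}{4\eta}-\tfrac{L_F}{2}\Bigr)\|\z_{t+1}-\z_t\|^2,
\end{equation*}
and the condition $\eta L_F\leq 1/4$ guarantees the coefficient is at least $1/(8\eta)$. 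Rearranging yields $\|\z_{t+1}-\z_t\|^2\leq 8\eta(F(\z_t)-F(\z_{t+1}))+8\eta^2\|\nabla F(\z_t)-\d_{t+1}\|^2$.

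Finally I would plug this bound into the expression for $\text{dist}^2$, collect the terms, and use $\eta L_F\leq 1/4$ to absorb the $L_F\eta$ factors into the stated constants, so that the coefficient of $\|\nabla F(\z_t)-\d_{t+1}\|^2$ becomes $2+\mathcal{O}(L_F\eta)$ and the coefficient multiplying $F(\z_t)-F(\z_{t+1})$ becomes $\mathcal{O}(1/\eta+L_F)$. Summing over $t=1,\dots,T$ telescopes the function-difference terms into $\Delta=\bar{F}(\z_1)-\inf_\chi\bar{F}$, and dividing by $T$ gives the empirical average. Since $R$ is drawn uniformly from $\{1,\dots,T\}$, the expectation on the left equals this average, yielding the claimed bound. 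The main obstacle is bookkeeping: balancing the Young's-inequality constants in both the descent step and the distance bound so that the final coefficients simultaneously reduce to the advertised $2+40L_F\eta$, $2/\eta$, and $40L_F$; everything else is a direct combination of smoothness, projection optimality, and the standard argument relating regular subgradients of $\bar{F}$ to those of $\delta_\chi$.
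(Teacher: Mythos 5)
Your skeleton is the same as the paper's: the identical projection--optimality identification of $\nabla F(\z_{t+1})-\d_{t+1}-\frac{1}{\eta}(\z_{t+1}-\z_t)$ as an element of $\hat{\partial}\bar{F}(\z_{t+1})$, the identical descent argument giving $\|\z_{t+1}-\z_t\|^2\leq 8\eta\,(\bar{F}(\z_t)-\bar{F}(\z_{t+1}))+8\eta^2\|\d_{t+1}-\nabla F(\z_t)\|^2$ under $\eta L_F\leq\frac14$, and the same telescoping/uniform-$R$ conclusion. The one step that does not deliver the stated inequality is your decomposition of the squared subgradient representative. Bounding it by $2\|\nabla F(\z_t)-\d_{t+1}\|^2+2(L_F+1/\eta)^2\|\z_{t+1}-\z_t\|^2$ and then substituting the displacement bound yields a coefficient of $\|\d_{t+1}-\nabla F(\z_t)\|^2$ equal to $2+16(1+\eta L_F)^2\geq 18$ and a coefficient of $\bar F(\z_t)-\bar F(\z_{t+1})$ equal to $16(1+\eta L_F)^2/\eta\geq 16/\eta$. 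No tuning of the Young constants within this split can push these below the advertised $2+40L_F\eta$ (which is at most $12$) and $2/\eta+40L_F$ (at most $12/\eta$): the obstruction is not bookkeeping but the fact that the $\eta^{-2}\|\z_{t+1}-\z_t\|^2$ piece inside the square is of the same order as the displacement bound itself, so a crude split necessarily contributes an $O(1)$ constant rather than an $O(\eta L_F)$ one.

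The paper (invoking the argument of Theorem~2 of Xu et al.) avoids this by expanding the square instead of splitting it: the cross term $\frac{2}{\eta}\langle\d_{t+1}-\nabla F(\z_t),\z_{t+1}-\z_t\rangle$ is controlled by the pre-Young form of the descent step, $\langle\d_{t+1}-\nabla F(\z_t),\z_{t+1}-\z_t\rangle\leq\bar F(\z_t)-\bar F(\z_{t+1})-\frac12\bigl(\frac{1}{\eta}-L_F\bigr)\|\z_{t+1}-\z_t\|^2$, whose negative quadratic part exactly cancels the $\frac{1}{\eta^2}\|\z_{t+1}-\z_t\|^2$ term coming from $\|\frac{1}{\eta}(\z_{t+1}-\z_t)\|^2$; what survives multiplies $\|\z_{t+1}-\z_t\|^2$ only by $O(L_F/\eta)$, and substituting the displacement bound then produces the $2+40\eta L_F$, $2/\eta$, and $40L_F$ coefficients. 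Your route does prove a correct inequality of the same shape with absolute constants (roughly $27$ on the error sum and $25\Delta/(\eta T)$), which would still suffice for the $O(\epsilon^{-4})$ complexity in Theorem~\ref{thm:short_stat}, but it does not establish Lemma~\ref{lemma:F_smooth} with the constants as stated.
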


\begin{proof}
Recall the update of $\z_{t+1}$ is
\begin{equation*}
\begin{aligned}
    \z_{t+1} &= \Pi_{\chi}(\z_t-\eta\d_{t+1}) \\
    &= \argmin_{\z\in\R^{d+n}} \left\{\delta_{\chi}(\z) + \langle\d_{t+1},\z-\z_t\rangle + \frac{1}{2\eta}||\z-\z_t||^2 \right\}.
\end{aligned} 
\end{equation*}

Then by Exercise 8.8 and Theorem 10.1 of~\citet{rockafellar2009variational}, we know
\begin{equation*}
    -\d_{t+1}-\frac{1}{\eta}(\z_{t+1}-\z_t)\in \hat{\partial}\delta_{\chi}(\z_{t+1}),
\end{equation*}
which implies that
\begin{equation}
    \nabla F(\z_{t+1})-\d_{t+1}-\frac{1}{\eta}(\z_{t+1}-\z_t)\in\nabla F(\z_{t+1}) + \hat{\partial}\delta_{\chi}(\z_{t+1})= \hat{\partial}\bar{F}(\z_{t+1}).
\label{eq:lemma_F_smooth_core}
\end{equation}

By the update of $\z_{t+1}$, we also have 
\begin{equation*}
    \delta_{\chi}(\z_{t+1}) + \langle \d_{t+1},\z_{t+1}-\z_t \rangle + \frac{1}{2\eta}||\z_{t+1}-\z_t||^2 \leq \delta_{\chi}(\z_t).
\end{equation*}

Since $F(\z)$ is $L_F$-smooth, we have
\begin{equation*}
F(\z_{t+1})\leq F(\z_t) + \langle \nabla F(\z_t),\z_{t+1}-\z_t \rangle + \frac{L_F}{2}||\z_{t+1}-\z_t||^2.
\end{equation*}

Combining the above two inequalities, we obtain
\begin{equation*}
    \langle \d_{t+1}-\nabla F(\z_t),\z_{t+1}-\z_t \rangle + \frac{1}{2}\left(\frac{1}{\eta}-L_F\right)||\z_{t+1}-\z_t||^2 \leq \bar{F}(\z_t) - \bar{F}(\z_{t+1}).
\end{equation*}

Thus we have
\begin{equation*}
     \frac{1}{2}\left(\frac{1}{\eta}-L_F\right)||\z_{t+1}-\z_t||^2 \leq \bar{F}(\z_t) - \bar{F}(\z_{t+1}) - \langle \d_{t+1}-\nabla F(\z_t),\z_{t+1}-\z_t \rangle,
\end{equation*}
where the last inequality uses $\langle \a,\b\rangle\leq ||\a||^2 + \frac{||\b||^2}{4}$. Then by rearranging the above inequality and summing it across $t=1,2,\ldots,T$, we have
\begin{equation}
\begin{aligned}
    \sum_{t=1}^T\frac{1-2\eta L_F}{4\eta}||\z_{t+1}-\z_t||^2 &\leq \bar{F}(\z_1)-\bar{F}(\z_{T+1}) + \sum_{t=1}^T\eta||\d_{t+1}-\nabla F(\z_t)||^2\\
    &\leq \bar{F}(\z_1)-\inf_{\z\in\chi} \bar{F}(\z) + \sum_{t=1}^T\eta||\d_{t+1}-\nabla F(\z_t)||^2 \\
    &= \Delta+ \sum_{t=1}^T\eta||\d_{t+1}-\nabla F(\z_t)||^2 
\label{ineq:lemma_F_smooth_a}
\end{aligned}
\end{equation}

Using the same method in the proof of Theorem 2 in~\citep{xu2019non}, we obtain the following relationship:
\begin{equation}
\begin{aligned}
    \sum_{t=1}^T||\d_{t+1}-\nabla F(\z_{t+1})+\frac{1}{\eta}(\z_{t+1}-\z_t)||^2 &\leq 2\sum_{t=1}^T||\d_{t+1}-\nabla F(\z_t)||^2 + \frac{2\Delta}{\eta} \\
    &+ \left(2 L_F^2 +\frac{3L_F}{\eta}\right)\sum_{t=1}^{T}||\z_{t+1}-\z_t||^2
\label{ineq:lemma_F_smooth_b}
\end{aligned}
\end{equation}

Recalling $\eta L_F\leq \frac{1}{4}$ and combining~(\ref{ineq:lemma_F_smooth_a}) and~(\ref{ineq:lemma_F_smooth_b}), we have
\begin{equation}
\begin{aligned}
&\sum_{t=1}^T||\d_{t+1}-\nabla F(\z_{t+1})+\frac{1}{\eta}(\z_{t+1}-\z_t)||^2\\
&\stackrel{(a)}{\leq} 2\sum_{t=1}^T||\d_{t+1}-\nabla F(\z_t)||^2 + \frac{2\Delta}{\eta} + \frac{5L_F}{\eta}\left(\frac{4}{1-2\eta L_F}\right)\left(\eta\Delta + \sum_{t=1}^T \eta^2 ||\d_{t+1}-\nabla F(\z_t)||^2\right) \\
&\stackrel{(b)}{\leq} 2\sum_{t=1}^T||\d_{t+1}-\nabla F(\z_t)||^2 + \frac{2\Delta}{\eta} + 40L_F\Delta + 40\eta L_F\sum_{t=1}^T||\d_{t+1}-\nabla F(\z_t)||^2,
\label{ineq:lemma_F_smooth_c}
\end{aligned}
\end{equation}
where (a) is due to $(2 L_F^2 +\frac{3L_F}{\eta})\leq\frac{5L_F}{\eta}$ and (b) is due to $\frac{4}{1-2\eta L_F}\leq 8$.

Recalling~(\ref{eq:lemma_F_smooth_core}) and the output rule of Algorithm~\ref{algo:sogclr_dro}, we have
\begin{equation}
\E[\text{dist}(0,\hat{\partial}\bar{F}(\z_R))^2]\leq \frac{1}{T}\sum_{t=1}^T\E[||\d_{t+1}-\nabla F(\z_{t+1})+\frac{1}{\eta}(\z_{t+1}-\z_t)||^2].
\label{ineq:lemma_F_smooth_d}
\end{equation}

At last, we combine~(\ref{ineq:lemma_F_smooth_c}) and~(\ref{ineq:lemma_F_smooth_d}) and have
\begin{equation}
    \E[\text{dist}(0,\hat{\partial}\bar{F}(\z_R))^2]\leq  \frac{2+ 40\eta L_F}{T} \sum_{t=1}^T\E[||\d_{t+1}-\nabla F(\z_t)||^2] + \frac{2\Delta}{T\eta} + \frac{40L_F\Delta}{T}. 
\end{equation}

\end{proof}

\begin{lemma}
Under Assumption~\ref{ass:1}, run Algorithm~\ref{algo:sogclr_dro} and we have
\begin{equation*}
\begin{aligned}
\sum_{t=1}^T\E[||\d_{t+1}-\nabla F(\z_t)||^2]&\leq \Delta_{\v} + \Delta_{\u} + \left(\frac{4L_F^2}{\beta_1^2}+ \frac{72n^3L_F^2}{B^2\beta^2}\right)\sum_{t=1}^T\E[||\z_t-\z_{t-1}||^2] \\
&\quad + C_1\sum_{t=1}^T\E[||g(\z_t)-\s^{t+1}||^2] + \frac{C_2\beta_1}{B} T + \frac{C_3\beta}{B} T,
\end{aligned}
\end{equation*}
where $\Delta_{\v},\Delta_{\u},C_1,C_2,C_3$ are constants defined in the proof.

\label{lemma:gradient_error}
\end{lemma}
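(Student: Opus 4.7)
The plan is to bound the $\v$-error and $\u$-error separately since
\[
\|\d_{t+1}-\nabla F(\z_t)\|^2=\|\v_{t+1}-\nabla_\w F(\z_t)\|^2+\|\u^{t+1}-\nabla_\bftau F(\z_t)\|^2,
\]
and then combine the two resulting recursions. For each momentum estimator I would decompose the one-step error into a contracting term, a parameter-drift term, and a stochastic estimation term, and telescope over $t=1,\dots,T$.

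For the $\w$-coordinate I would write
\[
\v_{t+1}-\nabla_\w F(\z_t) = (1-\beta_1)\bigl[\v_t-\nabla_\w F(\z_{t-1})\bigr] + (1-\beta_1)\bigl[\nabla_\w F(\z_{t-1})-\nabla_\w F(\z_t)\bigr] + \beta_1\bigl[G(\w_t)-\nabla_\w F(\z_t)\bigr],
\]
and split the last bracket into a conditionally zero-mean part, whose squared norm is $\mathcal{O}(1/B)$ by Assumption~\ref{ass:1}$(ii)$, and a bias part $\E[G(\w_t)\mid\F_t]-\nabla_\w F(\z_t)$. Thanks to the boundedness of $\bftau_i$, $\s_i^{t+1}$ and $g_i$ (using $\tilde{\tau}$ from Lemma~\ref{lemma:tau_upper_bound} and the lower bound $\hat{g}$), this bias is controlled by $\|g(\z_t)-\s^{t+1}\|$ via a Lipschitz argument on the outer map $\bftau_i/g_i$. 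Then Young's inequality with parameter $\beta_1/2$, together with the $L_F$-smoothness of $\nabla F$ (Lemma~\ref{lem:L_F_smooth}), yields a recursion of the form
\[
\E\|\v_{t+1}-\nabla_\w F(\z_t)\|^2\leq(1-\beta_1)\,\E\|\v_t-\nabla_\w F(\z_{t-1})\|^2 + \frac{cL_F^2}{\beta_1}\|\z_t-\z_{t-1}\|^2 + c'\|g(\z_t)-\s^{t+1}\|^2 + c''\beta_1^2/B;
\]
summing over $t$ produces the $\Delta_\v$, $(L_F^2/\beta_1^2)\sum\|\z_t-\z_{t-1}\|^2$, $C_1\sum\|g-\s\|^2$, and $C_2\beta_1 T/B$ contributions to the final bound.

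For the $\bftau$-coordinate the same template applies, but with the crucial modification that $\u_i$ is refreshed only on the event $\{i\in\B\}$, which has probability $B/n$. Conditioning on $\I\{i\in\B\}$ turns the one-step contraction into $(1-\beta_1 B/n)$, so telescoping the recursion for $\E\|\u^{t+1}-\nabla_\bftau F(\z_t)\|^2$ picks up a factor of $n/(B\beta_1)$; aggregating the drift term across all $n$ coordinates of $\bftau$ introduces a further factor of $n/B$, and the additional amplification from the $\bftau_i$-dependence of $g_i$ (controlled through the moving-average parameter $\beta_0$ in $\s_i^{t+1}$) produces the cubic constant $72n^3L_F^2/(B^2\beta^2)$. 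The variance and bias terms aggregate analogously, yielding $\Delta_\u$ and $C_3\beta T/B$.

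The main obstacle will be the clean bookkeeping for the $\bftau$ recursion: because only $B$ of the $n$ coordinates of $\u$ are updated per iteration, one must condition carefully on $\I\{i\in\B\}$ to obtain a valid contraction and then telescope over time and coordinates so that the factors of $n$, $B$, $\beta_0$, and $\beta_1$ combine correctly. A secondary technical subtlety is the bias bound: since the outer function $f_i(\bftau_i,\cdot)$ involves $\bftau_i/g_i$ and $\log g_i$, the Lipschitz constant required to pass $\|\s^{t+1}-g(\z_t)\|$ into a gradient error depends on both the lower bound $\hat{g}>0$ and the upper bound $\bftau_i\leq\tilde{\tau}$ from Lemma~\ref{lemma:tau_upper_bound}, closing the loop back to the smoothness infrastructure used to state the lemma.
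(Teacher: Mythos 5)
Your proposal follows essentially the same route as the paper's proof: the same split of $\|\d_{t+1}-\nabla F(\z_t)\|^2$ into the $\v$- and $\u$-errors, the same momentum-error decomposition with Young's inequality giving a $(1-\beta_1)$ (resp.\ $(1-B\beta/n)$, from the partial update of $\u$) contraction, the conditionally zero-mean variance term of order $1/B'$, and the bias passed to $\|g(\z_t)-\s^t\|^2$ via Lipschitzness of the outer map using the bounds $\tilde{\tau}$ and $\hat{g}$. One minor remark: the paper's own derivation of the $\u$-recursion yields a coefficient $72nL_F^2/(B^2\beta^2)$ rather than the $n^3$ appearing in the lemma statement, so the extra amplification you attribute to the $\beta_0$ moving average is neither needed nor present in the actual argument.
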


\begin{proof}
Recalling $\d_{t+1}=(\v_{t+1}^\top,{\u^{t+1}}^\top)^\top$ and $\nabla F(\z_t)=(\nabla_{\w}F(\z_t) ,\nabla_{\bftau}F(\z_t) )^\top$, we have
\begin{equation*}
   ||\d_{t+1}-\nabla F(\z_t)||^2 =||\v_{t+1}-\nabla_{\w}F(\z_t)||^2 + ||\u^{t+1}-\nabla_{\bftau}F(\z_t)||^2
\end{equation*}

We first establish the bound for $||\v_{t+1}-\nabla_{\w}F(\z_t)||^2$. Recall the define the following notations
\begin{equation*}
\begin{aligned}
&\nabla F(\z_t)=\frac{1}{n}\sum_{\x_i\in\S}\nabla_{\w}f_i(g_i(\z_t))\nabla_{\w}g_i(\z_t),\\
    &\nabla F(\z_t,\s^t)=\frac{1}{n}\sum_{\x_i\in\S}\nabla_{\w}f_i(\s_i^t)\nabla_{\w}g_i(\z_t),\\
    &\v_{t+1}=(1-\beta_1)\v_t + \beta_1 G(\w_t),\\
    &G(\w_t)=\frac{1}{B}\sum_{\x_i\in\B}\nabla_{\w}f_i(\s_i^t)\nabla_{\w}g_i(\z_t,\B).
\end{aligned}
\end{equation*}

By expansion, we have
\begin{equation}
\begin{aligned}
    &\E_t[||\nabla_{\w}F(\z_t)-\v_{t+1}||^2] \\
=&\E_t[||\nabla_{\w}F(\z_t)-(1-\beta_1)\v_t - \beta_1 G(\w_t)||^2] \\
=&\E_t[|| (1-\beta_1)(\nabla_{\w}F(\z_{t-1})-\v_t) + (1-\beta_1)(\nabla_{\w}F(\z_t)-\nabla_{\w}F(\z_{t-1})) \\
&\quad + \beta_1 (\nabla_{\w}F(\z_t) -\nabla_{\w}F(\z_t,\s^t)) + \beta_1 (\nabla_{\w}F(\z_t,\s^t)-G(\w_t)) ||^2]\\
\stackrel{(a)}{=}&|| (1-\beta_1)(\nabla_{\w}F(\z_{t-1})-\v_t) + (1-\beta_1)(\nabla_{\w}F(\z_t)-\nabla_{\w}F(\z_{t-1})) \\
&\quad + \beta_1 (\nabla_{\w}F(\z_t) -\nabla_{\w}F(\z_t,\s^t))||^2 + \beta_1^2 \E_t[||\nabla_{\w}F(\z_t,\s^t)-G(\w_t)||^2]\\
\stackrel{(b)}{\leq}& (1+\beta_1)(1-\beta_1)^2 ||\nabla_{\w}F(\z_{t-1}) -\v_t||^2 \\
&\quad+ 2\left(1+\frac{1}{\beta_1}\right)\left[ ||\nabla_{\w}F(\z_t)-\nabla_{\w}F(\z_{t-1})||^2 + \beta_1^2||\nabla_{\w}F(\z_t) -\nabla_{\w}F(\z_t,\s^t)||^2\right] \\
&\quad + \beta_1^2 \E_t[||\nabla_{\w}F(\z_t,\s^t)-G(\w_t)||^2] \\
\stackrel{(c)}{\leq}& (1-\beta_1) ||\nabla_{\w}F(\z_{t-1}) -\v_t||^2 + \frac{4 L_F^2}{\beta_1}||\z_t-\z_{t-1}||^2 + 4\beta_1||\nabla_{\w}F(\z_t)-\nabla_{\w}F(\z_t,\s^t)||^2\\
&\quad + \beta_1^2 \E_t[||\nabla_{\w}F(\z_t,\s^t)-G(\w_t)||^2],
\label{ineq:lemma_grad_error_1}
\end{aligned}
\end{equation}
where (a) is due to $\E_t[G(\w_t)]=\nabla_{\w} F(\z_t,\s^t)$, (b) is due to Young's inequality $||\a+\b||^2\leq(1+\gamma)||\a||^2+(1+\frac{1}{\gamma})||\b||^2$, and (c) is due to $\beta_1\leq 1 \rightarrow 1+\frac{1}{\beta_1}\leq\frac{2}{\beta_1}$.

Furthermore, one may bound $\E_t[||\nabla_{\w}F(\z_t)-\nabla_{\w}F(\z_t,\s^t)||^2]$ as follows:
\begin{equation}
\begin{aligned}
    &\E_t[||\nabla_{\w}F(\z_t)-\nabla_{\w}F(\z_t,\s^{t+1})||^2]\\
    =&\E_t\left[\left\Vert \frac{1}{n}\sum_{\x_i\in\D}\nabla_{\w}f_i(g_i(\z_t))\nabla_{\w}g_i(\z_t)- \frac{1}{n}\sum_{\x_i\in\D}\nabla_{\w}f_i(\s_i^t)\nabla_{\w}g_i(\z_t) \right\Vert^2\right] \\
    \leq&\frac{1}{n}\sum_{\x_i\in\D}C_g^2 L_f^2 \E_t[||g_i(\z_t)-\s_i^t||^2]\\
    =&\frac{C_g^2 L_f^2}{n} \E_t[||g(\z_t)-\s^t||^2].
\label{ineq:lemma_grad_error_1a}
\end{aligned}
\end{equation}

On the other hand, $\E_t[||\nabla_{\w}F(\z_t,\s^t)-G(\w_t)||^2]$ can be bounded by some constants:
\begin{equation}
\begin{aligned}
    &\E_t[||\nabla_{\w}F(\z_t,\s^t)-G(\w_t)||^2]\\
    =&\E_t\left[\left\Vert \frac{1}{n}\sum_{\x_i\in\D}\nabla_{\w}f_i(\s_i^t)\nabla_{\w}g_i(\z_t)- \frac{1}{B}\sum_{\x_i\in\B}\nabla_{\w}f_i(\s_i^t)\nabla_{\w}g_i(\z_t,\B)\right\Vert^2\right] \\
    \leq & \E_t \left[2\left\Vert \frac{1}{n}\sum_{\x_i\in\D}\nabla_{\w}f_i(\s_i^t)\nabla_{\w}g_i(\z_t) - \frac{1}{B}\sum_{\x_i\in\B}\nabla_{\w}f_i(\s_i^t)\nabla_{\w}g_i(\z_t) \right\Vert^2 \right. \\
    &\quad\left.2\left\Vert\frac{1}{B}\sum_{\x_i\in\B}\nabla_{\w}f_i(\s_i^t)\nabla_{\w}g_i(\z_t) - \frac{1}{B}\sum_{\x_i\in\B}\nabla_{\w}f_i(\s_i^t)\nabla_{\w}g_i(\z_t,\B) \right\Vert^2 \right]\\
    \leq & \frac{2 C_f^2 C_g^2}{B} + \frac{2 C_f^2 \sigma^2}{B'}.
\label{ineq:lemma_grad_error_1b}
\end{aligned}
\end{equation}

Substituting~(\ref{ineq:lemma_grad_error_1a}) and~(\ref{ineq:lemma_grad_error_1b}) into~(\ref{ineq:lemma_grad_error_1}), we have
\begin{equation}
\begin{aligned}
\E_t[||\nabla_{\w}F(\z_t)-\v_{t+1}||^2]&\leq (1-\beta_1) ||\nabla_{\w}F(\z_{t-1}) -\v_t||^2 + \frac{4 L_F^2}{\beta_1}\E_{t}[||\z_t-\z_{t-1}||^2] \\&+ \frac{4\beta_1 C_g^2 L_f^2}{n}\E_t[||g(\z_t)-\s^t||^2] + \frac{2\beta_1^2 C_f^2(C_g^2+\sigma^2)}{\min\{B,B'\}}.
\end{aligned}
\label{ineq:lemma_grad_error_1_res}
\end{equation}

Taking summation over $t=1,2,\ldots,T$, we obtain
\begin{equation}
\begin{aligned}
\sum_{t=1}^{T}\E[||\nabla_{\w}F(\z_{t})-\v_{t+1}||^2]&\leq \frac{1}{\beta_1}\Delta_{\v} + \frac{4L_F^2}{\beta_1^2}\sum_{t=1}^{T}\E[||\z_t-\z_{t-1}||^2] \\
&+\frac{4 C_g^2 L_f^2}{n}\sum_{t=1}^{T}\E[||g(\z_t)-\s^t||^2] + \frac{2\beta_1 C_f^2(C_g^2+\sigma^2)}{\min\{B,B'\}}T,
\end{aligned}
\label{ineq:lemma_grad_error_1_res_sum}
\end{equation}
where $\Delta_{\v}$ denotes $||\nabla_{\w}F(\z_0)-\v_{1}||^2$.

Next, we derive the bound for $||\u^{t+1}-\nabla_{\bftau}F(\z_t)||^2$. Note that
\begin{equation*}
    ||\u^{t+1}-\nabla_{\bftau}F(\z_t)||^2=\sum_{\x_i\in\D}||\u_i^{t+1}-\nabla_{\bftau_i}F(\z_t)||^2 = \sum_{\x_i\in\D}\left\Vert\u_i^{t+1}-\frac{1}{n}\nabla_{\bftau_i}F_i(\z_t)\right\Vert^2
\end{equation*}

Recall and define the following notations 
\begin{equation*}
\begin{aligned}
&\u_i^{t+1}=\begin{cases}(1-\beta)\u_i^t +\beta G(\bftau_i^t) \quad & \text{if }\x_i\in \B\\ \u_i^t & \text{o.w.}\end{cases},\quad \tilde{\u}_i^t:=(1-\beta)\u_i^t +\beta G(\bftau_i^t),\x_i\in\B, \\
&\quad\quad\quad\quad\quad\quad \nabla_{\bftau_i}F(\z_{t}) = \frac{1}{n} \left(\frac{\bftau_i^t}{g_i(\z_t)}\nabla_{\bftau_i}g_i(\z_t) + \log(g_i(\z_t)) + \rho\right), \\
&\quad\quad\quad\quad\quad\quad \nabla_{\bftau_i}F(\z_{t},\s_i^t) = \frac{1}{n} \left(\frac{\bftau_i^t}{\s_i^t}\nabla_{\bftau_i}g_i(\z_t) + \log(\s_i^t) + \rho\right), \\
&\quad\quad\quad\quad\quad\quad G(\bftau_i^t)=\frac{1}{n} \left(\frac{\bftau_i^t}{\s_i^t}\nabla_{\bftau_i}g_i(\z_t,\B) + \log(\s_i^t) + \rho\right).
\end{aligned}
\end{equation*}

Then we obtain
\begin{equation}
\begin{aligned}
&||\tilde{\u}_i^t - \nabla_{\bftau_i} F(\z_{t-1}) ||^2 \\
=&||(1-\beta)\u_i^t +\beta G(\bftau_i^t)-\nabla_{\bftau_i} F(\z_{t-1})||^2 \\
=&|| (1-\beta)(\u_i^t-\nabla_{\bftau_i} F(\z_{t-1})) +(1-\beta)(\nabla_{\bftau_i}F(\z_t) - \nabla_{\bftau_i}F(\z_{t-1})) \\
&\quad+ \beta (\nabla_{\bftau_i}F(\z_{t},\s_i^t)-\nabla_{\bftau_i}F(\z_{t})) + \beta (G(\bftau_i^t)-\nabla_{\bftau_i}F(\z_{t},\s_i^t))||^2 \\
\stackrel{(a)}{=}&|| (1-\beta)(\u_i^t-\nabla_{\bftau_i} F(\z_{t-1})) + (1-\beta)(\nabla_{\bftau_i}F(\z_t) - \nabla_{\bftau_i}F(\z_{t-1})) \\
&\quad+ \beta (\nabla_{\bftau_i}F(\z_{t},\s_i^t)-\nabla_{\bftau_i}F(\z_{t}))||^2  + \beta^2|| (\nabla_{\bftau_i}F(\z_{t},\s_i^t) - G(\bftau_i^t))||^2  \\
&\quad +2\langle (1-\beta)(\u_i^t-\nabla_{\bftau_i} F(\z_{t-1})) +(1-\beta)(\nabla_{\bftau_i}F(\z_t) - \nabla_{\bftau_i}F(\z_{t-1})) \\
&\quad+ \beta (\nabla_{\bftau_i}F(\z_{t},\s_i^t)-\nabla_{\bftau_i}F(\z_{t})) , \beta (G(\bftau_i^t)-\nabla_{\bftau_i}F(\z_{t},\s_i^t))\rangle\\
\stackrel{(b)}{\leq}&(1+\beta)(1-\beta)^2 ||\nabla_{\bftau_i} F(\z_{t-1})-\u_i^t||^2 \\
&\quad+2\left(1+\frac{1}{\beta}\right)\left[||\nabla_{\bftau_i}F(\z_t) - \nabla_{\bftau_i}F(\z_{t-1})||^2+ \beta^2||\nabla_{\bftau_i}F(\z_{t}) -\nabla_{\bftau_i}F(\z_{t},\s_i^t))||^2\right] \\
&\quad+ \beta^2|| (\nabla_{\bftau_i}F(\z_{t},\s_i^{t+1}) - G(\bftau_i^t))||^2  +2\langle (1-\beta)(\u_i^t-\nabla_{\bftau_i} F(\z_{t-1})) \\
&\quad+(1-\beta)(\nabla_{\bftau_i}F(\z_t) - \nabla_{\bftau_i}F(\z_{t-1})) + \beta (\nabla_{\bftau_i}F(\z_{t},\s_i^t)-\nabla_{\bftau_i}F(\z_{t})) , \beta (G(\bftau_i^t)-\nabla_{\bftau_i}F(\z_{t},\s_i^t))\rangle\\
\stackrel{(c)}{\leq}& (1-\beta)||\nabla_{\bftau_i} F(\z_{t-1})-\u_i^t||^2 + \frac{4 L_F^2}{n^2\beta}||\z_t-\z_{t-1}||^2 + 4\beta||\nabla_{\bftau_i}F(\z_{t}) -\nabla_{\bftau_i}F(\z_{t},\s_i^t))||^2 \\
&\quad+ \beta^2|| (\nabla_{\bftau_i}F(\z_{t},\s_i^t) - G(\bftau_i^t))||^2 +2\langle (1-\beta)(\u_i^t-\nabla_{\bftau_i} F(\z_{t-1})) \\
&\quad+(1-\beta)(\nabla_{\bftau_i}F(\z_t) - \nabla_{\bftau_i}F(\z_{t-1})) + \beta (\nabla_{\bftau_i}F(\z_{t},\s_i^t)-\nabla_{\bftau_i}F(\z_{t})) , \beta (G(\bftau_i^t)-\nabla_{\bftau_i}F(\z_{t},\s_i^t))\rangle,
\end{aligned}
\label{ineq:lemma_grad_error_2}
\end{equation}
where (b) is due to Young's inequality $||\a+\b||^2\leq(1+\gamma)||\a||^2+(1+\frac{1}{\gamma})||\b||^2$, and (c) is due to $\beta\leq 1 \rightarrow 1+\frac{1}{\beta}\leq\frac{2}{\beta}$. For simplicity, we denote the first term in the last inner product as $A_i^t$ and note that $A_i^t$ does not depend on the randomness of iteration $t$.

Subsequently, we derive the bound for $||\nabla_{\bftau_i}F(\z_{t}) -\nabla_{\bftau_i}F(\z_{t},\s_i^t))||^2$
\begin{equation}
\begin{aligned}
&||\nabla_{\bftau_i}F(\z_{t}) -\nabla_{\bftau_i}F(\z_{t},\s_i^{t+1}))||^2\\
=&\left\Vert\frac{1}{n} \left(\frac{\bftau_i^t}{g_i(\z_t)}\nabla_{\bftau_i}g_i(\z_t) + \log(g_i(\z_t))\right) -\frac{1}{n} \left( \frac{\bftau_i^t}{\s_i^t}\nabla_{\bftau_i}g_i(\z_t) + \log(\s_i^t)\right)\right\Vert^2 \\
\leq&\frac{2}{n^2}\left\Vert\frac{\bftau_i^t}{g_i(\z_t)}\nabla_{\bftau_i}g_i(\z_t)  - \frac{\bftau_i^t}{\s_i^t}\nabla_{\bftau_i}g_i(\z_t)\right\Vert^2 + \frac{2}{n^2}\left\Vert\ \log(g_i(\z_t)) - \log(\s_i^t)\right\Vert^2 \\
\leq& \frac{2(\tilde{\tau}^2 C_g^2 + \hat{g}^2)}{\hat{g}^4n^2}||\s_i^t-g_i(\z_t)||^2,
\label{ineq:lemma_grad_error_2a}
\end{aligned}
\end{equation}
where $\tilde{\tau}$ denotes the upper bound for $\bftau_i$ and $\hat{g}$ denotes the lower bound for $g_i$.


Substituting~(\ref{ineq:lemma_grad_error_2a})  into~(\ref{ineq:lemma_grad_error_2}), we have
\begin{equation}
\begin{aligned}
&||\tilde{\u}_i^t - \nabla_{\bftau_i} F(\z_{t-1}) ||^2\\
\leq&(1-\beta)||\nabla_{\bftau_i} F(\z_{t-1})-\u_i^t||^2 + \frac{4 L_F^2}{n^2\beta}||\z_t-\z_{t-1}||^2 + \frac{8\beta(\tilde{\tau}^2 C_g^2 + \hat{g}^2)}{\hat{g}^4n^2}||\s_i^t-g_i(\z_t)||^2 \\
&\quad+ \beta^2|| (\nabla_{\bftau_i}F(\z_{t},\s_i^t) - G(\bftau_i^t))||^2 +2\langle A_i^t,\beta (G(\bftau_i^t)-\nabla_{\bftau_i}F(\z_{t},\s_i^t)) \rangle
\end{aligned}
\end{equation}

\begin{equation*}
\begin{aligned}
&\E_t[||\u^{t+1}-\nabla_{\bftau}F(\z_{t-1})||^2]\\
&=\E_t\left[\sum_{x_i\in \B}\|\u_i^{t+1}-\nabla_{\bftau_i}F(\z_{t-1})\|^2+\sum_{x_i\not\in \B}\|\u_i^t-\nabla_{\bftau_i}F(\z_{t-1})\|^2\right]\\
&=\E_t\left[\sum_{x_i\in \B}\|\tilde{\u}_i^t-\nabla_{\bftau_i}F(\z_{t-1})\|^2\right]+\frac{n-B}{n}\|\u^t-\nabla_{\bftau}F(\z_{t-1})\|^2\\
&\leq \E_t\bigg[\sum_{x_i\in \B}(1-\beta)||\nabla_{\bftau_i} F(\z_{t-1})-\u_i^t||^2 + \frac{4 L_F^2}{n^2\beta}||\z_t-\z_{t-1}||^2 + \frac{8\beta(\tilde{\tau}^2 C_g^2 + \hat{g}^2)}{\hat{g}^4n^2}||\s_i^t-g_i(\z_t)||^2 \\
&\quad+ \beta^2|| (\nabla_{\bftau_i}F(\z_{t},\s_i^t) - G(\bftau_i^t))||^2 +2\langle A_i^t,\beta (G(\bftau_i^t)-\nabla_{\bftau_i}F(\z_{t},\s_i^t)) \rangle\bigg]+\frac{n-B}{n}\|\u^t-\nabla_{\bftau}F(\z_{t-1})\|^2\\
&\leq \frac{B}{n}(1-\beta)||\nabla_{\bftau} F(\z_{t-1})-\u^t||^2 + \frac{4 BL_F^2}{n^2\beta}||\z_t-\z_{t-1}||^2 + \frac{8B\beta(\tilde{\tau}^2 C_g^2 + \hat{g}^2)}{\hat{g}^4n^3}||\s^t-g(\z_t)||^2 \\
&\quad+ \beta^2\E_t\left[\sum_{x_i\in \B}|| (\nabla_{\bftau_i}F(\z_{t},\s_i^t) - G(\bftau_i^t))||^2\right] +2\E_t\left[\sum_{x_i\in \B}\langle A_i^t,\beta (G(\bftau_i^t)-\nabla_{\bftau_i}F(\z_{t},\s_i^t)) \rangle\right] \\
&\quad +\frac{n-B}{n}\|\u^t-\nabla_{\bftau}F(\z_{t-1})\|^2\\
&\stackrel{(a)}{\leq} (1-\frac{B\beta}{n})||\nabla_{\bftau} F(\z_{t-1})-\u^t||^2 + \frac{4 BL_F^2}{n^2\beta}||\z_t-\z_{t-1}||^2 + \frac{8B\beta(\tilde{\tau}^2 C_g^2 + \hat{g}^2)}{\hat{g}^4n^3}||\s^t-g(\z_t)||^2 \\
&\quad+ \frac{\beta^2B\tilde{\tau}^2 \sigma^2}{\hat{g}^2 B' n^2}  \\
\end{aligned}
\end{equation*}
where the last inequality uses the following facts
\begin{equation*}
    \begin{aligned}
        \E_t\left[\sum_{\x_i\in \B}|| \nabla_{\bftau_i}F(\z_{t},\s_i^t) - G(\bftau_i^t)||^2\right]
        &= \frac{1}{|\overline{\B}|}\sum_{\B\in \overline{\B}}\sum_{\x_i\in \B}|| \nabla_{\bftau_i}F(\z_{t},\s_i^t) - G(\bftau_i^t)||^2\\
        &=\frac{1}{|\overline{\B}|}\sum_{\x_i \in \D}|\overline{\B}_i|\frac{1}{|\overline{\B}_i|}\sum_{\B\in\overline{\B}_i}|| \nabla_{\bftau_i}F(\z_{t},\s_i^t) - G(\bftau_i^t)||^2\\
        &\leq \frac{1}{|\overline{\B}|}\sum_{\x_i \in \D}|\overline{\B}_i|\frac{1}{|\overline{\B}_i|}\sum_{\B\in\overline{\B}_i}\left\| \frac{1}{n}\frac{\bftau_i^t}{\s_i^t}\nabla_{\bftau_i}g_i(\z_t) -\frac{1}{n}\frac{\bftau_i^t}{\s_i^t}\nabla_{\bftau_i}g_i(\z_t,\B)\right\|^2\\
        &\leq \frac{1}{|\overline{\B}|}\sum_{\x_i \in \D}|\overline{\B}_i|\frac{\tilde{\tau}^2 \sigma^2}{\hat{g}^2 B' n^2} = \frac{B\tilde{\tau}^2 \sigma^2}{\hat{g}^2 B' n^2}
    \end{aligned}
\end{equation*}
and
\begin{equation*}
    \begin{aligned}
        &\E_t\left[\sum_{\x_i\in \B}\langle A_i^t,\beta (G(\bftau_i^t)-\nabla_{\bftau_i}F(\z_{t},\s_i^t)) \rangle\right]\\
        &=\frac{1}{|\overline{\B}|}\sum_{\B\in \overline{\B}}\sum_{\x_i\in \B}\langle A_i^t,\beta (G(\bftau_i^t)-\nabla_{\bftau_i}F(\z_{t},\s_i^t)) \rangle\\
        &=\frac{1}{|\overline{\B}|}\sum_{\x_i \in \D}|\overline{\B}_i|\frac{1}{|\overline{\B}_i|}\sum_{\B\in\overline{\B}_i}\langle A_i^t,\beta (G(\bftau_i^t)-\nabla_{\bftau_i}F(\z_{t},\s_i^t)) \rangle\\
        &=\frac{1}{|\overline{\B}|}\sum_{\x_i \in \D}|\overline{\B}_i|\frac{1}{|\overline{\B}_i|}\sum_{\B\in\overline{\B}_i}\left\langle A_i^t,\beta \left(\frac{1}{n}\frac{\bftau_i^t}{\s_i^t}\nabla_{\bftau_i}g_i(\z_t) -\frac{1}{n}\frac{\bftau_i^t}{\s_i^t}\nabla_{\bftau_i}g_i(\z_t,\B)\right) \right\rangle = 0
    \end{aligned}
\end{equation*}
where $\overline{\B}$ denotes the set of all possible batch $\B\subset \D$ of size $B$, and $\overline{\B}_i$ denotes $\{\B:\x_i\in \B, \B\in \overline{\B}\}$.

Furthermore,
\begin{equation*}
\begin{aligned}
\E_t[||\u^{t+1}-\nabla_{\bftau}F(\z_{t})||^2] &\stackrel{(a)}{\leq}  \left(1 + \frac{B\beta}{2n}\right)\E_t[||\u^{t+1}-\nabla_{\bftau}F(\z_{t-1})||^2] + \left(1 + \frac{2n}{B\beta}\right)||\nabla_{\bftau}F(\z_{t-1})-\nabla_{\bftau}F(\z_t)||^2 \\
&\stackrel{(b)}{\leq}  \left(1-\frac{B\beta}{2n}\right)||\nabla_{\bftau} F(\z_{t-1})-\u^t||^2 + \frac{8BL_F^2}{n^2\beta }||\z_t-\z_{t-1}||^2 \\
&\quad+\frac{16B\beta(\tilde{\tau}^2 C_g^2 + \hat{g}^2)}{n^3\hat{g}^4}||\s^t-g(\z_t)||^2 + \frac{2B\tilde{\tau}^2 \sigma^2\beta^2}{n^2\hat{g}^2 B'} + \frac{4L_F^2}{B\beta }||\z_t-\z_{t-1}||^2\\
&\stackrel{(c)}{\leq}  \left(1-\frac{B\beta}{2n}\right)||\nabla_{\bftau} F(\z_{t-1})-\u^t||^2+ \frac{2B\tilde{\tau}^2 \sigma^2\beta^2}{n^2\hat{g}^2 B'}\\
&\quad+\frac{16B\beta(\tilde{\tau}^2 C_g^2 + \hat{g}^2)}{n^3\hat{g}^4}||\s^t-g(\z_t)||^2  + \frac{36L_F^2}{B\beta}||\z_t-\z_{t-1}||^2,
\end{aligned}
\end{equation*}
where we use Young's inequality in (a), and use the assumption $\frac{B\beta}{2n}\leq 1$ in (b) and (c).

Taking summation over $t=1,2,\ldots,T$, we obtain
\begin{equation}
\begin{aligned}
\sum_{t=1}^{T}\E[||\u^{t+1}-\nabla_{\bftau}F(\z_{t})||^2] 
&\leq \frac{2n}{B\beta}\Delta_{\u} + \frac{72nL_F^2}{B^2\beta^2}\sum_{t=1}^{T}\E[||\z_t-\z_{t-1}||^2] \\
&+\frac{32(\tilde{\tau}^2 C_g^2 + \hat{g}^2)}{n^2\hat{g}^4}\sum_{t=1}^{T}\E[||\s^t-g(\z_t)||^2] + \frac{4\tilde{\tau}^2 \sigma^2\beta}{nB'\hat{g}^2}T,
\label{ineq:lemma_grad_error_2_res_sum}
\end{aligned}
\end{equation}
where $\Delta_{\u}$ denotes $||\nabla_{\bftau} F(\z_{0})-\u^1||^2$.

At last, we combine~(\ref{ineq:lemma_grad_error_1_res_sum}) and~(\ref{ineq:lemma_grad_error_2_res_sum}), and establish the following inequality:
\begin{equation*}
\begin{aligned}
&\sum_{t=1}^T\E[||\d_{t+1}-\nabla F(\z_t)||^2] =\sum_{t=1}^T\E[||\v_{t+1}-\nabla_{\w}F(\z_t)||^2] + \sum_{t=1}^T\E[||\u^{t+1}-\nabla_{\bftau}F(\z_t)||^2] \\
&\leq \frac{1}{\beta_1}\Delta_{\v} + \frac{2n}{B\beta}\Delta_{\u} + \left(\frac{4L_F^2}{\beta_1^2}+ \frac{72nL_F^2}{B^2\beta^2}\right)\sum_{t=1}^T\E[||\z_t-\z_{t-1}||^2] \\
&+ \left(\frac{4 C_g^2 L_f^2}{n}+ \frac{32(\tilde{\tau}^2 C_g^2 + \hat{g}^2)}{n^2\hat{g}^4}\right)\sum_{t=1}^T\E[||g(\z_t)-\s^t||^2] + \frac{2\beta_1 C_f^2(C_g^2+\sigma^2)}{\min\{B,B'\}}T + \frac{4\tilde{\tau}^2 \sigma^2\beta }{nB'\hat{g}^2}T \\
&\leq \frac{1}{\beta_1}\Delta_{\v} + \frac{2n}{B\beta}\Delta_{\u} + \left(\frac{4L_F^2}{\beta_1^2}+ \frac{72nL_F^2}{B^2\beta^2}\right)\sum_{t=1}^T\E[||\z_t-\z_{t-1}||^2] \\
&+ \frac{C_1}{n}\sum_{t=1}^T\E[||g(\z_t)-\s^t||^2] + \frac{C_2\beta_1}{\min\{B,B'\}} T + \frac{C_3\beta}{nB'} T,
\end{aligned}
\end{equation*}
where $C_1=\left(4 C_g^2 L_f^2+ \frac{32(\tilde{\tau}^2 C_g^2 + \hat{g}^2)}{\hat{g}^4}\right)$, $C_2=2 C_f^2(C_g^2+\sigma^2)$ and $C_3=\frac{4\tilde{\tau}^2 \sigma^2 }{\hat{g}^2}$.

\end{proof}

\begin{lemma}
Under Assumption~(\ref{ass:1}), run Algorithm~\ref{algo:sogclr_dro} and we have
\begin{equation*}
\begin{aligned}
    \sum_{t=1}^T \E[||\s^t-g(\z_t)||^2]\leq \frac{2n}{B\beta}\Delta_{\s} + \frac{8n^3 C_g^2}{B^2\beta^2}\sum_{t=1}^{T}\E[||\z_t-\z_{t-1}||^2] + \frac{4n\beta\sigma^2 T}{B'}.
\end{aligned}
\end{equation*}
where $\Delta_{\s}$ is a constant defined in the proof.

\label{lemma:s_tracking_error}
\end{lemma}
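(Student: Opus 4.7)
The plan is to establish a one-step geometric recursion $a_{t+1}\leq (1-q)a_t+p+r_t$ for $a_t:=\E[\|\s^t-g(\z_t)\|^2]$, with contraction rate $q=\Theta(B\beta/n)$, variance floor $p$ of order $B\beta^2\sigma^2/B'$, and drift $r_t$ of order $(n^2 C_g^2/(B\beta))\|\z_{t+1}-\z_t\|^2$. Unrolling the recursion yields $\sum_{t=1}^T a_t \leq q^{-1}a_0 + q^{-1}Tp + q^{-1}\sum_t r_t$, and substituting the specific values for $q,p,r_t$ reproduces the three terms in the lemma statement (with a harmless index shift $\|\z_{t+1}-\z_t\|^2 \mapsto \|\z_t-\z_{t-1}\|^2$).

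First, I account for the partial-update structure of $\s$. Step~6 of Algorithm~\ref{algo:sogclr_dro} refreshes $\s_i^{t+1}=(1-\beta)\s_i^t+\beta g_i(\w_t,\bftau_i^t;\B_i)$ only for $\x_i\in\B$, and leaves the remaining coordinates frozen. Splitting the squared norm along these two groups, conditioning on $\x_i\in\B$, expanding the square, using $\E[g_i(\w_t,\bftau_i^t;\B_i)\mid \x_i\in\B]=g_i(\z_t)$ to kill the cross term, and invoking Assumption~\ref{ass:1}(ii) for the mini-batch variance $\sigma^2/B'$, I obtain
\begin{equation*}
\E\bigl[\|\s^{t+1}-g(\z_t)\|^2\bigr] \leq \Bigl(\tfrac{B}{n}(1-\beta)^2+\tfrac{n-B}{n}\Bigr)\|\s^t-g(\z_t)\|^2 + \tfrac{B\beta^2\sigma^2}{B'} \leq \Bigl(1-\tfrac{B\beta}{n}\Bigr)\|\s^t-g(\z_t)\|^2 + \tfrac{B\beta^2\sigma^2}{B'},
\end{equation*}
where I used $P(\x_i\in\B)=B/n$ and $1-(1-\beta)^2 = \beta(2-\beta) \geq \beta$ for $\beta\in(0,1)$.

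Second, I bridge from $g(\z_t)$ to $g(\z_{t+1})$ on the left-hand side. By Young's inequality with parameter $c=B\beta/(2n)$ and $C_g$-Lipschitzness of each scalar $g_i$ (Assumption~\ref{ass:1}(iii)), which yields $\|g(\z_{t+1})-g(\z_t)\|^2 \leq nC_g^2\|\z_{t+1}-\z_t\|^2$,
\begin{equation*}
\|\s^{t+1}-g(\z_{t+1})\|^2 \leq (1+c)\|\s^{t+1}-g(\z_t)\|^2 + \bigl(1+\tfrac{1}{c}\bigr) nC_g^2 \|\z_{t+1}-\z_t\|^2.
\end{equation*}
Taking expectation, chaining with the first bound, and using $(1+c)(1-B\beta/n)\leq 1-B\beta/(2n)$ together with $1+1/c\leq 4n/(B\beta)$ produces the advertised recursion with $q=B\beta/(2n)$, $p=2B\beta^2\sigma^2/B'$, and $r_t=(4n^2C_g^2/(B\beta))\|\z_{t+1}-\z_t\|^2$; unrolling and dividing by $q$ yields exactly the stated inequality.

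The chief nuisance is the \emph{partial-update} bookkeeping in the first step: only a random $B$-subset of coordinates of $\s$ is refreshed each iteration, which is what forces the $n/B$ blow-up in the final bound via the sampling probability and prevents a cleaner $O(\beta^2)$ variance contraction. Once that accounting is carried out, the remainder is a routine Young-plus-Lipschitz manipulation analogous to, but simpler than, the $\u,\v$ analysis in Lemma~\ref{lemma:gradient_error}.
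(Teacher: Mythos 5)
Your proposal is correct and follows essentially the same route as the paper's proof: the same conditional expansion over the randomly refreshed $B$-subset of coordinates (yielding the $1-B\beta/n$ contraction and the $B\beta^2\sigma^2/B'$ variance floor), the same Young-plus-Lipschitz bridge from $g(\z_t)$ to $g(\z_{t+1})$ with parameter $B\beta/(2n)$, and the same unrolling to obtain the three stated terms. No gaps.
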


\begin{proof}
Recall and define the following notations:
\begin{equation*}
\begin{aligned}
&\s_i^{t+1}=\begin{cases}(1-\beta)\s_i^t +\beta g_i(\z_t,\B) \quad & \text{if }\x_i\in \B\\ \s_i^t & \text{o.w.}\end{cases},\quad \tilde{\s}_i^t:=(1-\beta)\s_i^t +\beta g_i(\z_t,\B),\x_i\in\B.
\end{aligned}
\end{equation*}

Then we obtain
\begin{equation*}
\begin{aligned}
   &||\tilde{\s}_i^t - g_i(\z_t)||^2 \\
   = & ||(1-\beta)\s_i^t +\beta g_i(\z_t,\B)- g_i(\z_t) ||^2 \\
=&||(1-\beta)(\s_i^t -g_i(\z_t)) + \beta (g_i(\z_t,\B)-g_i(\z_{t})) ||^2 \\
= & (1-\beta)^2\|(\s_i^t -g_i(\z_t))\|^2   + \beta^2 ||(g_i(\z_t,\B)-g_i(\z_{t}))||^2+2\langle(1-\beta)(\s_i^t -g_i(\z_t)) , \beta (g_i(\z_t,\B)-g_i(\z_{t})) \rangle .
\end{aligned}
\end{equation*}

Considering the randomness of iteration $t$, we have
\begin{equation*}
    \begin{aligned}
        &\E_t[||\s^{t+1}-g(\z_t)||^2] \\
        &= \E_t\left[\sum_{x_i\in \B}||\s_i^{t+1}-g_i(\z_t)||^2+\sum_{x_i\not\in \B}||\s_i^t-g_i(\z_t)||^2\right]\\
        &=\E_t\left[\sum_{x_i\in \B}||\tilde{\s}_i^t-g_i(\z_t)||^2\right] +\frac{n-B}{n}||\s^t-g(\z_t)||^2\\
        &=\E_t\bigg[\sum_{x_i\in \B}(1-\beta)^2\|(\s_i^t -g_i(\z_t))\|^2   + \beta^2 ||g_i(\z_t,\B)-g_i(\z_{t})||^2\\
        &\quad +2\langle(1-\beta)(\s_i^t -g_i(\z_t)) , \beta (g_i(\z_t,\B)-g_i(\z_{t})) \rangle\bigg]  +\frac{n-B}{n}||\s^t-g(\z_t)||^2\\
        &=\frac{B}{n}(1-\beta)||\s^t-g(\z_t)||^2 + \E_t\left[\sum_{x_i\in \B}\beta^2 ||g_i(\z_t,\B)-g_i(\z_{t})||^2\right]\\
        &\quad + \E_t\left[\sum_{x_i\in \B}2\langle(1-\beta)(\s_i^t -g_i(\z_t)) , \beta (g_i(\z_t,\B)-g_i(\z_{t})) \rangle\right]+\frac{n-B}{n}||\s^t-g(\z_t)||^2\\
       &\stackrel{(a)}{\leq}  (1-\frac{B\beta}{n})||\s^t-g(\z_t)||^2 + \frac{B\beta^2\sigma^2}{B'} \\
    \end{aligned}
\end{equation*}

where $(a)$ uses the following facts

\begin{equation*}
    \begin{aligned}
        \E_t\left[\sum_{x_i\in \B}\beta^2|| g_i(\z_t,\B)-g_i(\z_{t})||^2\right]
        &= \frac{1}{|\overline{\B}|}\sum_{\B\in \overline{\B}}\sum_{x_i\in \B}\beta^2|| g_i(\z_t,\B)-g_i(\z_{t})||^2\\
        &=\frac{1}{|\overline{\B}|}\sum_{x_i \in \D}|\overline{\B}_i|\frac{1}{|\overline{\B}_i|}\sum_{\B\in\overline{\B}_i}\beta^2|| g_i(\z_t,\B)-g_i(\z_{t})||^2\\
        &\leq \frac{1}{|\overline{\B}|}\sum_{x_i \in \D}|\overline{\B}_i| \frac{\beta^2\sigma^2}{B'}= \frac{B\beta^2\sigma^2}{B'}
    \end{aligned}
\end{equation*}
and
\begin{equation*}
    \begin{aligned}
        &\E_t\left[\sum_{x_i\in \B}2\langle(1-\beta)(\s_i^t -g_i(\z_t)) , \beta (g_i(\z_t,\B)-g_i(\z_{t})) \rangle\right]\\
        &=\frac{1}{|\overline{\B}|}\sum_{\B\in \overline{\B}}\sum_{x_i\in \B}2\langle(1-\beta)(\s_i^t -g_i(\z_t)) , \beta (g_i(\z_t,\B)-g_i(\z_{t})) \rangle\\
        &=\frac{1}{|\overline{\B}|}\sum_{x_i \in \D}|\overline{\B}_i|\frac{1}{|\overline{\B}_i|}\sum_{\B\in\overline{\B}_i}2\langle(1-\beta)(\s_i^t -g_i(\z_t)) , \beta (g_i(\z_t,\B)-g_i(\z_{t})) \rangle= 0
    \end{aligned}
\end{equation*}
where $\overline{\B}$ denotes the set of all possible batch $\B\subset \D$ of size $B$, and $\overline{\B}_i$ denotes $\{\B:x_i\in \B, \B\in \overline{\B}\}$.

Furthermore, we use Young's inequality and derive the following relationship:
\begin{equation*}
\begin{aligned}
\E_t[||\s^{t+1}-g(\z_{t+1})||^2] &\leq \left(1+\frac{B\beta}{2n}\right)\E_t[||\s^{t+1}-g(\z_t)||^2] +  \left(1+\frac{2n}{B\beta}\right)\E_t[||g(\z_t)-g(\z_{t+1})||^2] \\
&\stackrel{(a)}{\leq} \left(1-\frac{B\beta}{2n}\right)||\s^t -g(\z_t) ||^2  + \frac{2B\beta^2\sigma^2}{B'} + \frac{4n^2C_g^2}{B\beta}\E_t[||\z_t-\z_{t+1}||^2],
\end{aligned}
\end{equation*}
where (a) is due to $\frac{B\beta}{2n}\leq 1$.

Taking expectation over all randomness and taking summation over all $\x_i\in\S$ and $t=1,2,\ldots,T$, we obtain
\begin{equation*}
\begin{aligned}
    \sum_{t=1}^T \E[||\s^t-g(\z_t)||^2]\leq \frac{2n}{B\beta}\Delta_{\s} + \frac{8n^3 C_g^2}{B^2\beta^2}\sum_{t=1}^{T}\E[||\z_t-\z_{t-1}||^2] + \frac{4n\beta\sigma^2 T}{B'}.
\end{aligned}
\end{equation*}
where $\Delta_{\s}$ denotes $||\s^0-g(\z_0)||^2$.

\end{proof}

Now we present the proof for the convergence guarantee of Algorithm~\ref{algo:sogclr_dro}.

\begin{proof}
Now we present the proof for Theorem~\ref{thm:short_stat}. First of all, we establish the following relationship using~(\ref{ineq:lemma_F_smooth_a})
\begin{equation}
\sum_{t=1}^T\E[||\z_t-\z_{t-1}||^2]\leq 8\eta\Delta + 8\eta^2\sum_{t=1}^T\E[||\d_{t+1}-\nabla F(\z_t)||^2],
\label{ineq:main_proof_1}
\end{equation}
where we use $\eta L_F\leq\frac{1}{4}$.

On the other hand, we combine Lemma~(\ref{lemma:gradient_error}) and Lemma~(\ref{lemma:s_tracking_error}), and obtain
\begin{equation*}
\begin{aligned}
&\sum_{t=1}^T\E[||\d_{t+1}-\nabla F(\z_t)||^2] =\sum_{t=1}^T\E[||\v_{t+1}-\nabla_{\w}F(\z_t)||^2] + \sum_{t=1}^T\E[||\u^{t+1}-\nabla_{\bftau}F(\z_t)||^2] \\
&\leq \frac{1}{\beta_1}\Delta_{\v} + \frac{2n}{B\beta}\Delta_{\u} + \left(\frac{4L_F^2}{\beta_1^2}+ \frac{72nL_F^2}{B^2\beta^2}\right)\sum_{t=1}^T\E[||\z_t-\z_{t-1}||^2] \\
&+ \frac{C_1}{n}\bigg[\frac{2n}{B\beta}\Delta_{\s} + \frac{8n^3 C_g^2}{B^2\beta^2}\sum_{t=1}^{T}\E[||\z_t-\z_{t-1}||^2] + \frac{4n\beta\sigma^2 T}{B}\bigg]+ \frac{C_2\beta_1}{\min\{B,B'\}} T + \frac{C_3\beta}{nB'} T\\
&\leq \frac{1}{\beta_1}\Delta_{\v} + \frac{2n}{B\beta}\Delta_{\u}+ \frac{2C_1}{B\beta}\Delta_{\s}+\left(\frac{4L_F^2}{\beta_1^2}+ \frac{72nL_F^2}{B^2\beta^2}+\frac{8C_1n^2 C_g^2}{B^2\beta^2}\right)\sum_{t=1}^T\E[||\z_t-\z_{t-1}||^2]\\
&\quad + \frac{4C_1\beta\sigma^2 T}{B'}+ \frac{C_2\beta_1}{\min\{B,B'\}} T + \frac{C_3\beta}{nB'} T\\
&\leq \frac{1}{\beta_1}\Delta_{\v} + \frac{2n}{B\beta}\Delta_{\u}+ \frac{2C_1}{B\beta}\Delta_{\s}+ \frac{4C_1\beta\sigma^2 T}{B'}+ \frac{C_2\beta_1}{\min\{B,B'\}} T + \frac{C_3\beta}{nB'} T\\
&\quad +\left(\frac{4L_F^2}{\beta_1^2}+ \frac{72nL_F^2}{B^2\beta^2}+\frac{8C_1n^2 C_g^2}{B^2\beta^2}\right)\bigg[8\eta\Delta + 8\eta^2\sum_{t=1}^T\E[||\d_{t+1}-\nabla F(\z_t)||^2]\bigg]
\end{aligned}
\end{equation*}


By setting $\eta^2\leq \min\left\{\frac{\beta_1^2}{192L_F^2}, \frac{B^2\beta^2}{3456nL_F^2}, \frac{B^2\beta^2}{384C_1n^2 C_g^2}\right\}=O(\min\{\beta_1^2,\frac{B^2\beta^2}{n^2}\})$, we have
\begin{equation*}
    \begin{aligned}
        \sum_{t=1}^T\E[||\d_{t+1}-\nabla F(\z_t)||^2] 
        &\leq \frac{1}{\beta_1}\Delta_{\v} + \frac{2n}{B\beta}\Delta_{\u}+ \frac{2C_1}{B\beta}\Delta_{\s}+ \frac{4C_1\beta\sigma^2 T}{B'}+ \frac{C_2\beta_1}{\min\{B,B'\}} T + \frac{C_3\beta}{nB'} T\\
        &\quad +8\eta\Delta\frac{1}{16\eta^2} + \frac{1}{2}\sum_{t=1}^T\E[||\d_{t+1}-\nabla F(\z_t)||^2],
    \end{aligned}
\end{equation*}
which follows
\begin{equation}\label{ineq:main_proof_3}
    \begin{aligned}
        \sum_{t=1}^T\E[||\d_{t+1}-\nabla F(\z_t)||^2] 
        &\leq \frac{2}{\beta_1}\Delta_{\v} + \frac{4n}{B\beta}\Delta_{\u}+ \frac{4C_1}{B\beta}\Delta_{\s}+\frac{\Delta}{\eta}+ T\left(\frac{8C_1\beta\sigma^2 }{B'}+ \frac{2C_2\beta_1}{\min\{B,B'\}}  + \frac{2C_3\beta}{nB'} \right)
    \end{aligned}
\end{equation}

Combining~(\ref{ineq:main_proof_3}) and Lemma~(\ref{lemma:F_smooth}), using assumption $\eta L_F\leq\frac{1}{4}$ we obtain
\begin{equation*}
\begin{aligned}
&\E[\text{dist}(0,\hat{\partial}\bar{F}(\z_R))^2]\\
&\leq  \frac{2+ 40\eta L_F}{T} \sum_{t=1}^T\E[||\d_{t+1}-\nabla F(\z_t)||^2] + \frac{2\Delta}{T\eta} + \frac{40L_F\Delta}{T}\\
&\leq  12\bigg[\frac{1}{T}\left(\frac{2}{\beta_1}\Delta_{\v} + \frac{4n}{B\beta}\Delta_{\u}+ \frac{4C_1}{B\beta}\Delta_{\s}+\frac{3\Delta}{\eta} + 40L_F\Delta\right)+ \frac{8C_1\beta\sigma^2}{B'}+ \frac{2C_2\beta_1}{\min\{B,B'\}} + \frac{2C_3\beta}{nB'} \bigg]\\
\end{aligned}
\end{equation*}

By setting $\beta\leq \min\{\frac{B'\epsilon^2}{288C_1\sigma^2},\frac{nB'\epsilon^2}{72C_3}\}= O(B'\epsilon^2)$, $\beta_1\leq \frac{\min\{B,B'\}\epsilon^2}{72C_2}= O(B'\epsilon^2)$, we have
\begin{equation*}
    \begin{aligned}
        \E[\text{dist}(0,\hat{\partial}\bar{F}(\z_R))^2]\leq 12\bigg[\frac{1}{T}\left(\frac{2}{\beta_1}\Delta_{\v} + \frac{4n}{B\beta}\Delta_{\u}+ \frac{4C_1}{B\beta}\Delta_{\s}+\frac{3\Delta}{\eta} + 40L_F\Delta\right)\bigg] +\frac{2\epsilon^2}{3}.
    \end{aligned}
\end{equation*}
It implies that with 
\begin{equation*}
\begin{aligned}
    T&=\max\left\{\frac{360\Delta_{\v}}{\beta_1 \epsilon^2}, \frac{720n\Delta_{\u}}{B\beta \epsilon^2}, \frac{720C_1\Delta_{\s}}{B\beta \epsilon^2}+\frac{540\Delta}{\eta \epsilon^2} + \frac{7200L_F\Delta}{\epsilon^2}\right\}\\
    &= O\left(\max\left\{\frac{1}{\beta_1 \epsilon^2}, \frac{n}{B\beta \epsilon^2}\right\}\right)\\
    &=O\left( \frac{n}{BB' \epsilon^4 }\right)
\end{aligned}
\end{equation*}
we have $\E[\text{dist}(0,\hat{\partial}\bar{F}(\z_R))^2]\leq \epsilon^2$, which completes the proof.


\end{proof}


\end{document}